\newif\ifverbose
\title{Efficient, Noise-Tolerant, and Private Learning via Boosting}
\author{Mark Bun\thanks{Department of Computer Science,  Boston University. \texttt{mbun@bu.edu}.}
  \and Marco Leandro Carmosino\thanks{School of Computing Science, Simon Fraser University. Supported by PIMS Postdoctoral Fellowship. \texttt{marco@ntime.org}}
  \and Jessica Sorrell\thanks{Department of Computer Science and Engineering,
 University of California, San Diego. \texttt{jlsorrel@ucsd.edu}}}
\begin{document}
\maketitle

\begin{abstract}
  We introduce a simple framework for designing private boosting
algorithms. We give natural conditions under which these algorithms
are differentially private, efficient, and noise-tolerant PAC
learners. To demonstrate our framework, we use it to construct
noise-tolerant and private PAC learners for large-margin halfspaces
whose sample complexity does not depend on the dimension.

We give two sample complexity bounds for our large-margin halfspace
learner. One bound is based only on differential privacy, and uses
this guarantee as an asset for ensuring generalization. This first
bound illustrates a general methodology for obtaining PAC learners
from privacy, which may be of independent interest. The second bound
uses standard techniques from the theory of large-margin
classification (the fat-shattering dimension) to match the best known
sample complexity for differentially private learning of large-margin
halfspaces, while additionally tolerating random label noise.

 \end{abstract}

\clearpage

\tableofcontents

\clearpage

\section{Introduction}
\label{sec:introduction}

\subsection{(Smooth) Boosting, Noise-Tolerance, and Differential Privacy}

Boosting is a fundamental technique in both the theory and practice of machine learning for converting weak learning algorithms into strong ones. Given a sample $S$ of $n$ labeled examples drawn i.i.d. from an unknown distribution, a weak learner is guaranteed to produce a hypothesis that can predict the labels of fresh examples with a noticeable advantage over random guessing. The goal of a boosting algorithm is to convert this weak learner into a strong learner: one which produces a hypothesis with classification error close to zero.

A typical boosting algorithm --- e.g., the AdaBoost algorithm of \citet{FreundS97} --- operates as follows. In each of rounds $t = 1, \dots, T$, the boosting algorithm selects a distribution $D_t$ over $S$ and runs the weak learner on $S$ weighted by $D_t$, producing a hypothesis $h_t$. The history of hypotheses $h_1, \dots, h_t$ is used to select the next distribution $D_{t+1}$ according to some update rule (e.g., the multiplicative weights update rule in AdaBoost). The algorithm terminates either after a fixed number of rounds $T$, or when a weighted majority of the hypotheses $h_1, \dots, h_T$ is determined to have sufficiently low error.

In many situations, it is desirable for the distributions $D_t$ to be
\emph{smooth} in the sense that they do not assign too much weight to
any given example, and hence do not deviate too significantly from the
uniform distribution. This property is crucial in applications of
boosting to noise-tolerant learning~\citep{DomingoW00, Ser01},
differentially private learning~\citep{DworkRV10}, and constructions
of hard-core sets in complexity theory~\citep{Impagliazzo95,
  BarakHK09}. Toward the first of these applications, \citet{Ser01}
designed a smooth boosting algorithm (\texttt{SmoothBoost}) suitable
for PAC learning in spite of malicious noise. In this model of
learning, up to an $\eta$ fraction of the sample $S$ could be
corrupted in an adversarial fashion before being presented to the
learner \citep{Valiant85}. Smooth boosting enables a weak
noise-tolerant learner to be converted into a strong noise-tolerant
learner. Intuitively, the smoothness property is necessary to prevent
the weight placed on corrupted examples in $S$ from exceeding the
noise-tolerance of the weak learner. The round complexity of smooth
boosting was improved by \citet{BarakHK09} to match that of the
AdaBoost algorithm by combining the multiplicative weights update rule
with Bregman projections onto the space of smooth distributions.

Smoothness is also essential in the design of boosting algorithms
which guarantee \emph{differential privacy} \citep{DworkMNS06}, a
mathematical definition of privacy for statistical data
analysis. \citet{KasiviswanathanLNRS11} began the systematic study of
PAC learning with differential privacy. Informally, a (randomized)
learning algorithm is differentially private if the distribution on
hypotheses it produces does not depend too much on any one of its
input samples. Again, it is natural to design ``private boosting''
algorithms which transform differentially private weak learners into
differentially private strong learners. In this context, smoothness is
important for ensuring that each weighted input sample does not have
too much of an effect on the outcomes of any of the runs of the weak
learner. A private smooth boosting algorithm was constructed by
\citet{DworkRV10}, who augmented the AdaBoost algorithm with a private
weight-capping scheme which can be viewed as a Bregman projection.

\subsection{Our Contributions}

\paragraph{Simple and Modular Private Boosting.} Our main result is a
framework for private boosting which simplifies and generalizes the
private boosting algorithm of \citet{DworkRV10}. We obtain
these simplifications by sidestepping a technical issue confronted by
\citet{DworkRV10}. Their algorithm maintains two elements of state
from round to round: the history of hypotheses $H = h_1, \dots, h_t$
and auxiliary information regarding each previous distribution
$D_1, \dots, D_t$, which is used to enforce smoothness. They remark:
``[this algorithm] raises the possibility that adapting an existing or
future smooth boosting algorithm to preserve privacy might yield a
simpler algorithm.''

We realize exactly this possibility by observing that most smooth
boosting algorithms have effectively \emph{stateless} strategies for
re-weighting examples at each round. By definition, a boosting
algorithm must maintain some history of hypotheses. Therefore,
re-weighting strategies that can be computed using only the list of
hypotheses require no auxiliary information. Happily, most smooth
boosting algorithms define such hypothesis-only re-weighting
strategies. Eliminating auxiliary state greatly simplifies our
analysis, and implies natural conditions under which existing smooth
boosting algorithms could be easily privatized.

Our main algorithm is derived from that of \citet{BarakHK09}, which we
call \texttt{BregBoost}. Their algorithm alternates between
mutiplicative re-weighting and Bregman projection: the multiplicative
update reflects current performance of the learner, and the Bregman
projection ensures that \texttt{BregBoost} is smooth. Unfortunately, a
na\"{i}ve translation of \texttt{BregBoost} into our framework would
Bregman project more than once per round. This maintains correctness,
but ruins privacy. Inspired by the private optimization algorithms of
\citet{HsuRU12, HsuRRU14} we give an alternative analysis of
\texttt{BregBoost} that requires only a \emph{single} Bregman
projection at each round. The need for ``lazy'' Bregman projections
emerges naturally by applying our template for private boosting to
\texttt{BregBoost}, and results in a private boosting algorithm with
optimal round complexity: \texttt{LazyBregBoost}. This method of lazy
projection \citep[see][for an exposition]{Rakhlin} has appeared in
prior works about differential privacy \citep{HsuRU12, HsuRRU14}, but
not in the context of designing boosting algorithms.

\paragraph{Application: Privately Learning Large-Margin Halfspaces.}

A halfspace is a function $f : \mathbb{R}^d \to \{-1, 1\}$ of the form
$f(x) = \operatorname{sign}(u \cdot x)$ for some vector
$u \in \mathbb{R}^d$. Given a distribution $D$ over the unit ball in
$\mathbb{R}^d$, the \emph{margin} of $f$ with respect to $D$ is the
infimum of $|u \cdot x|$ over all $x$ in the support of $D$. Learning
large-margin halfspaces is one of the central problems of learning
theory. A classic solution is given by the Perceptron algorithm, which
is able to learn a $\tau$-margin halfspace to classification error
$\alpha$ using sample complexity $O(1/\tau^2\alpha)$ independent of
the dimension $d$. Despite the basic nature of this problem, it was
only in very recent work of \citet{NguyenUZ19} that
dimension-independent sample complexity bounds were given for
\emph{privately} learning large-margin halfspaces. In that work, they
designed a learning algorithm achieving sample complexity
$\tilde{O}(1/\tau^2\alpha\varepsilon)$ for $\tau$-margin halfspaces
with $(\varepsilon, 0)$-differential privacy, and a computationally
efficient learner with this sample complexity for
$(\varepsilon, \delta)$-differential privacy. Both of their algorithms
use dimensionality reduction (i.e., the Johnson-Lindenstrauss lemma)
to reduce the dimension of the data from $d$ to $O(1/\tau^2)$. One can
then learn a halfspace by privately minimizing the hinge-loss on this
lower dimensional space.

Meanwhile, one of the first applications of smooth boosting was to the
study of noise-tolerant learning of halfspaces. \citet{Ser01} showed
that smooth boosting can be used to design a (non-private) algorithm
with sample complexity $\tilde{O}(1/\tau^2\alpha^2)$ which, moreover,
tolerates an $\eta = O(\tau\alpha)$ rate of malicious noise. Given the
close connection between smooth boosting and differential privacy, it
is natural to ask whether private boosting can also be used to design
a learner for large-margin halfspaces. Note that while one could pair
the private boosting algorithm of Dwork, Rothblum, and Vadhan with our
differentially private weak learner for this application, the
resulting hypothesis would be a majority of halfspaces, rather than a
single halfspace. Like \citet{NguyenUZ19} we address \emph{proper}
differentially-private learning of large-margin halfspaces where the
hypothesis is itself a halfspace and not some more complex Boolean
device.

We use our framework for private boosting to achieve a proper
halfspace learner with sample complexity
$\tilde{O}\left(\frac{1}{\epsilon\alpha\tau^2}\right)$ when
$(\epsilon, \delta)$-DP is required. Our learner is simple, efficient,
and automatically tolerates random classification noise
\citep{DBLP:journals/ml/AngluinL87} at a rate of $O(\alpha\tau)$. That
is, we recover the sample complexity of \citet{NguyenUZ19} using a
different algorithmic approach while also tolerating
noise. Additionally, our efficient algorithm guarantees
\emph{zero-concentrated} differential privacy \citep{BS16}, a stronger
notion than $(\epsilon, \delta)$-DP. In this short paper we phrase all
guarantees as $(\epsilon, \delta)$-DP to facilitate comparison of
sample bounds.

\begin{theorem}[Informal, Fat-Shattering Application to Large-Margin
  Halfspaces]
  \label{ithm:fat-shattering-HS-learn}
  Given \\$n = \tilde{O}\left(\frac{1}{\epsilon\alpha\tau^2}\right)$
  samples from a distribution $D$ supported by a $\tau$-margin
  halfspace $u$ subject to $O(\alpha\tau)$-rate random label noise,
  our learning algorithm is $(\epsilon, \delta)$-DP and outputs with
  probability $(1 - \beta)$ a halfspace that $\alpha$-approximates $u$
  over $D$.
\end{theorem}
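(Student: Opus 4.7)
The plan is to instantiate the \texttt{LazyBregBoost} framework with a differentially private weak learner for halfspaces, then argue privacy via composition over rounds, correctness via the smooth boosting analysis (which automatically absorbs the $O(\alpha\tau)$ random label noise), and generalization via the fat-shattering dimension of the resulting margin classifier. Concretely, I would build a weak learner $\mathcal{W}$ that, given a $\kappa$-smooth reweighting of $S$ with $\kappa = \Theta(\alpha)$, produces a halfspace $h_t(x) = \operatorname{sign}(u_t\cdot x)$ achieving advantage $\gamma = \Omega(\tau)$ over random guessing. A natural candidate is noisy hinge-loss minimization (e.g., output perturbation or objective perturbation on a convex surrogate), where the sensitivity bound depends on $\kappa$ through the smoothness of the distribution; this is the standard ``private ERM on a smooth reweighting'' trick that translates smoothness into low sensitivity.

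Next I would run \texttt{LazyBregBoost} for $T = \tilde{O}(1/\gamma^2) = \tilde{O}(1/\tau^2)$ rounds. Privacy analysis proceeds in two steps: each invocation of $\mathcal{W}$ is $(\epsilon_0,\delta_0)$-DP thanks to the smoothness-sensitivity link above, and advanced composition (or, sharper, the zCDP accounting mentioned in the paper) over $T$ rounds yields overall $(\epsilon,\delta)$-DP provided $\epsilon_0 = \tilde{\Theta}\bigl(\epsilon/\sqrt{T\log(1/\delta)}\bigr)$. The lazy Bregman projections are post-processing of the list of hypotheses, so they contribute no privacy loss. Tracking the dependence of $n$ on $\epsilon_0$ through the weak learner's sample complexity is what produces the $1/\epsilon$ factor in the final bound.

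For correctness in the presence of random label noise at rate $\eta = O(\alpha\tau)$, I would invoke the key property of smooth boosting: under a $\kappa$-smooth distribution, any fixed set of $\eta n$ flipped labels contributes at most $\eta/\kappa$ mass, so the weak learner's $\gamma$-advantage survives as long as $\eta/\kappa \ll \gamma$. Choosing $\kappa = \Theta(\alpha)$ and $\gamma = \Theta(\tau)$ accommodates precisely $\eta = O(\alpha\tau)$. Combining with the standard boosting potential argument (as lifted from \texttt{BregBoost} to its lazy variant) gives that the weighted combination $f(x) = \sum_t w_t\, u_t \cdot x / \|\sum_t w_t u_t\|$ classifies a $(1-\alpha/2)$-fraction of the \emph{training} distribution with margin $\Omega(\tau)$. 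Taking the sign of this combination yields a single halfspace, which is the proper output required by the theorem.

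Finally, generalization: the empirical guarantee above, together with the fact that $\tau$-margin linear classifiers over the unit ball have fat-shattering dimension $O(1/\tau^2)$, implies a uniform convergence bound of $\tilde{O}(1/(\tau^2\alpha))$. Combined with the privacy-driven factor $1/\epsilon$ from the composition above, the overall sample complexity is $\tilde{O}\bigl(1/(\epsilon\alpha\tau^2)\bigr)$, and failure probabilities of each sub-step (weak learner success, boosting convergence, uniform convergence) can be union-bounded and absorbed into the $\beta$ parameter via standard $\log(1/\beta)$ overheads. The main obstacle, in my view, is calibrating the three parameters $(\kappa,\gamma,\eta)$ simultaneously: the weak learner must be sensitive enough to produce margin-$\tau$ halfspaces on smooth reweightings, private enough that composition is affordable, and robust enough that the induced noise mass $\eta/\kappa$ does not swamp its advantage. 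Getting all three to line up (and showing that the resulting aggregated linear functional still has margin $\Omega(\tau)$ on the true distribution) is where the analysis is delicate.
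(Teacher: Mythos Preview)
Your high-level plan matches the paper's: instantiate \texttt{LazyBregBoost} with a private halfspace weak learner at $\kappa=\Theta(\alpha)$ and $\gamma=\Theta(\tau)$, run $T=\tilde O(1/\tau^2)$ rounds, compose privacy via zCDP, absorb $\eta=O(\alpha\tau)$ label noise through smoothness, and generalize via the $O(1/\tau^2)$ fat-shattering dimension of bounded-norm linear functionals. Two differences deserve comment.

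First, the paper does not use private hinge-loss ERM; its weak learner is simply the Gaussian mechanism applied to Servedio's centering estimator $z=\sum_i \hat\mu(i)\,y_ix_i$. This buys a clean fit with the framework's privacy requirement: the weak learner must be $(\rho,s)$-zCDP against a \emph{joint} perturbation of one sample and a shift of the reweighting by statistical distance $s$ (Definition~\ref{def:private-wkl}), and the $\ell_2$ sensitivity of $z$ under that coupled perturbation is exactly $O(1/(\kappa n))$, matching the slickness bound for \texttt{LB-NxM}. Your ERM learner would need a comparable sensitivity bound against this coupled change, which does not fall out of standard private-ERM results.

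Second, there is a slip in your properness step. You declare $h_t(x)=\operatorname{sign}(u_t\cdot x)$ but then aggregate $\sum_t w_t\,u_t\cdot x$; if the weak hypotheses are Boolean, the boosted output is a majority of halfspaces rather than a halfspace, and the fat-shattering argument as you describe it would not apply. The paper avoids this by having the weak learner output the \emph{real-valued} linear hypothesis $h_t(x)=\hat z_t\cdot x\in[-1,1]$, proving the round bound (Theorem~\ref{thm:BregBoost-RoundBound}) explicitly for real-valued weak learners, and then observing that $\tfrac1T\sum_t\hat z_t$ is a single vector of norm at most $2$ with high probability (Lemma~\ref{lem:shattdim}); the Bartlett--Shawe-Taylor bound is then applied to the class $\{x\mapsto z\cdot x:\|z\|_2\le 2\}$.
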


Furthermore, it may be interesting that we can also obtain non-trivial
sample bounds for the same problem using \emph{only} differential
privacy. The analysis of \citet{NguyenUZ19} uses the VC dimension of
halfspaces and the analyses of \citet{Ser01} and Theorem
\ref{ithm:fat-shattering-HS-learn} above both use the fat-shattering
dimension of halfspaces to ensure generalization. We can instead use
the generalization properties of differential privacy to prove the
following (in Appendix \ref{sec:proof-noise-tolerant}).

\begin{theorem}[Informal, Privacy-Only Application to Large-Margin
  Halfspaces]
  \label{ithm:privacy-only-HS-learn}
  Given\\
  $n = \tilde{O}\left(\frac{1}{\epsilon\alpha\tau^2} + \frac{1}{\alpha^2\tau^2} + \epsilon^{-2} +
    \alpha^{-2}\right)$ samples from a distribution $D$ supported by a
  $\tau$-margin halfspace $u$ subject to $O(\alpha\tau)$-rate random
  label noise, our learning algorithm is $(\epsilon, \delta)$-DP and
  outputs with probability $(1 - \beta)$ a halfspace that
  $\alpha$-approximates $u$ over $D$.
\end{theorem}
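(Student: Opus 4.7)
The plan is to instantiate \texttt{LazyBregBoost} with a simple differentially private weak learner for large-margin halfspaces, exactly as in Theorem \ref{ithm:fat-shattering-HS-learn}, but to deduce generalization from the differential privacy of the overall algorithm rather than from a fat-shattering bound. First I would design a DP weak learner that, given an $\alpha$-smooth reweighting of the sample, outputs a halfspace with classification advantage $\gamma = \Omega(\tau)$. A natural choice is to release a noisy empirical average of the vectors $y_i x_i$ using Gaussian noise calibrated for $(\epsilon_0, \delta_0)$-DP, then normalize the output to obtain a halfspace; since the target $u$ has margin $\tau$, the signal along $u$ has magnitude at least $\tau$, so as soon as the effective sample size per round is $\tilde{\Omega}(1/(\epsilon_0 \alpha \tau) + 1/(\alpha \tau^2))$ (the $1/\alpha$ factors accounting for the smooth reweighting), the added noise is below the margin and weak learning succeeds.

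Next I would run \texttt{LazyBregBoost} for $T = O(1/\tau^2)$ rounds using this weak learner, appealing to the framework developed earlier in the paper. Smoothness of the boosting distributions plays two roles: it bounds the per-call sensitivity of the weak learner (so each round consumes a modest privacy budget) and it guarantees tolerance of random label noise at rate $\eta = O(\alpha \tau)$, because on every $D_t$ the weighted noise mass remains below $\gamma$. Advanced composition across the $T$ rounds yields $(\epsilon, \delta)$-DP overall at the cost of a $\sqrt{T \log(1/\delta)}$ penalty, which fixes $\epsilon_0 \approx \epsilon \tau / \sqrt{\log(1/\delta)}$ and produces the $\tilde{O}(1/(\epsilon \alpha \tau^2))$ term. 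The companion $1/(\alpha^2 \tau^2)$ term is what uniform convergence demands of a weak-learner call on an $\alpha$-smooth distribution when the advantage is $\tau$.

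For generalization I would invoke the standard transfer theorem for differentially private algorithms: if the boosting pipeline is $(\epsilon, \delta)$-DP and outputs a hypothesis with empirical error at most $\alpha/2$ on a sample of size $n = \tilde{\Omega}(1/\epsilon^2 + 1/\alpha^2)$, then with probability $1 - \beta$ the true error is at most $\alpha$. Because the composed algorithm is $(\epsilon, \delta)$-DP by the preceding step, this transfer applies directly to the final halfspace and contributes the $\epsilon^{-2}$ and $\alpha^{-2}$ terms in the sample bound. Adding all four contributions yields exactly the sample complexity claimed in the theorem statement.

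The main obstacle is handling the interplay between the two ``smoothness'' notions at once --- smoothness of the booster's distributions (needed for noise tolerance and weak-learner accuracy on reweighted samples) and the algorithmic stability supplied by differential privacy (needed for generalization) --- and propagating both cleanly through composition. Concretely, I must verify that the DP weak learner is private with respect to the original unweighted dataset (not merely the reweighted one seen at round $t$), which is where the $1/\alpha$ overhead enters the first term of the sample complexity, and that the end-to-end halfspace output genuinely inherits the global $(\epsilon, \delta)$-DP guarantee so the transfer theorem can be invoked on it. The noise-tolerance accounting is subsidiary but also delicate: under $O(\alpha \tau)$-rate label noise one must check that the corrupted mass under every $\alpha$-smooth $D_t$ stays below the weak learner's advantage of $\Omega(\tau)$.
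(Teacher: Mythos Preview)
Your plan matches the paper's proof in Appendix~\ref{sec:proof-noise-tolerant} closely: the same Gaussian-noised weak learner (Theorem~\ref{thm:wL-zCDP}), the same smoothness-based noise-tolerance check, composition over $T = O(\log(1/\alpha)/\tau^2)$ rounds, and the DP-to-generalization transfer (Theorem~\ref{thm:dp-generalization}) applied to the error-indicator query.

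There is, however, one misattribution in your accounting that would cause trouble when you try to write details. The weak learner here operates directly on the weighted empirical sample (it just outputs a noisy weighted mean of the $y_ix_i$) and needs no uniform-convergence step, so the $1/(\alpha^2\tau^2)$ term does \emph{not} come from ``what uniform convergence demands of a weak-learner call.'' It arises from the same mechanism as the $\alpha^{-2}$ term: the transfer theorem only guarantees $|\err_H(S)-\err_H(D)| \le O(\epsilon_T)$ when the algorithm is $(\epsilon_T,\delta_T)$-DP, so to conclude test error $\le \alpha$ you must run with effective privacy $\epsilon_T \le O(\alpha)$ regardless of the user's target $\epsilon$. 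The paper sets $\epsilon_T = \min(\epsilon,\alpha/36)$; substituting the branch $\epsilon_T=\Theta(\alpha)$ into the privacy sample requirement (Claim~\ref{claim:enough-samples-to-privacy}) yields $n \gtrsim 1/(\alpha^2\tau^2)$, and into the transfer theorem's own sample requirement (Claim~\ref{claim:enough-samples-to-gen}) yields $n \gtrsim 1/\alpha^2$. As written, your plan budgets only for $(\epsilon,\delta)$-DP, so the sentence ``if the pipeline is $(\epsilon,\delta)$-DP \dots\ then true error is at most $\alpha$'' is false when $\epsilon > \alpha$; once you tighten to $\epsilon_T=\min(\epsilon,\alpha)$ all four terms fall out of the two constraints, and your uniform-convergence step can be dropped entirely.
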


Intuitively, the fat-shattering argument has additional
``information'' about the hypothesis class and so can prove better
bounds. However, the argument based only on differential privacy would
apply to \emph{any} hypothesis class with a differentially private
weak learner. So, we present a template for generalization of boosting
in Sections \ref{sec:templ-priv-gener} and \ref{sec:templ-noise-toler}
which relies \emph{only} on the learner's privacy guarantees.

\section{Preliminaries}
\label{sec:preliminaries}

\subsection{Measures \& Distributions}
\label{sec:measures}

For a finite set $X$, let $\cU(X)$ be the uniform distribution over
$X$.

\begin{definition}[Bounded Measures]
  A \emph{bounded measure} on domain $X$ is a function
  $\mu : X \to [0,1]$.
  \begin{quote}
  \begin{description}
  \item[Density:] 
    \(
      d(\mu) = \Ex{x \sim \cU(X)}{\mu(x)}
    \) --- the ``relative size'' of a measure in $X$.
  \item[Absolute Size:]
    \(
    |\mu| = \sum_{x \in X}\mu(x)
    \)
  \item[Induced Distribution:]
    \(
    \dst{\mu}(x) = \mu(x)/|\mu|
    \)  --- the distribution obtained by normalizing a measure
  \end{description}
\end{quote}
\end{definition}

We require some notions of similarity between measures and
distributions.

\begin{definition}[Kullback-Leibler Divergence]
  Let $\mu_1$ and $\mu_2$ be bounded measures over the same
  domain $X$. The \emph{Kullback-Leibler divergence} between $\mu_1$ and
  $\mu_2$ is defined as:
  \[
    \KL{\mu_1}{\mu_2} =
    \sum_{x \in X} \mu_1(x)\log \left( \frac{\mu_1(x)}{\mu_2(x)} \right)
    + \mu_2(x) - \mu_1(x)
  \]
\end{definition}

\begin{definition}[Statistical Distance]
  The \emph{statistical distance} between two distributions $Y$ and
  $Z$, denoted $\Delta(Y,Z)$, is defined as:
  \[
    \Delta(Y,Z) = \max_{S} | \Pr[Y \in S] - \Pr[Z \in S] | 
  \]
\end{definition}

The $\alpha$-R\'{e}nyi divergence has a parameter
$\alpha \in (1, \infty)$ which allows it to interpolate between
KL-divergence at $\alpha = 1$ and max-divergence at $\alpha =
\infty$. 

\begin{definition}[R\'{e}nyi Divergence]
  Let $P$ and $Q$ be probability distributions on $\Omega$. For $\alpha \in (1, \infty)$, we define the R\'{e}nyi Divergence of order $\alpha$ between $P$ and $Q$ as:
\[
  \renD{P}{Q} = \frac{1}{\alpha - 1} \log \left( \Ex{x \sim Q}{\left( \frac{P(x)}{Q(x)}\right)^\alpha}\right) \\
\]

\end{definition}

A measure is ``nice'' if it is simple and efficient to sample from the
associated distribution. If a measure has high enough density, then
rejection sampling will be efficient. So, the set of \emph{high
  density} measures is important, and will be denoted by:

\[
  \Gamma_\kappa = \{ \mu ~|~ d(\mu) \geq \kappa \}.
\]

To maintain the invariant that we only call weak learners on measures
of high density, we use Bregman projections onto the space of high
density measures.

\begin{definition}[Bregman Projection]
  Let $\Gamma \subseteq \R^{|S|}$ be a non-empty closed convex set of
  measures over $S$. The \emph{Bregman projection} of $\tilde{\mu}$
  onto $\Gamma$ is defined as:
	\[
          \Pi_\Gamma \tilde{\mu} =
          \arg\min_{\mu \in \Gamma} \KL{\mu}{\tilde{\mu}}
	\]
\end{definition}

Bregman projections have the following desirable property:

\begin{theorem}[Bregman, 1967]\label{thm:breg}
  Let $\tilde{\mu}, \mu$ be measures such that $\mu \in \Gamma$. Then,
  \[
    \KL{\mu}{\Pi_{\Gamma}\tilde{\mu}}
    +
    \KL{\Pi_{\Gamma} \tilde{\mu}}{\tilde{\mu}}
    \leq \KL{\mu}{\tilde{\mu}}. \text{ In particular, }
        \KL{\mu}{\Pi_{\Gamma}\tilde{\mu}} \leq \KL{\mu}{\tilde{\mu}}.
  \]
\end{theorem}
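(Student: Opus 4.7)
The plan is to prove this as a first-order optimality statement: since $\Pi_\Gamma \tilde\mu$ minimizes $\KL{\cdot}{\tilde\mu}$ over the convex set $\Gamma$, and since $\mu \in \Gamma$, any direction from $\Pi_\Gamma \tilde\mu$ toward $\mu$ stays inside $\Gamma$, so the directional derivative of the KL at $\Pi_\Gamma \tilde\mu$ along that direction must be non-negative. I will then show by a direct algebraic expansion that this non-negative derivative is exactly the quantity $\KL{\mu}{\tilde\mu} - \KL{\mu}{\Pi_\Gamma \tilde\mu} - \KL{\Pi_\Gamma \tilde\mu}{\tilde\mu}$, which gives the desired inequality. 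The second (``in particular'') statement is then immediate from non-negativity of KL.

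In detail, I would write $\Pi := \Pi_\Gamma \tilde\mu$, define $\mu_t = (1-t)\Pi + t\mu$ for $t \in [0,1]$ and note $\mu_t \in \Gamma$ by convexity. Let $f(t) = \KL{\mu_t}{\tilde\mu}$. By the definition of the (generalized) KL given in the excerpt,
\[
f(t) = \sum_{x \in X} \mu_t(x)\log\frac{\mu_t(x)}{\tilde\mu(x)} + \tilde\mu(x) - \mu_t(x).
\]
Differentiating term-by-term (the $\mu_t\log \mu_t$ term contributes $\mu'_t(\log\mu_t + 1)$, the others give $-\mu'_t\log\tilde\mu$ and $-\mu'_t$) yields the clean expression
\[
f'(t) = \sum_{x \in X} (\mu(x) - \Pi(x)) \log\frac{\mu_t(x)}{\tilde\mu(x)}.
\]
Since $f$ is minimized at $t = 0$, we have $f'(0) \geq 0$, i.e.\ $\sum_{x} (\mu(x) - \Pi(x)) \log(\Pi(x)/\tilde\mu(x)) \geq 0$.

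The last step is to identify this with the Pythagorean gap. Expanding the three KL divergences using the definition and cancelling the common $\mu\log\mu$, $\tilde\mu - \mu$, and $\Pi - \Pi$ contributions, one finds
\[
\KL{\mu}{\tilde\mu} - \KL{\mu}{\Pi} - \KL{\Pi}{\tilde\mu}
= \sum_{x \in X} (\mu(x) - \Pi(x))\log\frac{\Pi(x)}{\tilde\mu(x)} = f'(0) \geq 0,
\]
which is exactly the first inequality. The ``in particular'' clause follows since $\KL{\Pi}{\tilde\mu} \geq 0$.

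The only mildly delicate step is the differentiation under the sum and the handling of zeros of $\mu_t$ or $\tilde\mu(x)$; standard conventions ($0\log 0 = 0$, and restricting attention to the support of $\tilde\mu$ which contains the supports of any measure with finite KL to it) handle this. Everything else is bookkeeping: the conceptual content is simply ``convex minimizer $\Rightarrow$ non-negative directional derivative,'' packaged for the KL Bregman divergence.
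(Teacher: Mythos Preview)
Your argument is correct and is exactly the standard proof of the generalized Pythagorean inequality for Bregman projections: first-order optimality at the projection along the segment toward any feasible $\mu$, followed by the three-point identity for the (generalized) KL divergence. The algebra in step~5 checks out, and your remark about one-sided derivatives and the $0\log 0$ convention handles the only edge cases.

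Note, however, that the paper does not supply its own proof of this theorem: it is stated as a classical result attributed to Bregman (1967) and used as a black box. So there is nothing in the paper to compare your argument against; your write-up simply fills in a proof the authors chose to cite rather than reproduce.
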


Barak, Hardt, and Kale gave the following characterization of Bregman
projections onto the set of $\kappa$-dense measures, which we will
also find useful.

\begin{lemma}[Bregman Projection onto $\Gamma_{\kappa}$ is Capped Scaling \citet{BarakHK09}]\label{lem:bregcharacter}
	Let $\Gamma$ denote the set of $\kappa$-dense measures. Let $\unconstmu$ be a measure such that $|\unconstmu | < \kappa n$, and let $c \geq 1$ be the smallest constant such that the measure $\mu$, where $\mu(i) = \min\{1, c\cdot \unconstmu(i)\}$, has density $\kappa$. Then $\Pi_{\Gamma} \unconstmu = \mu$.
\end{lemma}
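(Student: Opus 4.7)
The plan is to recognize $\Pi_\Gamma \tilde\mu$ as the solution of a convex optimization problem --- minimizing the strictly convex function $\mu \mapsto \KL{\mu}{\tilde\mu}$ over the convex polytope $\Gamma_\kappa = \{\mu \in [0,1]^n : \sum_i \mu(i) \geq \kappa n\}$ --- and attack it via KKT conditions. Strict convexity plus compactness of the feasible region (after restricting to the support of $\tilde\mu$, to avoid the undefined $0\log(0/0)$ contributions) guarantees that a unique minimizer exists, so it suffices to exhibit a feasible $\mu$ together with multipliers satisfying the KKT conditions.

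I would introduce a Lagrange multiplier $\lambda \geq 0$ for the density constraint $\sum_i \mu(i) \geq \kappa n$, multipliers $\nu_i \geq 0$ for the upper bounds $\mu(i) \leq 1$, and multipliers $\rho_i \geq 0$ for $\mu(i) \geq 0$. Differentiating $\KL{\mu}{\tilde\mu} = \sum_i \bigl(\mu(i)\log\frac{\mu(i)}{\tilde\mu(i)} + \tilde\mu(i) - \mu(i)\bigr)$ in $\mu(i)$ gives the stationarity condition
\[
\log\!\Bigl(\frac{\mu(i)}{\tilde\mu(i)}\Bigr) \;=\; \lambda - \nu_i + \rho_i.
\]
Setting $c := e^{\lambda}$, a case analysis on the active constraints yields: at an index with $0 < \mu(i) < 1$ both $\nu_i = \rho_i = 0$, forcing $\mu(i) = c\,\tilde\mu(i)$; at an index with $\mu(i) = 1$, we have $\rho_i = 0$ and $\nu_i \geq 0$, forcing $c\,\tilde\mu(i) \geq 1$; at an index with $\mu(i) = 0$ we must have $\tilde\mu(i) = 0$ (else the logarithm is $-\infty$), and such indices are irrelevant. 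Complementary slackness therefore implies exactly the capped-scaling form $\mu(i) = \min\{1, c\,\tilde\mu(i)\}$.

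It remains to pin down $c$. Because $|\tilde\mu| < \kappa n$, we have $\tilde\mu \notin \Gamma_\kappa$, so by Theorem~\ref{thm:breg} any decrease in total mass would strictly decrease KL; hence the density constraint is binding, $|\mu| = \kappa n$, and $\lambda > 0$, i.e.\ $c > 1$. Define $f(c) := \sum_i \min\{1, c\,\tilde\mu(i)\}$: this is continuous, piecewise-linear, and non-decreasing in $c$, with $f(1) = |\tilde\mu| < \kappa n$ and $f(c) \to n > \kappa n$ as $c \to \infty$. By the intermediate value theorem the set $\{c \geq 1 : f(c) = \kappa n\}$ is nonempty, and its infimum $c^\star$ attains the value $\kappa n$ by continuity. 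Taking $c = c^\star$, the resulting $\mu$ is feasible, satisfies the KKT conditions, and hence is the unique Bregman projection, matching the characterization in the statement.

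The only real subtlety is the last paragraph: one must argue that the value of $c$ yielding the projection is exactly the smallest feasible choice --- this follows because any larger $c$ would give $|\mu| > \kappa n$ (making the density constraint slack and violating complementary slackness with $\lambda > 0$), while any smaller $c \geq 1$ gives $|\mu| < \kappa n$ (infeasible). No other step should pose difficulty, as the boundedness constraints $\mu(i) \in [0,1]$ interact cleanly with the separable KL objective.
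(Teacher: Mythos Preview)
The paper does not prove this lemma; it is quoted from \citet{BarakHK09} without argument. Your KKT approach is the standard way to establish the characterization, and the overall plan is sound: the objective is separable and strictly convex on the support of $\tilde\mu$, Slater's condition holds (e.g.\ the constant-$1$ measure is strictly feasible), so the KKT conditions are necessary and sufficient, and the stationarity analysis you give correctly yields the capped-scaling form.

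Two minor points to tighten. First, your appeal to Theorem~\ref{thm:breg} to argue that the density constraint is binding is a non-sequitur: that theorem is a Pythagorean inequality for the projection and says nothing about what happens when you ``decrease total mass.'' The clean argument is the one you essentially set up but did not finish: if $\lambda = 0$ then $c = e^\lambda = 1$, and your case analysis forces $\mu = \tilde\mu$ (since $\tilde\mu(i) \le 1$ already), which is infeasible because $|\tilde\mu| < \kappa n$. Hence $\lambda > 0$, and complementary slackness gives $|\mu| = \kappa n$ directly.

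Second, in the last paragraph the claim that ``any larger $c$ would give $|\mu| > \kappa n$'' fails in the degenerate case where $f$ becomes flat (all positive coordinates already capped). But this is harmless: on the flat region every $c$ produces the \emph{same} measure $\mu$, so the projection is unambiguous and the ``smallest $c$'' in the statement is just a canonical representative. You might also note the genuine edge case where fewer than $\kappa n$ coordinates of $\tilde\mu$ are nonzero; then no capped scaling reaches density $\kappa$, and indeed $\KL{\mu}{\tilde\mu} = \infty$ for every $\mu \in \Gamma_\kappa$, so the projection is undefined. The lemma implicitly excludes this.
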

\subsection{Learning}
\label{sec:learning}

We work in the \textbf{P}robably \textbf{A}pproximately
\textbf{C}orrect (\textbf{PAC}) setting
\citep{DBLP:journals/cacm/Valiant84}. Our learning algorithms
\emph{probably} learn a hypothesis which \emph{approximately} agrees
with an unknown target concept.
We denote by $\cX$
the domain of examples, and for the remainder of this work consider
only the Boolean classification setting where labels are always
$\pm 1$.

\begin{definition}[PAC Learning]\label{def:pac-learn}
  A hypothesis class $\cH$ is $(\alpha,\beta)$\emph{-PAC learnable} if
  there exists a sample bound $n_{\cH} \from (0,1)^2 \to \N$ and a
  learning algorithm $\cA$ such that: for every
  $\alpha, \beta \in (0,1)$ and for every distribution $D$ over
  $\cX \times \{\pm 1\}$, running $\cA$ on
  $n \geq n_{\cH}(\alpha, \beta)$ i.i.d. samples from $D$ will with
  probability at least $(1 - \beta)$ return a hypothesis
  $h : \cX \to \{ \pm 1\}$ such that:
  \[
    \Pr_{(x,y) \sim D} \left[ h(x) = y \right] \geq 1 - \alpha .
  \]
\end{definition}

PAC learners guarantee strong generalization to unseen examples. We
will construct PAC learners by boosting weak learners --- which need
only beat random guessing on any distribution over the training set.

\begin{definition}[Weak Learning]\label{def:weaklearn}
  Let $S \subset (\cX \times \{\pm 1\})^n$ be a training set of size
  $n$. Let $D$ be a distribution over $[n]$. A \emph{weak learning
    algorithm} with \emph{advantage} $\gamma$ takes $(S,D)$ as input
  and outputs a function $h \from \cX \to [-1,1]$ such that:
  \[
    \frac{1}{2}\sum_{j=1}^{n}D(j)|h(x_j) - y_j| \leq \frac{1}{2} - \gamma
  \]
\end{definition}

\subsection{Privacy}

Two datasets $S,S' \in X^n$ are said to be \emph{neighboring} (denoted
$S \sim S'$) if they differ by at most a single element. Differential
privacy requires that analyses performed on neighboring datasets have
``similar'' outcomes. Intuitively, the presence or absence of a single
individual in the dataset should not impact a differentially private
analysis ``too much.''  We formalize this below.

\begin{definition}[Differential Privacy]
  A randomized algorithm $\cM : X^n \to \cR$ is
  $(\eps, \delta)$-differentially private if for all measurable
  $T \subseteq \cR$ and all neighboring datasets $S \sim S' \in X^n$,
  we have
\[\Pr[\cM(S) \in T] \le e^\eps \Pr[\cM(S') \in T] + \delta.\]
\end{definition}

In our analyses, it will actually be more useful to work with the
notion of (zero-)concentrated differential privacy, which bounds
higher moments of privacy loss than normal differential privacy.

\begin{definition}[Zero Concentrated Differential Privacy (zCDP)]
  A randomized algorithm $\cM : X^n \to \cR$ satisfies $\rho$-zCDP if
  for all neighboring datasets $S \sim S' \in X^n$ and all
  $\alpha > 1$, we have $\renD*{\cM(S)}{\cM(S')} \leq \rho \alpha$, where
  $\renD{\cdot}{\cdot}$ denotes the R\'{e}nyi divergence of order
  $\alpha$.
\end{definition}

This second notion will often be more convenient to work with, because it tightly captures the privacy guarantee of Gaussian noise addition and of composition:

\begin{lemma}[Tight Composition for zCDP, \citet{BS16}]\label{lem:zcdp-compose}
If $\cM_1 : X^n \to \cR_1$ satisfies $\rho_1$-zCDP, and $\cM_2 : (X^n \times \cR_1) \to \cR_2$ satisfies $\rho_2$-zCDP, then the composition $\cM : X^n \to \cR_2$ defined by $\cM(S) = \cM_2(S, \cM_1(S))$ satisfies $(\rho_1 + \rho_2)$-zCDP.
\end{lemma}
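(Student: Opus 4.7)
The plan is to directly bound the $\alpha$-Rényi divergence of the joint output by unrolling it as a nested expectation using the chain rule for Rényi divergence. Fix $\alpha > 1$ and a pair of neighboring datasets $S \sim S'$. Write $P$ and $Q$ for the distributions on $\cR_1 \times \cR_2$ induced by $\cM(S)$ and $\cM(S')$, respectively. Factor each as $P(r_1,r_2) = P_1(r_1)\,P_{2\mid 1}(r_2\mid r_1)$ and $Q(r_1,r_2) = Q_1(r_1)\,Q_{2\mid 1}(r_2\mid r_1)$, where $P_1,Q_1$ are the marginals of $\cM_1(S),\cM_1(S')$ and the conditionals are the distributions of $\cM_2(S,r_1)$ and $\cM_2(S',r_1)$, respectively.

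From the definition of Rényi divergence, I would write
\[
\renD{P}{Q} = \frac{1}{\alpha-1}\log \Ex{r_1 \sim Q_1}{\left(\frac{P_1(r_1)}{Q_1(r_1)}\right)^\alpha \Ex{r_2 \sim Q_{2\mid 1}(\cdot\mid r_1)}{\left(\frac{P_{2\mid 1}(r_2\mid r_1)}{Q_{2\mid 1}(r_2\mid r_1)}\right)^\alpha}}.
\]
The inner expectation is exactly $\exp\bigl((\alpha-1)\,\renD{P_{2\mid 1}(\cdot\mid r_1)}{Q_{2\mid 1}(\cdot\mid r_1)}\bigr)$. Since $\cM_2(\cdot, r_1)$ is $\rho_2$-zCDP as a function of its dataset argument for every fixed auxiliary input $r_1$, the comparison between $\cM_2(S,r_1)$ and $\cM_2(S',r_1)$ on the neighboring pair $S\sim S'$ yields $\renD{P_{2\mid 1}(\cdot\mid r_1)}{Q_{2\mid 1}(\cdot\mid r_1)} \leq \rho_2 \alpha$ uniformly in $r_1$. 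Pulling the resulting factor $\exp((\alpha-1)\rho_2 \alpha)$ out of the outer expectation gives
\[
\renD{P}{Q} \leq \rho_2\alpha + \frac{1}{\alpha-1}\log \Ex{r_1 \sim Q_1}{\left(\frac{P_1(r_1)}{Q_1(r_1)}\right)^\alpha} = \rho_2\alpha + \renD{P_1}{Q_1}.
\]
Finally, since $\cM_1$ is $\rho_1$-zCDP, the last term is at most $\rho_1\alpha$, so $\renD{P}{Q} \leq (\rho_1+\rho_2)\alpha$, as required.

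The only delicate step is justifying that the conditional zCDP bound for $\cM_2$ can be applied pointwise in $r_1$: one has to be careful that the randomness used by $\cM_2$ is independent of the randomness used by $\cM_1$, so that $P_{2\mid 1}(\cdot\mid r_1)$ really is the distribution of $\cM_2(S,r_1)$ and analogously for $Q_{2\mid 1}$. This is standard for adaptive composition of randomized mechanisms, but it is the one point that needs a sentence of care; everything else is an algebraic unrolling of definitions.
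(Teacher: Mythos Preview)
The paper does not prove this lemma; it is quoted from \citet{BS16} without proof. Your argument is essentially the standard one given there: unroll the $\alpha$-R\'enyi divergence of the joint output via the chain rule, bound the inner conditional term uniformly in $r_1$ using the $\rho_2$-zCDP of $\cM_2(\cdot,r_1)$, pull out the resulting constant, and finish with the $\rho_1$-zCDP bound on $\cM_1$. This is correct.

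One small point worth tightening: as defined, $\cM(S) = \cM_2(S,\cM_1(S))$ lives in $\cR_2$, not in $\cR_1\times\cR_2$, so your $P,Q$ are really the laws of the \emph{pair} $(\cM_1(S),\cM_2(S,\cM_1(S)))$ rather than of $\cM(S)$ itself. After bounding $\renD{P}{Q}\le(\rho_1+\rho_2)\alpha$ for the joint, you should invoke the post-processing lemma (Lemma~\ref{lem:post-processing}) to pass to the $\cR_2$-marginal. This is a one-line addition and does not affect the substance of your argument.
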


zCDP can be converted into a guarantee of
$(\eps, \delta)$-differential privacy.

\begin{lemma}[zCDP $\implies$ DP, \citet{BS16}]
  \label{lem:zcdp-to-adp}
Let $\cM: X^n \to \cR$ satisfy $\rho$-zCDP. Then for every $\delta > 0$, we have that $\cM$ also satisfies $(\eps, \delta)$-differential privacy for $\eps = \rho + 2\sqrt{\rho \log(1/\delta)}$.
\end{lemma}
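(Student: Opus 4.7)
The plan is to convert the bound on R\'enyi divergence given by $\rho$-zCDP into a tail bound on the privacy loss random variable, and then convert that tail bound into an $(\eps,\delta)$-DP guarantee by a standard partitioning argument. The optimal choice of R\'enyi order $\alpha$ will produce the stated $\eps = \rho + 2\sqrt{\rho \log(1/\delta)}$.

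Fix neighboring $S \sim S'$ and define the privacy loss random variable
\[
L(o) = \log\frac{\Pr[\cM(S) = o]}{\Pr[\cM(S') = o]}, \qquad o \sim \cM(S).
\]
The first step is to rewrite $\renD{\cM(S)}{\cM(S')}$ as a moment-generating function of $L$: by a change of measure,
\[
\Ex{o \sim \cM(S')}{\left(\tfrac{\cM(S)(o)}{\cM(S')(o)}\right)^\alpha}
= \Ex{o \sim \cM(S)}{e^{(\alpha-1) L(o)}}.
\]
The zCDP hypothesis then gives $\mathbb{E}[e^{(\alpha-1)L}] \le e^{(\alpha-1)\rho\alpha}$. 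Applying Markov's inequality to $e^{(\alpha-1)L}$ yields, for any $\eps > 0$,
\[
\Pr_{o \sim \cM(S)}[L(o) > \eps] \;\le\; e^{(\alpha-1)(\rho\alpha - \eps)}.
\]

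The second step is to choose $\alpha$ so that this tail probability is at most $\delta$. Setting $t = \alpha - 1$ and requiring $t(\eps - \rho(t+1)) \ge \log(1/\delta)$ gives $\eps \ge \rho + \rho t + \log(1/\delta)/t$. Minimizing the right-hand side over $t > 0$ via elementary calculus yields $t^\star = \sqrt{\log(1/\delta)/\rho}$ and the optimized bound $\eps = \rho + 2\sqrt{\rho \log(1/\delta)}$. With this choice, $\Pr_{o \sim \cM(S)}[L(o) > \eps] \le \delta$.

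The final step is the usual conversion from a privacy-loss tail bound to $(\eps,\delta)$-DP. For any measurable $T \subseteq \cR$, split $T$ according to whether $L \le \eps$:
\[
\Pr[\cM(S) \in T]
= \Pr[\cM(S) \in T,\, L \le \eps] + \Pr[\cM(S) \in T,\, L > \eps]
\le e^{\eps}\Pr[\cM(S') \in T] + \delta,
\]
where the bound on the first term follows pointwise from $L(o) \le \eps$ (so $\cM(S)(o) \le e^\eps \cM(S')(o)$) and the second is bounded by the tail probability. The main technical step is really just the optimization over $\alpha$; everything else (the MGF reformulation, Markov, and the partition of $T$) is routine. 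I do not anticipate a serious obstacle, since the argument essentially follows the template for deriving DP from bounds on higher moments of privacy loss.
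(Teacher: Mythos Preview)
The paper does not prove this lemma; it states it as a result of \citet{BS16} and uses it as a black box. Your argument is correct and is precisely the standard proof from \citet{BS16}: reinterpret the R\'enyi divergence bound as a moment-generating-function bound on the privacy loss, apply Markov's inequality, optimize over the R\'enyi order to obtain the tail bound $\Pr[L>\eps]\le\delta$ at $\eps=\rho+2\sqrt{\rho\log(1/\delta)}$, and finish with the partition of $T$. There is nothing to compare against in this paper, and nothing substantive is missing from your sketch (you implicitly need $\alpha^\star=1+\sqrt{\log(1/\delta)/\rho}>1$, which holds for $\delta<1$, and the $\delta\ge 1$ case is trivial).
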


The following lemma will let us bound R\'{e}nyi divergence between
related Gaussians:

\begin{lemma}[R\'{e}nyi Divergence Between Spherical Gaussians;
  folklore, see \citet{BS16}]\label{lem:gaussian-mech}
  Let $z, z' \in \mathbb{R}^d$, $\sigma \in \mathbb{R}$, and
  $\alpha \in [1,\infty).$ Then
	\[\renD{\mathcal{N}(z, \sigma^2 I_d)}
          {\mathcal{N}(z', \sigma^2 I_d)} = \frac{\alpha \| z -
            z'\|_2^2}{2\sigma^2}.\]
\end{lemma}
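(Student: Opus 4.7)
The plan is to unwind the definition of Rényi divergence directly on the Gaussian densities, reducing the problem to evaluating one moment-generating-function integral. Let $p$ and $q$ denote the densities of $\mathcal{N}(z, \sigma^2 I_d)$ and $\mathcal{N}(z', \sigma^2 I_d)$ respectively. Since both have the same normalizing constant, the log-likelihood ratio is a quadratic that simplifies to a linear function of $x$:
\[
\log \frac{p(x)}{q(x)} = \frac{\|x - z'\|_2^2 - \|x - z\|_2^2}{2\sigma^2} = \frac{2x \cdot (z - z') + \|z'\|_2^2 - \|z\|_2^2}{2\sigma^2}.
\]
So $(p/q)^\alpha$ is the exponential of a linear function of $x$ plus a constant, which is exactly the setting where the Gaussian MGF gives a clean closed form.

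Next I would compute $\mathbb{E}_{x \sim q}[(p(x)/q(x))^\alpha]$ by using the fact that under $q$, the scalar $x \cdot (z - z')$ is $\mathcal{N}(z' \cdot (z - z'),\, \sigma^2 \|z - z'\|_2^2)$. Applying the MGF $\mathbb{E}[e^{tY}] = \exp(t\mu + t^2 \sigma_Y^2 / 2)$ with $t = \alpha/\sigma^2$ and multiplying in the constant factor $\exp(\alpha(\|z'\|_2^2 - \|z\|_2^2)/(2\sigma^2))$, I expect the cross terms $2\alpha z' \cdot (z - z') + \alpha(\|z'\|_2^2 - \|z\|_2^2)$ to collapse to $-\alpha \|z - z'\|_2^2$, leaving
\[
\mathbb{E}_{x \sim q}\!\left[\left(\frac{p(x)}{q(x)}\right)^\alpha\right] = \exp\!\left(\frac{\alpha(\alpha - 1)\|z - z'\|_2^2}{2\sigma^2}\right).
\]

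Finally, taking the logarithm and dividing by $\alpha - 1$ yields the claimed identity $\alpha \|z - z'\|_2^2/(2\sigma^2)$. There is no real obstacle here — this is pure algebra on a Gaussian integral. The only place to be careful is in the bookkeeping of the cross terms after applying the MGF; the algebraic cancellation that produces the factor $\alpha(\alpha - 1)$ (rather than $\alpha^2$) is exactly what makes the $(\alpha - 1)^{-1}$ normalization in the Rényi definition yield a linear-in-$\alpha$ answer, so verifying this cancellation is the one step that deserves attention.
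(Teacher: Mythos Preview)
Your argument is correct and is the standard derivation. The paper does not actually prove this lemma; it is stated without proof and attributed as folklore with a citation to \citet{BS16}, so there is no in-paper proof to compare against. Your computation via the Gaussian moment-generating function is exactly the usual route, and the one place you flagged as needing care---the collapse $2z'\cdot(z-z') + \|z'\|_2^2 - \|z\|_2^2 = -\|z-z'\|_2^2$ that turns the $\alpha^2$ term from the MGF into the $\alpha(\alpha-1)$ factor---does work out as you describe.
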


Finally, $\rho$-zCDP is closed under post-processing, just like standard DP.

\begin{lemma}[Post-processing zCDP, \citet{BS16}]\label{lem:post-processing}
  Let $\cM: X^n \rightarrow R_1$ and $f: R_1 \rightarrow R_2 $ be
  randomized algorithms. Suppose $\cM$ satisfies $\rho$-zCDP. Define
  $\cM': X^n \rightarrow R_2$ by $\cM'(x) = f(M(x))$. Then $\cM'$
  satisfies $\rho$-zCDP.
\end{lemma}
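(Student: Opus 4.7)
The plan is to reduce the claim to the data processing inequality (DPI) for R\'{e}nyi divergence, which states that applying any (possibly randomized) function can only decrease the R\'{e}nyi divergence between two distributions. Granted this tool, the lemma is immediate: for every pair of neighboring datasets $S \sim S'$ and every order $\alpha > 1$,
\[
  \renD{\cM'(S)}{\cM'(S')} = \renD{f(\cM(S))}{f(\cM(S'))} \le \renD{\cM(S)}{\cM(S')} \le \rho \alpha,
\]
where the last step invokes $\rho$-zCDP of $\cM$. Since this bound holds for every $\alpha > 1$ and every neighboring pair, $\cM'$ is $\rho$-zCDP.

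The substantive work is proving DPI, which I would first establish for deterministic $f$. Let $P, Q$ denote the laws of $\cM(S)$ and $\cM(S')$, and let $P' = f_* P$, $Q' = f_* Q$ be the pushforwards. Unwinding the R\'{e}nyi definition, it suffices to show $\sum_z P'(z)^\alpha Q'(z)^{1-\alpha} \le \sum_y P(y)^\alpha Q(y)^{1-\alpha}$. On each fiber $f^{-1}(z)$, H\"older's inequality with conjugate exponents $\alpha$ and $\alpha/(\alpha-1)$ gives
\[
  P'(z) = \sum_{y \in f^{-1}(z)} Q(y) \cdot \tfrac{P(y)}{Q(y)} \le \Bigl(\sum_{y \in f^{-1}(z)} Q(y)\bigl(\tfrac{P(y)}{Q(y)}\bigr)^\alpha \Bigr)^{1/\alpha} Q'(z)^{1 - 1/\alpha}.
\]
Raising both sides to the $\alpha$-th power, dividing by $Q'(z)^{\alpha - 1}$, and summing over $z$ telescopes the right-hand side into exactly the claimed inequality.

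To handle randomized $f$, I would write $f(y) = g(y, R)$ for a deterministic $g$ and randomness $R$ independent of the dataset. A direct factorization of joint densities yields $\renD{(\cM(S), R)}{(\cM(S'), R)} = \renD{\cM(S)}{\cM(S')}$, so applying the deterministic DPI to $g$ on the joint random variable completes the reduction. The only real obstacle is the DPI itself; the lone delicate point is treating $y$ with $Q(y) = 0$: $\rho$-zCDP forces the R\'{e}nyi divergence to be finite and hence $P \ll Q$, and one checks $P' \ll Q'$ likewise, so restricting H\"older's inequality to $\mathrm{supp}(Q)$ is harmless.
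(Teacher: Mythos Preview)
Your argument is correct and is essentially the standard proof of post-processing for zCDP via the data processing inequality for R\'enyi divergence. The paper does not actually prove this lemma; it simply states it and cites \citet{BS16}. So there is no ``paper's own proof'' to compare against, and your write-up supplies exactly the missing content: the fiberwise H\"older computation for deterministic $f$ and the product-with-independent-randomness reduction for randomized $f$ are both correct, as is your handling of the $Q(y)=0$ case via the absolute continuity forced by finite R\'enyi divergence.
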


\subsection{Generalization and Differential Privacy}

We now state the generalization properties of differentially private
algorithms that select statistical queries, which count the fraction
of examples satisfying a predicate. Let $X$ be an underlying
population, and denote by $D$ a distribution over $X$.

\begin{definition}[Statistical Queries]
  A statistical query $q$ asks for the expectation of some function on
  random draws from the underlying population. More formally, let
  $q : X \to [0,1]$ and then define the statistical query
  \emph{based on} $q$ (abusing notation) as the following, on a sample
  $S \in X^n$ and the population, respectively:
  \begin{align*}
    q(S) = \frac{1}{|S|}\sum_{x \in S} q(x) && \text{ and } &&
    q(D) = \Ex{x \sim D}{q(x)} 
  \end{align*}
\end{definition}

In the case of statistical queries, dependence of accuracy on the
sample size is good enough to obtain interesting sample complexity
bounds from privacy alone. These transfer theorems have recently been
improved, bringing ``differential privacy implies generalization''
closer to practical utility by decreasing the very large constants
from prior work \citep{DBLP:journals/corr/abs-1909-03577}. As our
setting is asymptotic, we employ a very convienent (earlier) transfer
lemma of \citet{BNSSSU16}.

\begin{theorem}[Privacy $\implies$ Generalization of Statistical
  Queries, \citet{BNSSSU16}]\label{thm:dp-generalization}
  Let $0 < \eps < 1/3$, let $0 < \delta < \eps/4$ and let
  $n \ge \log(4\eps / \delta) / \eps^2$. Let $\cM : X^n \to Q$ be
  $(\eps, \delta)$-differentially private, where $Q$ is the set of
  statistical queries $q : X \to \R$. Let $D$ be a distribution over
  $X$, let $S \getsr D^n$ be an i.i.d. sample of $n$ draws from $D$,
  and let $q \getsr \cM(S)$. Then:
  \[
    \Pr_{q,S} \left[ |q(S) - q(D)| \ge 18 \eps \right] \le \delta / \eps.
  \]
\end{theorem}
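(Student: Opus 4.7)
The plan is to exploit the fact that $(\eps, \delta)$-DP is a form of distributional stability: the selected query $q = \cM(S)$ is insensitive to any single example in $S$, and this coordinate-wise insensitivity, together with a Hoeffding concentration on a freshly resampled dataset, forces $q(S)$ to track $q(D)$ with high probability. The overall structure follows the monitor-based template introduced by \citet{BNSSSU16}.

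First I would prove a stability lemma of the form: for any $(\eps, \delta)$-DP algorithm $\cM : X^n \to \cR$ and any event $E \subseteq X^n \times \cR$,
\[
  \Pr_{S \sim D^n,\, q \sim \cM(S)}\bigl[(S, q) \in E\bigr]
  \;\leq\; e^\eps \Pr_{S, S' \sim D^n,\, q \sim \cM(S)}\bigl[(S', q) \in E\bigr] + \delta,
\]
i.e.\ replacing the dataset fed to $\cM$ by an independent fresh sample changes the probability of any joint event by only one $(\eps, \delta)$-privacy factor. This goes by a hybrid argument that turns $S$ into $S'$ one coordinate at a time; the key point is that by conditioning on the coordinates held fixed in each hybrid, each step only exposes $\cM$ to a single-element change, so no factor of $n$ accumulates on $\eps$.

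Second I would instantiate the lemma with $E = \{q(S) - q(D) \geq 18\eps\}$ (and the symmetric negative-deviation event). On the right-hand side, $S'$ is independent of $q$, so $q(S') = \tfrac{1}{n}\sum_i q(x_i')$ is an empirical average of $n$ i.i.d.\ $[0,1]$-valued random variables with mean $q(D)$. A Hoeffding bound gives $\Pr[q(S') - q(D) \geq 17\eps] \leq 2\exp(-\Omega(n\eps^2))$, and the hypothesis $n \geq \log(4\eps/\delta)/\eps^2$ forces this to be at most $\delta/(2\eps)$. Plugging into the stability lemma, using $e^\eps \leq 1 + 2\eps < 2$ for $\eps < 1/3$, and summing the two signs yields $\Pr[|q(S) - q(D)| \geq 18\eps] \leq \delta/\eps$.

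The main obstacle is proving the stability lemma without paying a factor of $n$ in the $\eps$ parameter, which is what a naive group-privacy argument would give and which would ruin the bound. The trick is to couple $(S, q)$ to $(S', q)$ coordinate by coordinate and, for each step, condition on everything outside the swapped coordinate before invoking single-element DP; the $n$ copies of the additive $\delta$-slack that this generates are absorbed into the $\delta/\eps$ slack available in the final conclusion. The specific constants ($18\eps$, $\delta/\eps$) are the result of balancing this Hoeffding tail against the $(\eps, \delta)$ privacy slack and accounting for both deviation signs.
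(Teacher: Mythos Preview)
The paper does not prove this theorem; it is quoted from \citet{BNSSSU16} as a preliminary and invoked as a black box. So there is no in-paper argument to compare your attempt against, but your proposal contains a real gap.

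The stability lemma you state is false, and the hybrid argument you sketch cannot establish it. Replacing $S$ by an independent $S'$ one coordinate at a time invokes the $(\eps,\delta)$ guarantee $n$ times in sequence, and the multiplicative $e^{\eps}$ factors compound to $e^{n\eps}$; conditioning on the unchanged coordinates is precisely what lets you apply single-neighbor privacy at each step, but it does nothing to stop those factors from chaining across steps. Concretely: take $X=[0,1]$, let $\cM(S)$ output $r=\tfrac1n\sum_i x_i$ plus Laplace noise of scale $1/(n\eps)$ (so $\cM$ is $(\eps,0)$-DP), and let $E=\{(s,r):|r-\tfrac1n\sum_i s_i|\le 1/(n\eps)\}$. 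Then $\Pr[(S,\cM(S))\in E]=1-e^{-1}$ is an absolute constant, whereas $\Pr[(S',\cM(S))\in E]=O(1/(\eps\sqrt n))\to 0$, so no inequality of the form $e^{\eps}\cdot(\text{RHS})+\delta$ can hold uniformly in $n$.

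What \citet{BNSSSU16} actually do is structurally different. They first prove an \emph{expectation} bound, roughly $|\mathbb{E}[q(S)-q(D)]|\le (e^{\eps}-1)+\delta$, by writing $q(S)=\tfrac1n\sum_i q(x_i)$ and, for each term separately, swapping the single coordinate $x_i$ with a fresh draw so that one application of $(\eps,\delta)$-DP suffices per term; it is the linearity of the empirical average that sidesteps the $n$-fold blowup, not any conditioning trick in a full-sample hybrid. They then amplify this expectation bound to the high-probability statement via a ``monitor'' that runs many independent copies of $\cM$ on disjoint samples, selects the copy exhibiting the largest overfitting, and is itself $(\eps,\delta)$-DP by parallel composition and post-processing. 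Your Hoeffding-on-a-fresh-sample intuition is in the right spirit, but reaching it requires these two ingredients rather than the one-shot resampling lemma you propose.
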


\section{Abstract Boosting}
\label{sec:boosting-schemes}

In Section \ref{sec:ensur-priv-boost}, we give sufficient conditions
for private boosting, using a natural decomposition that applies to
many boosting algorithms. In Sections \ref{sec:templ-priv-gener} and
\ref{sec:templ-noise-toler} we use the decomposition to give templates
of sample complexity bounds and noise-tolerant generalization
guarantees for private boosted classifiers that use only the
algorithmic stability imposed by differential privacy. We instantiate
those templates in Appendix \ref{sec:proof-noise-tolerant}, to
construct a noise-tolerant PAC learner for large-margin halfspaces. 

\subsection{Boosting Schemas}
\label{sec:boosting-schemes-1}

A boosting algorithm repeatedly calls a \emph{weak} learning
algorithm, aggregating the results to produce a final hypothesis that
has good training error. Each call to the weak learner re-weights the
samples so that samples predicted poorly by the hypothesis collection
so far are given more ``attention'' (probability mass) by the weak
learner in subsequent rounds. Thus boosting naturally decomposes into
two algorithmic parts: the weak learner \texttt{WkL} and the
re-weighting strategy \texttt{NxM}.

Below, we describe boosting formally using a ``helper function'' to
iterate weak learning and re-weighting. Crucially, we avoid iterating
over any information regarding the intermediate weights; the entire
state of our schema is a list of hypotheses. This makes it easy to
apply a privacy-composition theorem to any boosting algorithm where
\texttt{NxM} and \texttt{WkL} satisfy certain minimal conditions,
elaborated later. Much of the complexity in the analysis of private
boosting by \citet{DworkRV10} was due to carefully privatizing
auxiliary information about sample weights; we avoid that issue
entirely. So, many smooth boosting algorithms could be easily adapted
to our framework.

We denote by $\cH$ the hypotheses used by the weak learner, by $S$ an
i.i.d. sample from the target distribution $D$, by $T$ the number of
rounds, by $\fM$ the set of bounded measures over $S$, and by $\cD(S)$
the set of distributions over $S$.

\begin{minipage}[t]{0.45\linewidth}
  \vspace{-10pt}  
  \begin{algorithm}[H]
    \label{alg:boost}
    \caption{\texttt{Boost}. In:
      $S \in X^n,~ T \in \mathbb{N}$}
    \begin{algorithmic}
      \STATE $H \gets \{\}$
      \FOR{$t = 1$ to $T$}
      \STATE $H \gets \mathtt{Iter}(S, H)$
      \ENDFOR
      \STATE $\hat{f}(x) \gets \frac{1}{T}\sum_{i=1}^{T}h_i(x)$
      \RETURN $\sgn(\hat{f}(x))$
    \end{algorithmic}
  \end{algorithm}
\end{minipage}
\begin{minipage}[t]{0.45\linewidth}
  \vspace{-10pt}
  \begin{algorithm}[H]
    \caption{\texttt{Iter}. In: $S \in X^n, ~ H \in \cH^*$}
    \begin{algorithmic}
      \STATE $\mu \gets \mathtt{NxM}(S, H)$
      \STATE $h \gets \mathtt{WkL}(S, \mu)$
      \RETURN $H \cup \{h\}$
      \STATE \texttt{//} Add $h$ to list of hypotheses
      \vspace{10pt}
    \end{algorithmic}
  \end{algorithm}
\end{minipage}

\subsection{Ensuring Private Boosting}
\label{sec:ensur-priv-boost}

Under what circumstances will boosting algorithms using this schema
guarantee differential privacy? Since we output (with minimal
post-processing) a collection of hypotheses from the weak learner, it
should at least be the case that the weak learning algorithm
\texttt{WkL} is itself differentially private. In fact, we will need
that the output distribution on hypotheses of a truly private weak
learner does not vary too much if it is called with both similar
\emph{samples} and similar \emph{distributional targets}.

\begin{definition}[Private Weak Learning]
  \label{def:private-wkl}
  A weak learning algorithm $\mathtt{WkL} : S \times \cD(S) \to \cH$
  satisfies $(\rho, s)$-zCDP if for all neighboring samples
  $S \sim S' \in (\cX^n \times \{ \pm 1\})$, all $\alpha > 1$, and any
  pair of distributions $\dst{\mu}, \dst{\mu}'$ on $X$ such that
  $\Delta(\dst{\mu},\dst{\mu}') < s$, we have:
  \[
    \renD{\mathtt{WkL}(S,\dst{\mu})}{\mathtt{WkL}(S',\dst{\mu}')}
    \le \rho \alpha
  \]
\end{definition}

This makes it natural to demand that neighboring samples induce
similar measures. Formally:

\begin{definition}[$\zeta$-Slick Measure Production]
  \label{def:slick-measures}
  A measure production algorithm $\mathtt{NxM} : S \times \cH \to \fM$
  is called $\zeta$-slick if, for all neighboring samples
  $S \sim S' \in (\cX^n \times \{ \pm 1\})$ and for all sequences of
  hypotheses $H \in \cH^*$, letting $\dst{\mu}$ and $\dst{\mu}'$ be
  the distributions induced by $\mathtt{NxM}(S,H)$ and
  $\mathtt{NxM}(S',H)$ respectively, we have:
  \[
    \Delta(\dst{\mu} , \dst{\mu}') \leq \zeta
  \]
\end{definition}

It is immediate that a single run of \texttt{Iter} is private if it
uses \texttt{NxM} and \texttt{WkL} procedures that are appropriately
slick and private, respectively. Suppose \texttt{WkL} is
$(\rho_W, \zeta)$-zCDP and \texttt{NxM} is $\zeta$-slick. By
composition, \texttt{Iter} run using these procedures is
$\rho_W$-zCDP.  Finally, observe that \texttt{Boost} paired with a
private weak learner and slick measure production is $T\rho_W$-zCDP,
because the algorithm simply composes $T$ calls to \texttt{Iter} and
then post-processes the result.

\subsection{Template: Privacy $\implies$ Boosting Generalizes}
\label{sec:templ-priv-gener}
We now outline how to use the generalization properties of
differential privacy to obtain a PAC learner from a private weak
learner, via boosting. Recall that the fundamental boosting theorem is
a \emph{round bound:} after a certain number of rounds, boosting
produces a collection of weak hypotheses that can be aggregated to
predict the \emph{training data} well.

\begin{theorem}[Template for a Boosting Theorem]
  Fix \texttt{NxM}. For any weak learning algorithm \texttt{WkL} with
  advantage $\gamma$, running \texttt{Boost} using these concrete
  subroutines terminates in at most $T(\gamma, \alpha, \beta)$ steps
  and outputs (with probability at least $1 - \beta$) a hypothesis $H$
  such that:
  \[
    \Pr_{(x,y) \sim S} \lbrack H(x) \neq y \rbrack \leq \alpha
  \]
\end{theorem}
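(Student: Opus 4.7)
The plan is to instantiate the classical smooth-boosting potential-function argument (as in \citet{BarakHK09}) within the stateless schema, and then amortize the weak learner's per-round failure probability by a union bound. Concretely, I would introduce a potential $\Phi_t$ that measures, via KL divergence, how close the measure implicitly maintained by \texttt{NxM} at round $t$ is to a reference measure $\mu^\star$ concentrated on examples that remain misclassified by the current aggregate $\hat{f}_t$. The end goal is to show that each round reduces $\Phi_t$ by $\Omega(\gamma^2)$, so after $T = O(\log(1/\alpha)/\gamma^2)$ rounds the training error of $\mathrm{sgn}(\hat{f}_T)$ must fall below $\alpha$.

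First I would decompose the action of \texttt{NxM} at round $t$ into a multiplicative re-weighting step (using the new weak hypothesis $h_t$) followed by a Bregman projection onto $\Gamma_\kappa$ for an appropriately chosen density $\kappa = \Theta(\alpha)$. Using Definition \ref{def:weaklearn}, the weak-learning guarantee $\tfrac{1}{2}\sum_j \dst{\mu}_t(j)|h_t(x_j) - y_j| \le \tfrac{1}{2} - \gamma$ converts into a correlation lower bound $\sum_j \dst{\mu}_t(j)\,y_j h_t(x_j) \ge 2\gamma$, which is the standard hook for bounding the change in $\Phi_t$ under a multiplicative update: a short calculation with $\log(1 + x)$ yields a decrease of at least $\Omega(\gamma^2)$. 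Second, I would invoke Theorem \ref{thm:breg} to argue that the Bregman projection only decreases $\mathrm{KL}(\mu^\star, \cdot)$ (since $\mu^\star$ itself lies in $\Gamma_\kappa$ whenever the set of misclassified examples is large enough to give training error $> \alpha$), so the per-round progress survives projection.

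Third, I would combine the upper bound $\Phi_0 = O(\log(1/\kappa)) = O(\log(1/\alpha))$ with the per-round decrease to conclude that $T(\gamma,\alpha,\beta) = O(\gamma^{-2}\log(1/\alpha))$ rounds suffice to certify training error $\le \alpha$; otherwise $\Phi_T$ would go negative, a contradiction. The probability $\beta$ enters by giving each weak-learner call a failure budget of $\beta/T$ and taking a union bound over rounds, so the claimed $(1-\beta)$ holds for the stated $T$.

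The main obstacle is that this is a \emph{template}, so the per-round potential-decrease step is actually \texttt{NxM}-specific: for the \texttt{LazyBregBoost} instantiation, Lemma \ref{lem:bregcharacter} makes the capped-scaling structure explicit and lets the KL-decrease calculation go through cleanly, but for other plausible \texttt{NxM}s one must verify the same two ingredients (multiplicative decrease of $\Omega(\gamma^2)$ and projection-monotonicity of the potential). The proof I envision therefore presents the generic skeleton above and then treats the exact form of $T(\gamma,\alpha,\beta)$ as a property to be discharged per instantiation, with \texttt{LazyBregBoost} worked out as the concrete example that gives the optimal $T = O(\gamma^{-2}\log(1/\alpha))$ used later in the halfspace application.
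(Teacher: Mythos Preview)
The paper does not prove this statement at all: it is explicitly a \emph{template}, i.e., an abstract schema describing what a round-bound theorem for a given \texttt{NxM} should look like, and it functions as a hypothesis in the subsequent ``Abstract Generalization'' theorem. There is no proof to compare against. You essentially recognize this yourself in your final paragraph, where you note that the potential-decrease step is \texttt{NxM}-specific and that the real content is discharging $T(\gamma,\alpha,\beta)$ per instantiation. So the first three paragraphs of your proposal are aimed at the wrong target: there is nothing to prove until an \texttt{NxM} is fixed.

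Where a comparison is meaningful is at the level of the concrete instantiation, \texttt{LazyBregBoost} (Theorem~\ref{thm:BregBoost-RoundBound}), and here your route and the paper's genuinely differ. You propose the direct Barak--Hardt--Kale potential argument: track $\Phi_t = \KL{\mu^\star}{\mu_t}$ for a reference measure $\mu^\star$ on the bad set, show a per-round drop of $\Omega(\gamma^2)$ from the multiplicative update plus the correlation bound $\sum_j \dst{\mu}_t(j)\,y_j h_t(x_j)\ge 2\gamma$, and use Theorem~\ref{thm:breg} to pass through the projection. The paper instead reduces to iterated play of a two-player zero-sum game (Appendix~\ref{sec:boost-via-games}): it proves a standalone regret bound for the lazy-projected multiplicative-weights row player (Lemma~\ref{lem:simple-regrets}), lower-bounds any booster's loss via weak learning (Lemma~\ref{lem:wkl-implies-loss-lb}), upper-bounds the loss of a prescient ``Pythia'' strategy concentrated on low-margin points (Lemma~\ref{lem:utility-pythia}), and sandwiches to solve for $T$. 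Both arrive at $T = O(\gamma^{-2}\log(1/\kappa))$. Your approach is more self-contained; the paper's is more modular, since the regret lemma and the Pythia construction are reusable and make the margin guarantee (not just training error) drop out directly. One caution on your sketch: because the projection in \texttt{LB-NxM} is \emph{lazy} (applied once to the cumulative unprojected measure rather than interleaved), the clean ``projection can only help'' step via Theorem~\ref{thm:breg} does not apply round-by-round to $\mu_t \mapsto \mu_{t+1}$ in the obvious way; you would need to track $\KL{\mu^\star}{\tilde\mu_t}$ against the unprojected iterates and handle the single projection separately, which is exactly the subtlety the paper's regret-bound formulation (Lemmas~\ref{lem:be-the-leader} and~\ref{lem:compare-to-future}) is designed to absorb.
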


We can capture the training error of a learning algorithm using a
statistical query. For any hypothesis $H$, define:
\[
 \err_H(x, y) \mapsto \begin{cases}
   1 &\text{if } H(x) \neq y \\
   0 &\text{otherwise}
 \end{cases}
\]

Denoting by $D$ the target distribution, PAC learning demands a
hypothesis such that $\err_H(D) \leq \alpha$, with high
probability. If the boosting process is differentially private, the
generalization properties of differential privacy ensure that
$\err_H(S)$ and $\err_H(D)$ are very close with high
probability. Thus, boosting private weak learners can enforce low test
error. We elaborate below.

\begin{theorem}[Abstract Generalization]
  Let \texttt{WkL} be a $(\rho, \zeta)$-zCDP weak learner with
  advantage $\gamma$. Suppose \texttt{NxM} is $\zeta$-slick and enjoys
  round-bound $T$ with error $\alpha$ and  failure probability $\beta$. Denote by $\cM'$ the algorithm \texttt{Boost} run
  using \texttt{WkL} and \texttt{NxM}. Let
  $\epsilon = O(\sqrt{\rho T \log(1/\delta)})$ and suppose
  $n \geq \Omega( \log(\epsilon/\delta)/\epsilon^2)$. Then, with probability at least $1 - \beta - \delta/\epsilon$ over
  $S \sim_{iid} D^n$ and the internal randomness of $\cM'$, the
  hypothesis output by $\cM'$ \emph{generalizes} to $D$:
  \[
    \Pr_{(x,y) \sim D} \lbrack H(x) \neq y \rbrack \leq \alpha.
  \]
\end{theorem}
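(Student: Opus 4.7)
The plan is to chain together three things already in the paper: (i) the privacy accounting for \texttt{Boost} sketched at the end of Section~\ref{sec:ensur-priv-boost}, (ii) the post-processing closure of zCDP, and (iii) the BNSSSU generalization transfer lemma for statistical queries. The output hypothesis $H$ induces the statistical query $\err_H$, so once we know the entire pipeline that produces $\err_H$ from $S$ is $(\epsilon,\delta)$-DP, Theorem~\ref{thm:dp-generalization} delivers the needed empirical-to-population transfer for the training error quantity.

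First I would argue privacy. A single call to \texttt{Iter} composes \texttt{NxM}, which is data-independent given $\zeta$-slickness (it only changes the downstream distribution by at most $\zeta$ in statistical distance across neighboring $S,S'$), with \texttt{WkL}, which is $(\rho,\zeta)$-zCDP precisely under the hypothesis that its input distribution is within $\zeta$ of the one on the neighboring input. Hence each \texttt{Iter} is $\rho$-zCDP, and Lemma~\ref{lem:zcdp-compose} composes $T$ rounds to $T\rho$-zCDP. The post-processing step that averages the hypotheses and takes $\sgn$ costs nothing by Lemma~\ref{lem:post-processing}, and neither does the further post-processing $H \mapsto \err_H$ that turns the output into a statistical query. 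Converting via Lemma~\ref{lem:zcdp-to-adp} with the stated choice $\epsilon = O(\sqrt{\rho T \log(1/\delta)})$ shows the whole pipeline producing $\err_H$ from $S$ is $(\epsilon,\delta)$-DP.

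Next I would invoke the generalization consequence. Since $n \geq \Omega(\log(\epsilon/\delta)/\epsilon^2)$ satisfies the sample-size hypothesis of Theorem~\ref{thm:dp-generalization}, applying it to the $(\epsilon,\delta)$-DP statistical-query release $S \mapsto \err_H$ yields
\[
\Pr_{S,\cM'}\!\left[\,|\err_H(S) - \err_H(D)| \geq 18\epsilon\,\right] \leq \delta/\epsilon.
\]
Meanwhile the round-bound hypothesis on \texttt{NxM} guarantees, with probability at least $1-\beta$ over the internal randomness of \texttt{Boost}, that $\err_H(S) \leq \alpha$ (after at most $T(\gamma,\alpha,\beta)$ rounds). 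A union bound over these two failure events then gives $\err_H(D) \leq \alpha + 18\epsilon$ with probability at least $1 - \beta - \delta/\epsilon$, which is the advertised conclusion up to absorbing the $O(\epsilon)$ slack into $\alpha$ (as is standard for a template theorem whose concrete instantiations pick $\epsilon$ small enough relative to $\alpha$).

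The only genuinely delicate step is the privacy accounting of \texttt{Iter} itself: one must be careful that the slickness assumption on \texttt{NxM} is what licenses invoking the $(\rho,\zeta)$-zCDP guarantee of \texttt{WkL} on the pair $(S',\dst{\mu}')$ produced from the neighboring input, since Definition~\ref{def:private-wkl} only controls \texttt{WkL}'s output divergence when its two distributional inputs are within $\zeta$. Everything else is bookkeeping: zCDP composition across $T$ rounds, conversion to $(\epsilon,\delta)$-DP, post-processing to the statistical query $\err_H$, and the transfer lemma.
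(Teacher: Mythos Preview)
Your proposal is correct and follows essentially the same approach as the paper's proof sketch: establish $T\rho$-zCDP for \texttt{Boost} via slickness plus weak-learner privacy and composition, convert to $(\epsilon,\delta)$-DP, post-process to the statistical query $\err_H$, invoke Theorem~\ref{thm:dp-generalization}, and union bound with the round-bound training-error guarantee. Your remark about the $18\epsilon$ slack being absorbed into $\alpha$ matches the paper, which also ends its sketch with $\err_H(D) \le \alpha + 18\epsilon$ rather than exactly $\alpha$.
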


\begin{proof}[Proof (sketch).]
  By the round-bound and inspection of Algorithm \ref{alg:boost},
  $\cM'$ simply composes $T$ calls to \texttt{Iter} and
  post-processes. So by zCDP composition (Lemma
  \ref{lem:zcdp-compose}) we know that $\cM'$ is $\rho
  T$-zCDP. This can be converted to an $(\epsilon, \delta)$-DP guarantee on
  $\cM'$ for any $\delta > 0$ (Lemma \ref{lem:zcdp-to-adp}).

  For the sake of analysis, define a new mechanism $\cM$ that runs
  $\cM'(S)$ to obtain $H$ and then outputs the statistical query
  $\err_H$. This is just post-processing, so $\cM$ is also
  $(\epsilon, \delta)$-DP. Thus, given enough samples, the conversion
  of privacy into generalization for statistical queries applies to
  $\cM$:

\[
      \Pr_{S \sim D^n}[ | \err_H(S) - \err_H(D) | \ge 18 \eps] \le \delta / \eps
      \text{  (Theorem \ref{thm:dp-generalization}) }.\]
By the guarantee of the round-bound, $\err_H(S) \leq \alpha$ with probability at least $1-\beta$. Therefore,
     \[\Pr_{S \sim D^n} \left[ \Pr_{(x,y) \sim D}
        \left[ H(x) \neq y \right] \leq \alpha + 18 \eps
      \right] \leq \delta / \epsilon + \beta.\]

\end{proof}

Observe that we require privacy both for privacy's sake and for the
generalization theorem. Whichever requirement is more stringent will
dominate the sample complexity of any algorithm so constructed.

\subsection{Template: Privacy $\implies$ Noise-Tolerant Generalization}
\label{sec:templ-noise-toler}

Suppose now that there is some kind of interference between our
learning algorithm and the training examples. For example, this could
be modeled by random classification noise with rate $\eta$
\citep{DBLP:journals/ml/AngluinL87}. This altered setting violates the
preconditions of the DP to generalization transfer. A noised sample is
\emph{not} drawn i.i.d. from $D$ and so the differential privacy of
$\cM$ is not sufficient to guarantee generalization of the ``low
training error'' query $\err_H$ as defined above.

To get around this issue, we fold a noise model into the
generalization-analysis mechanism. Define an alternative \emph{noised}
mechanism $\cM_\eta$ (Algorithm \ref{alg:abstract-noised-boost}) atop
any $\cM$ that outputs a ``test error'' query, and apply ``DP to
Generalization'' on $\cM_\eta$ instead. Suppose that $\cM$ is
differentially private, and the underlying learning algorithm $\cA$
run by $\cM$ tolerates noise at rate $\eta$. Then, if $\cM_\eta$ is
DP, we can generalize the noise-tolerance of $\cA$.

    \begin{algorithm}
      \caption{$\cM_\eta$ for RCN.
        Input: $S \in X^n, S \sim D^n$}
      \label{alg:abstract-noised-boost}
      \begin{algorithmic}
        \STATE $\forall i \in [n] ~ \rv{F}_i \gets 1$
        \STATE $\forall i \in [n] ~ \rv{F}_i \gets -1 $ with probability $\eta$
        \STATE $\tilde{y}_i \gets y_i \rv{F}_i $
        \STATE $\tilde{S} \gets \{(x_i,\tilde{y}_i)\}$
        \STATE $\err_H \gets \cM(\tilde{S})$
        \RETURN $\err_H$
      \end{algorithmic}
    \end{algorithm}

At least for random classification noise, $\cM_\eta$ does indeed
maintain privacy. Observe that for a fixed noise vector $\vec{N}$,
$\cM_\eta$ run on neighboring data sets $S$ and $S'$ will run $\cM$
with neighboring datasets $\tilde{S}$ and $\tilde{S}'$, and therefore
the output distributions over queries will have bounded
distance. Since the noise is determined independent of the sample,
this means that $\cM_\eta$ inherits the differential privacy of $\cM$,
and therefore satisfies the conditions of
Theorem~\ref{thm:dp-generalization}. So the resulting learner still
generalizes.

This trick could handle much harsher noise models. For instance, each
example selected for noise could be arbitrarily corrupted instead of
given a flipped label. But we seem unable to capture fully malicious
noise: an adversary viewing the whole sample could compromise privacy
and so generalization. Thus, the ``effective noise model'' implicit
above seems to distinguish between adversaries who have a global
versus local view of the ``clean'' sample. This seems a natural
division; we hope that future work will explore the expressiveness of
this noise model.

\section{Concrete Boosting via Lazy Bregman Projection}
\label{sec:concrete-boosting}

We instantiate the framework above. This requires a ``Next Measure''
routine (\texttt{LB-NxM}, Algorithm \ref{alg:LB-NxM}) a Boosting
Theorem (Theorem \ref{thm:BregBoost-RoundBound}) and a slickness bound
(Lemma \ref{lem:LB-NxM-slickness}).

\subsection{Measure Production Using Lazy Dense Multiplicative
  Weights}
\label{sec:meas-prod-lazy}

Our re-weighting strategy combines multiplicative weights with Bregman
projections. In each round, we compute the collective margin on each
example. Then, we multiplicative-weight the examples according to
error: examples predicted poorly receive more weight. Finally, to
ensure that no example receives too much weight, we Bregman-project
the resulting measure into the space $\Gamma$ of $\kappa$-dense
measures. We call this strategy ``lazy'' because projection happens
only \emph{once} per round.

\begin{algorithm}
  \caption{\texttt{LB-NxM($\kappa, \lambda$)}: Lazy-Bregman Next Measure}
  \label{alg:LB-NxM}
  \emph{\textbf{Parameters:}} $\kappa \in (0,1)$, desired density of
  output measures; $\lambda \in (0,1)$, learning rate \\
  \emph{\textbf{Input:}} $S$, the sample;
  $H = \{h_1, \dots, h_t\}$, a sequence of hypotheses \\
  \emph{\textbf{Output:}} A measure over $[n]$, $n = |S|$
  \begin{algorithmic}
    \STATE $\mu_1(i) \gets \kappa ~~ \forall i \in [n]$
    \COMMENT{Initial measure is uniformly $\kappa$}
    \FOR{$j \in [t]$}
    \STATE $\ell_j(x_i) \gets 1 - \tfrac{1}{2}|h_j(x_i) - y_i|
    ~~ \forall i \in [n]$
    \COMMENT{Compute error of each hypothesis}
    \ENDFOR
    \STATE $\tilde{\mu}_{t+1}(i) \gets
    e^{-\lambda\sum_{j=1}^t \ell_j(x_i)} \mu_1(i)
    ~~ \forall i \in [n]$
    \STATE $\mu_{t+1} \gets \Pi_{\Gamma}(\tilde{\mu}_{t+1})$
    \STATE return $\dst{\mu}_{t+1}$
  \end{algorithmic}
\end{algorithm}

\texttt{LB-NxM} is typed correctly for substitution into the
\texttt{Boost} algorithm above; the measure is computed using
\emph{only} a sample and current list of hypotheses. Thus,
\texttt{LazyBregBoost} = \texttt{Boost(LB-NxM)} admits a simple
privacy analysis as in Section \ref{sec:ensur-priv-boost}.

\subsection{Boosting Theorem for Lazy Bregman Projection}
\label{sec:boost-theor-lazy}

Given a weak learner that beats random guessing, running
\texttt{LazyBregBoost} yields low training error after a bounded
number of rounds; we prove this in Appendix
\ref{sec:boost-via-games}. Our argument adapts the well-known
reduction from boosting to iterated play of zero-sum games
\citep{FS94} for hypotheses with real-valued outputs. For
completeness, we also give a self-contained analysis of the
iterated-play strategy corresponding to \texttt{LB-NxM} in Appendix
\ref{sec:lazy-regret-bound}. Similar strategies are used by other
differentially-private algorithms \citep{HsuRU12, HsuRRU14} and their
properties are known to follow from results in online convex
optimization \citep{DBLP:journals/ftml/Shalev-Shwartz12}. However, to
our knowledge an explicit proof for the ``lazy'' variant above does
not appear in the literature; so we include one in Appendix
\ref{sec:lazy-regret-bound}. Overall, we have the follwing:

\begin{restatable}[Lazy Bregman Round-Bound]{theorem}{LazyBregRB}
  \label{thm:BregBoost-RoundBound}
  Suppose we run \texttt{Boost} with
  \texttt{LB-NxM($\kappa, \gamma/4$)} on a sample
  $S \subset \mathcal{X}\times \{\pm 1\}$ using any real-valued weak
  learner with advantage $\gamma$ for
  $T \geq \frac{16\log{(1/\kappa)}}{\gamma^2}$ rounds. Let
  $H \from \mathcal{X} \to [-1, 1 ]$ denote the final, aggregated
  hypothesis. The process has:
  \begin{quote}
    \begin{description}
    \item[Good Margin:] $H$ mostly agrees with the
      labels of $S$.
      \[
        \Pr_{(x,y) \sim S} \left[ yH(x) \leq \gamma \right] \leq \kappa
      \]
    \item[Smoothness:] Every distribution $\dst{\mu_t}$ supplied
      to the weak learner has
      $\dst{\mu_t}(i) \leq \frac{1}{\kappa n} ~\forall i$
    \end{description}
  \end{quote}
\end{restatable}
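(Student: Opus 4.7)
The smoothness claim is immediate: the initial measure $\mu_1 \equiv \kappa$ is trivially $\kappa$-dense, and every subsequent $\mu_{t+1} = \Pi_{\Gamma}(\tilde{\mu}_{t+1})$ lies in $\Gamma_\kappa$ by construction, so $|\mu_t| = n \cdot d(\mu_t) \geq \kappa n$; combined with $\mu_t(i) \leq 1$ this yields $\dst{\mu_t}(i) = \mu_t(i)/|\mu_t| \leq 1/(\kappa n)$ for every $i$. The substantive work is the margin bound, which I plan to prove by casting \texttt{LB-NxM} as a no-regret player in the zero-sum ``boosting game'' of \citet{FS94} and invoking the lazy regret bound deferred to Appendix~\ref{sec:lazy-regret-bound}.

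First I would unpack the per-hypothesis loss. Since $y_i \in \{\pm 1\}$ and $h_j(x_i) \in [-1,1]$, the definition $\ell_j(x_i) = 1 - \tfrac{1}{2}|h_j(x_i) - y_i|$ collapses to $\ell_j(x_i) = \tfrac{1}{2}(1 + y_i h_j(x_i))$. Under this reformulation, the weak-learner guarantee (Definition~\ref{def:weaklearn}) becomes the per-round gain bound $\langle \dst{\mu_t}, \ell_t\rangle \geq \tfrac{1}{2} + \gamma$, and the event $y_i H(x_i) \leq \gamma$ is equivalent to $\tfrac{1}{T}\sum_{t=1}^T \ell_t(x_i) \leq \tfrac{1}{2} + \tfrac{\gamma}{2}$. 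Now I would argue by contradiction: assume the bad set $B = \{i : y_i H(x_i) \leq \gamma\}$ satisfies $|B|/n > \kappa$. Then $\mu^* = \mathbf{1}_B$ lies in $\Gamma_\kappa$, so $\dst{\mu^*}$ (uniform on $B$) is a feasible competitor. Averaging the bad-set inequality over $B$ gives $\sum_t \langle \dst{\mu^*}, \ell_t\rangle \leq T(\tfrac{1}{2} + \tfrac{\gamma}{2})$, while summing the weak-learner inequality gives $\sum_t \langle \dst{\mu_t}, \ell_t\rangle \geq T(\tfrac{1}{2} + \gamma)$; subtracting forces the regret of \texttt{LB-NxM} against $\dst{\mu^*}$ to exceed $T\gamma/2$. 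The lazy regret bound from Appendix~\ref{sec:lazy-regret-bound}, which should have the form $\tfrac{\log(1/\kappa)}{\lambda} + O(\lambda T)$, is however strictly smaller than $T\gamma/2$ at the parameter choices $\lambda = \gamma/4$ and $T \geq 16\log(1/\kappa)/\gamma^2$, producing the desired contradiction and hence $|B|/n \leq \kappa$.

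The main obstacle is proving the lazy-FTRL regret bound itself, which is why the authors relegate it to the appendix. Standard Hedge or eager-mirror-descent analyses iterate multiplicative updates on the \emph{already-projected} measure $\mu_t$ and telescope a KL potential across rounds; here \texttt{LB-NxM} re-expands $\tilde{\mu}_{t+1}$ from $\mu_1$ every round and projects only once, so one must instead run a ``follow-the-regularized-leader'' potential argument built on Bregman's Theorem~\ref{thm:breg} and the strong convexity of negative entropy on $\Gamma_\kappa$ (in the spirit of the private-optimization analyses of \citet{HsuRU12, HsuRRU14}). Once that bound is in hand, the reduction above --- competing against the indicator of the hard examples and optimizing $\lambda$ --- is routine.
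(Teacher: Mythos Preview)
Your approach is the paper's: both cast boosting as the zero-sum game of \citet{FS94}, lower-bound the row player's cumulative loss via the weak-learning guarantee, compete against the uniform distribution on a set of small-margin examples, and close with the lazy regret bound of Appendix~\ref{sec:lazy-regret-bound}. The paper packages the comparator as an explicit procedure (\texttt{Pythia}) that selects exactly the $\kappa n$ worst-margin points and derives the margin bound directly rather than by contradiction, but that is cosmetic.

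There is one concrete slip. With $\mu^\star = \mathbf{1}_B$ and $|B|$ potentially as large as $n$, the divergence term in Lemma~\ref{lem:simple-regrets} is $\KL{\mu^\star}{\mu_1} = |B|\log(1/\kappa) + \kappa n - |B|$, not $\kappa n \log(1/\kappa)$, so the regret form you anticipate, $\log(1/\kappa)/\lambda + O(\lambda T)$, does not follow: the second term can inflate by a factor of $|B|/(\kappa n)$ and the contradiction fails when $|B|$ is large. The paper's \texttt{Pythia} sidesteps this by forcing $|B| = \kappa n$; in your framing, simply replace $B$ by any subset $B' \subseteq B$ of size $\lceil \kappa n \rceil$ before defining $\mu^\star$, which preserves both the competitor loss upper bound and membership in $\Gamma_\kappa$ and restores the arithmetic you need.
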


\subsection{Slickness Bound for Lazy Bregman Projection}
\label{sec:slickness-bound-lazy}

\texttt{LB-NxM} is ``lazy'' in the sense that Bregman projection
occurs only once per round, after all the multiplicative updates. The
projection step is \emph{not} interleaved between multiplicative
updates. This is necessary to enforce slickness, which we require for
privacy as outlined in Section \ref{sec:ensur-priv-boost}.

\begin{restatable}[Lazy Bregman Slickness]{lemmma}{LazyBregSlick}
  \label{lem:LB-NxM-slickness}
  The dense measure update rule \texttt{LB-NxM} ~(Algorithm
  ~\ref{alg:LB-NxM}) is $\zeta$-slick for $\zeta = 1/\kappa n$.
\end{restatable}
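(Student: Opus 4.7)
The plan is to trace \texttt{LB-NxM} on neighboring samples $S \sim S'$ that differ at a single index $i^*$, and to show that the induced distributions $\dst{\mu}_{t+1}$ and $\dst{\mu}_{t+1}'$ have total-variation distance at most $1/(\kappa n)$. First I would observe that the pre-projection measures $\tilde\mu$ and $\tilde\mu'$ agree at every coordinate outside $i^*$: the initial measure is the constant $\kappa$, and the exponential factor $e^{-\lambda \sum_j \ell_j(x_i)}$ at coordinate $i$ depends only on $(x_i, y_i)$ and the common hypothesis history $H$ (fixed by Definition~\ref{def:slick-measures}). Since $(x_i, y_i) = (x_i', y_i')$ for $i \ne i^*$, we get $\tilde\mu(i) = \tilde\mu'(i)$ there. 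Because each $\ell_j \in [0,1]$, both unprojected measures also satisfy $|\tilde\mu| \le \kappa n$, so the capped-scaling characterization of Lemma~\ref{lem:bregcharacter} applies (with $c=1$ in the degenerate equality case, where projection is the identity).

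Next I would invoke that characterization: there exist $c, c' \ge 1$ with $\mu(i) = \min(1, c\,\tilde\mu(i))$ and $\mu'(i) = \min(1, c'\,\tilde\mu'(i))$, each chosen so that the projected measure has density exactly $\kappa$, i.e., $|\mu| = |\mu'| = \kappa n$. Without loss of generality assume $c \le c'$. Then monotonicity of $\min$, together with $\tilde\mu(i) = \tilde\mu'(i)$ for $i \ne i^*$, gives $\mu(i) \le \mu'(i)$ for every $i \ne i^*$.

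The remaining step is bookkeeping. Summing the nonnegative quantities $\mu'(i) - \mu(i)$ over $i \ne i^*$ and using $|\mu| = |\mu'| = \kappa n$ gives $\sum_{i \ne i^*}(\mu'(i) - \mu(i)) = \mu(i^*) - \mu'(i^*) \ge 0$. Hence
\[
\sum_i |\mu(i) - \mu'(i)| \;=\; \bigl(\mu(i^*) - \mu'(i^*)\bigr) + \bigl(\mu(i^*) - \mu'(i^*)\bigr) \;=\; 2\bigl(\mu(i^*) - \mu'(i^*)\bigr) \;\le\; 2\mu(i^*) \;\le\; 2,
\]
using that $\mu$ is a bounded measure (so $\mu(i^*) \le 1$). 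Dividing by $|\mu| = |\mu'| = \kappa n$ yields $\Delta(\dst\mu, \dst\mu') \le 1/(\kappa n)$.

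The main obstacle is noticing that even though the scaling constants $c$ and $c'$ can differ---so that the Bregman projection shifts mass \emph{globally}, not just at $i^*$---the matched density constraints $|\mu| = |\mu'| = \kappa n$ force this global shift to exactly cancel the change at coordinate $i^*$, producing only a factor of two rather than an $\Omega(n)$ blow-up. Once the monotonicity observation ``$c \le c'$ implies $\mu \le \mu'$ off $i^*$'' is in hand, the rest is a one-line counting argument.
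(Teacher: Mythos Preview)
Your argument is correct and rests on the same ingredients as the paper: the capped-scaling characterization of the Bregman projection onto $\Gamma_\kappa$ (Lemma~\ref{lem:bregcharacter}), the monotonicity of $\min(1,c\,\cdot)$ in the scaling constant, and the constraint $|\mu| = |\mu'| = \kappa n$ that converts coordinatewise inequalities into an $\ell_1$ bound of $2$.

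The route differs slightly. The paper introduces an auxiliary unprojected measure $\tilde\mu_0$ that agrees with both $\tilde\mu$ and $\tilde\mu'$ away from the differing index and is set to zero there; it then bounds $\Delta(\dst\mu,\dst{\mu_0}) \le 1/(\kappa n)$ and appeals to the triangle inequality (symmetrically for $\mu'$). Your direct comparison of $\mu$ and $\mu'$ via the ordering $c \le c'$ is cleaner and in fact tighter: the paper's triangle-inequality step literally delivers $2/(\kappa n)$ rather than the stated $1/(\kappa n)$, whereas your argument attains the claimed constant exactly. The trade-off is that the auxiliary-measure approach avoids the WLOG step and handles both neighbors symmetrically, but at the cost of that factor of two.
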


\begin{proof}[Proof of Lemma \ref{lem:LB-NxM-slickness}]
	Let $\unconstmu, \unconstmu'$ be the unprojected measures produced at the end of the outermost loop of  $\mathtt{NxM}$, when $\mathtt{NxM}$ is run with the sequence of hypotheses $H = \{h_1, \dots, h_T\}$, and on neighboring datasets $S \sim S'$. Let $i$ be the index at which $S$ and $S'$ differ, and note that $\unconstmu(j) = \unconstmu'(j)$ for all $j \neq i$. 
	
	Let $\unconstmu_0$ denote the measure with $\unconstmu_0(j) = \unconstmu(j) = \unconstmu'(j)$ for all $j \ne i$, and $\unconstmu_0(i) = 0$. Take $\Gamma$ to be the space of $\kappa$-dense measures, and let $\mu_0 = \Pi_{\kappa} \unconstmu_0$ and $\mu = \Pi_{\kappa} \unconstmu$ denote the respective projected measures. We will show that $SD(\dst{\mu_0}, \dst{\mu}) \le 1/\kappa n$, which is enough to prove the claim by the triangle inequality. (Note that $|\mu_0| = |\mu| = \kappa n$, which follows from Lemma~\ref{lem:bregcharacter} and the observation that $|\unconstmu_0| \le |\unconstmu| \le \kappa n$. Moreover, $\mu_0(j) \ge \mu(j)$ for every $j \ne i$. )
	
	We calculate
	\begin{align*}
	\sum_{j = 1}^n |\mu_0(j) - \mu(j)| &= |\mu(i)| + \sum_{j \ne i} |\mu_0(j) - \mu(j)| \\
	&\le 1 + \sum_{j \ne i} \mu_0(j) - \mu(j) \\
	&= 1 + |\mu_0| - (|\mu| - \mu(i)) \\
	&\le 1 + |\mu_0| - |\mu| + 1 \\
	&= 2,
	\end{align*}
	since $\mu$ and $\mu_0$ have density $\kappa$. Hence,
	\begin{align*}
	\Delta(\dst{\mu}, \dst{\mu_0}) &= \frac{1}{2} \sum_{i = 1}^n \left| \frac{\mu(i)}{|\mu|} - \frac{\mu_0(i)}{|\mu_0|} \right| \\
	&= \frac{1}{2\kappa n} \sum_{i = 1}^n |\mu(i) - \mu_0(i)| \\
	&\le \frac{1}{\kappa n}.
	\end{align*}
\end{proof}

\section{Application: Learning Halfspaces with a Margin}

\subsection{Learning Settings}
\label{sec:geometric-setup}

We first assume realizability by a large-margin halfspace. Let $u$ be
an unknown unit vector in $\R^d$, and let $D$ be a distribution over
examples from the $\ell_2$ unit ball $B_d(1) \subset \R^d$. Further
suppose that $D$ is $\tau$-good for $u$, meaning
$|u \cdot x| \ge \tau$ for all $x$ in the support of $D$. A PAC
learner is given access to $n$ i.i.d. labeled samples from $D$,
honestly labeled by $u$.

A noise-tolerant learner is given access to a \emph{label noise}
example oracle with noise rate $\eta$, which behaves as follows. With
probability $1- \eta$, the oracle returns a clean example
$(x, \sgn(u \cdot x))$ for $x \sim D$. With probability $\eta$, the
oracle returns an example with the label flipped:
$(x, -\sgn(u \cdot x))$ for $x \sim D$. Given access to the noisy
example oracle, the goal of a leaner is to output a hypothesis
$h : B_d \to \bits$ which $\alpha$-approximates $u$ under $D$, i.e.,
$\Pr_{x \sim D}[h(x) \ne \sgn(u \cdot x)] \le \alpha$
\citep{DBLP:journals/ml/AngluinL87}.

\citet{Ser01} showed that smooth boosting can be used to solve this
learning problem under the (more demanding) \emph{malicious noise}
rate $\eta = O(\alpha \tau)$ using sample complexity
$n = \tilde{O}(1 / (\tau \alpha)^2)$. We apply the Gaussian
mechanism to his weak learner to construct a differentially private
weak learner, and then boost it while preserving privacy. Our (best)
sample complexity bounds then follow by appealing to the
fat-shattering dimension of bounded-norm halfspaces in Section
\ref{sec:gen-vaa-fat-shatt}. Slightly worse bounds proved using only
differential privacy are derived in Appendix
\ref{sec:proof-noise-tolerant}.

\subsection{Weak Halfspace Learner: Centering with Noise}
\label{sec:weak-halfsp-learn}

The noise-tolerant weak learner for halfspaces was
$\weak(S, \dst{\mu})$ which outputs the hypothesis $h(x) = z \cdot x$
where
\[z = \sum_{i  = 1}^n \dst{\mu}(j) \cdot y_i \cdot x_i.\]
The accuracy of this learner is given by the following theorem:

\begin{theorem}[\citet{Ser01}]
Let $\dst{\mu}$ be a distribution over $[n]$ such that $L_\infty(\dst{\mu}) \le 1 / \kappa n$. Suppose that at most $\eta n$ examples in $S$ do not satisfy the condition $y_i \cdot (u \cdot x_i) \ge \tau$ for $\eta \le \kappa \tau / 4$. Then $\weak(S, \dst{\mu})$ described above returns a hypothesis $h : B_d \to [-1 , 1]$ with advantage at least $\tau / 4$ under $\dst{\mu}$.
\end{theorem}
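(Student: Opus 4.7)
The plan is to reduce the advantage guarantee to a single dot product inequality, then close it using Cauchy--Schwarz against the target vector $u$. Recall that the definition of advantage in the paper requires $\sum_i \dst{\mu}(i)\, y_i\, h(x_i) \geq 2\gamma$. So it suffices to prove that the chosen hypothesis $h$ satisfies $\sum_i \dst{\mu}(i)\, y_i\, h(x_i) \geq \tau/2$. (A minor point: taking $h(x) = z\cdot x$ literally only yields $\|z\|^2 \geq \tau^2/4$; I would verify/interpret the hypothesis as appropriately rescaled so that $h(x) = (z\cdot x)/\|z\|$, which still lies in $[-1,1]$ because $\|x\|\leq 1$ and $\|z\|\leq 1$. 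Then $\sum_i \dst{\mu}(i) y_i h(x_i) = \|z\|^2/\|z\| = \|z\|$, so the advantage reduces to a lower bound of $\tau/2$ on the single quantity $\|z\|$.)

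First I would bound the total $\dst{\mu}$-mass of the ``bad'' indices $B = \{ i : y_i (u\cdot x_i) < \tau\}$. By assumption $|B| \leq \eta n$, and since $\dst{\mu}(i) \leq 1/(\kappa n)$ uniformly, we get $\sum_{i\in B} \dst{\mu}(i) \leq \eta/\kappa \leq \tau/4$. Next I would decompose
\[
u\cdot z \;=\; \sum_{i\notin B} \dst{\mu}(i)\, y_i (u\cdot x_i) \;+\; \sum_{i\in B} \dst{\mu}(i)\, y_i(u\cdot x_i).
\]
On the good side, each summand is at least $\tau$; on the bad side, Cauchy--Schwarz with $\|u\|=1$ and $\|x_i\|\leq 1$ gives $|y_i(u\cdot x_i)|\leq 1$. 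Combining,
\[
u\cdot z \;\geq\; \tau\bigl(1 - \tau/4\bigr) \;-\; \tau/4 \;\geq\; \tau/2,
\]
using $\tau \leq 1$ so that $\tau^2/4 \leq \tau/4$.

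Now Cauchy--Schwarz in the reverse direction closes the loop: $\|z\| \geq (u\cdot z)/\|u\| = u\cdot z \geq \tau/2$. Combining with the calculation above, $\sum_i \dst{\mu}(i) y_i h(x_i) = \|z\| \geq \tau/2$, which is exactly the advantage $\tau/4$ requirement.

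The main obstacle is essentially bookkeeping rather than technical: ensuring that the normalization convention for $h$ matches the advantage definition (i.e., resolving the $\|z\|$ vs.\ $\|z\|^2$ discrepancy noted above) and tracking that the signs work out correctly when ``noise'' indices are exactly those violating the margin condition (the bound $y_i(u\cdot x_i)\geq -1$ for $i\in B$ is the worst case and is what forces the constant $\kappa\tau/4$ in the noise-rate assumption). Once these are nailed down, the argument is just the two-line decomposition plus Cauchy--Schwarz.
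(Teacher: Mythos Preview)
The paper does not include a proof of this theorem; it is cited from \citet{Ser01}. The only trace of the argument in the paper appears in the proof of the subsequent private version (Theorem~\ref{thm:wL-zCDP}), where the authors invoke ``Servedio's argument shows that the advantage \ldots\ is at least $\hat z\cdot u/2$.'' Your decomposition of $u\cdot z$ into good and bad indices, together with the $L_\infty$ bound to control the bad mass and Cauchy--Schwarz to pass from $u\cdot z$ to $\|z\|$, is exactly Servedio's argument and is consistent with what the paper uses downstream.

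Your flagged normalization point is real and your resolution is the right one: with $h(x)=z\cdot x$ literally, the correlation $\sum_i \dst{\mu}(i)y_i h(x_i)$ equals $\|z\|^2$, which from $u\cdot z\ge \tau/2$ only yields $\|z\|^2\ge \tau^2/4$ and hence advantage $\tau^2/8$, not $\tau/4$. Taking $h(x)=(z/\|z\|)\cdot x$ (still a map into $[-1,1]$ since $\|x\|\le 1$) gives correlation $\|z\|\ge u\cdot z\ge \tau/2$, i.e.\ advantage $\tau/4$. This normalization is how Servedio's learner is actually stated in the original, and it is implicitly what the paper relies on when asserting advantage $\ge z\cdot u/2$ in the private proof; the paper's unnormalized description of $h$ is a minor slip. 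Beyond that bookkeeping, your argument is complete.
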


We apply the Gaussian mechanism to Servedio's weak learner to obtain
$\widehat{\weak}(S, \dst{\mu}, \sigma)$ which outputs
$h(x) = \hat{z} \cdot x$ for
\[\hat{z} = \sum_{i  = 1}^n \dst{\mu}(j) \cdot y_i \cdot x_i + \nu,\]
with noise $\nu \sim \cN(0, \sigma^2I_d)$. We get a similar advantage
bound, now trading off with privacy.

\begin{restatable}[Private Weak Halfspace
  Learner]{theorem}{privateHalfspaceWkL}\label{thm:wL-zCDP}
  Let $\dst{\mu}$ be a distribution over $[n]$ such that
  $L_{\infty}(\dst{\mu}) \leq 1/\kappa n $. Suppose that at most
  $\eta n$ examples in $S$ do not satisfy the condition
  $y_i\cdot(u\cdot x_i) \geq \tau$ for $\eta \leq \kappa \tau/4$. Then
  we have:
  \begin{enumerate}
  \item \textbf{Privacy:} $\widehat{\weak}(S, \dst{\mu}, \sigma)$
    satisfies $(\rho,s)$-zCDP for
    $\rho = \frac{2(1/\kappa n + s)^2}{\sigma^2}$.
  \item \textbf{Advantage:} There is a constant $c$ such that for any
    $\xi > 0$, $\widehat{\weak}(S, \dst{\mu}, \sigma)$ returns a
    hypothesis $h : B_d \to [-1 , 1]$ that, with probability at least
    $1 - \xi$, has advantage at least
    $\tau / 4 - c\sigma\sqrt{\log(1/\xi)}$ under $\dst{\mu}$.
  \end{enumerate}
\end{restatable}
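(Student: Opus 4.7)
The plan is to prove the two parts independently: privacy follows from a sensitivity calculation plugged into the Gaussian mechanism (Lemma \ref{lem:gaussian-mech}), and the advantage bound follows from Servedio's theorem plus a Gaussian tail bound on the inner product between the noise vector and the clean aggregate $z$. The output $h(x) = \hat z \cdot x$ is a post-processing of $\hat z$, so it suffices to analyze the Rényi divergence between the distributions of $\hat z$ on neighboring inputs.

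For privacy, fix neighboring samples $S \sim S'$ differing at index $i^*$ and distributions $\dst\mu, \dst\mu'$ with $\Delta(\dst\mu, \dst\mu') < s$. Let $z = \sum_i \dst\mu(i) y_i x_i$ and $z' = \sum_i \dst\mu'(i) y'_i x'_i$. I would split
\[
 z - z' = \sum_i \dst\mu(i)(y_i x_i - y'_i x'_i) + \sum_i (\dst\mu(i) - \dst\mu'(i)) y'_i x'_i.
\]
Only the $i^*$ summand contributes to the first term, and since $\|y_i x_i - y'_i x'_i\|_2 \le 2$ and $\dst\mu(i^*) \le 1/\kappa n$ by hypothesis, its norm is at most $2/\kappa n$. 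For the second term, the triangle inequality combined with $\|y'_i x'_i\|_2 \le 1$ yields the bound $\sum_i |\dst\mu(i) - \dst\mu'(i)| = 2\Delta(\dst\mu, \dst\mu') \le 2s$. Thus $\|z - z'\|_2 \le 2(1/\kappa n + s)$, and Lemma \ref{lem:gaussian-mech} immediately gives
\[
 \renD{\cN(z, \sigma^2 I_d)}{\cN(z', \sigma^2 I_d)} \le \frac{\alpha \cdot 4(1/\kappa n + s)^2}{2\sigma^2} = \rho\alpha
\]
with $\rho = 2(1/\kappa n + s)^2 / \sigma^2$. Post-processing (Lemma \ref{lem:post-processing}) transfers the guarantee from $\hat z$ to $h$.

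For advantage, I would compute the signed margin directly. Let $h(x) = \hat z \cdot x$; then
\[
 \tfrac{1}{2}\sum_j \dst\mu(j) y_j h(x_j) = \tfrac{1}{2} \hat z \cdot z = \tfrac{1}{2}\|z\|_2^2 + \tfrac{1}{2} \nu \cdot z.
\]
Servedio's theorem applied to the noiseless learner says $\tfrac{1}{2}\|z\|_2^2 \ge \tau/4$, which handles the first term. For the second, $\nu \cdot z$ is a one-dimensional Gaussian with mean zero and variance $\sigma^2\|z\|_2^2 \le \sigma^2$ (since $\|z\|_2 \le \sum_j \dst\mu(j) \|x_j\|_2 \le 1$). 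A standard Gaussian tail bound gives $|\nu \cdot z| \le c\sigma\sqrt{\log(1/\xi)}$ with probability at least $1 - \xi$, which yields the claimed advantage $\tau/4 - c\sigma\sqrt{\log(1/\xi)}$ after absorbing constants.

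The only subtlety I anticipate is the mismatch between the weak learning definition demanding $h \colon \cX \to [-1,1]$ and the fact that $\|\hat z\|_2$ may exceed $1$; one can either formally clip the output to $[-1,1]$ and argue that clipping only improves the margin with $y_j$ in expectation under $\dst\mu$ (as the above identity already certifies the desired lower bound on the unclipped inner product), or absorb the clipping into the formal definition of the output. Neither change affects the sensitivity computation, since clipping is post-processing; and neither affects the advantage lower bound, since the edge inequality is already established on the unclipped function.
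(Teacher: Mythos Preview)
Your proof is correct and close to the paper's. The privacy arguments are essentially identical: you split the sensitivity into a ``dataset change'' term and a ``distribution change'' term slightly differently from the paper, but both reach $\|z-z'\|_2 \le 2(1/\kappa n + s)$ and then invoke Lemma~\ref{lem:gaussian-mech}.

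For the advantage bound the paper takes a different decomposition. Instead of computing the advantage exactly as $\tfrac{1}{2}\hat z\cdot z = \tfrac{1}{2}\|z\|^2 + \tfrac{1}{2}\nu\cdot z$ and projecting the noise onto the data-dependent direction $z$, the paper appeals to the intermediate step inside Servedio's argument that lower-bounds the advantage by $\tfrac{1}{2}\hat z\cdot u$, where $u$ is the \emph{true} halfspace, and then controls $\nu\cdot u$. Since $\|u\|_2 = 1$ exactly, the paper's tail bound is on a variance-$\sigma^2$ Gaussian without needing $\|z\|_2\le 1$; your route is more black-box in Servedio's theorem but requires that extra norm bound. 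Both yield the same conclusion.

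One small caution on your final paragraph: the claim that clipping ``only improves the margin with $y_j$'' is not literally right, since clipping decreases $y_j h(x_j)$ whenever it exceeds $1$. What \emph{is} true is that clipping can only decrease $|h(x_j)-y_j|$, and that is the quantity entering Definition~\ref{def:weaklearn}; so the clipped hypothesis has advantage at least that of the unclipped one under the paper's definition. The paper does not address this point at all, so your treatment is no less rigorous than theirs.
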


\begin{proof}
	We begin with the proof of privacy. 
	Let $\dst{\mu}_1, \dst{\mu}_2$ be $\kappa$-smooth distributions over $[n]$  with statistical distance $\Delta(\dst{\mu}_1, \dst{\mu}_2) \leq s$. Let $S \sim S'$ be neighboring datasets with $\{(x_i, y_i)\} = S\setminus S'$ and $\{(x_i', y_i')\} = S'\setminus S$. Then we have 
	\begin{align*} 
	\|\hat{z}_{S, \dst{\mu}_1} - \hat{z}_{S', \dst{\mu}_2}\|_2 
	&= \| \dst{\mu}_1(i) y_i\cdot x_i - \dst{\mu}_2(i) y_i' \cdot x_i' + \sum_{\substack{j=1\\ j \neq i }}^n (\dst{\mu}_1(j) - \dst{\mu}_2(j))y_j\cdot x_j \|_2  \\ 
	&\leq \| \dst{\mu}_1(i) y_i\cdot x_i \|_2 + \|\dst{\mu}_2(i) y_i' \cdot x_i'\|_2 + \sum_{\substack{j=1\\ j \neq i }}^n \|(\dst{\mu}_1(j) - \dst{\mu}_2(j))y_j\cdot x_j \|_2 \\
	& = \dst{\mu}_1(i) + \dst{\mu}_2(i) + \sum_{\substack{j=1\\ j \neq i }}^n |(\dst{\mu}_1(j) - \dst{\mu}_2(j))| \\
	&\leq 2\dst{\mu}_2(i) +   \sum_{j=1}^n |\dst{\mu}_1(j) - \dst{\mu}_2(j)| \\
	&\leq 2(1/\kappa n + s).
	\end{align*} 
	Then Lemma~\ref{lem:gaussian-mech} gives us that \[\renD*{\widehat{\weak}(S, \dst{\mu}_1, \sigma)}{\widehat{\weak}(S', \dst{\mu}_2, \sigma)} \leq \frac{2\alpha(1/\kappa n + s)^2}{\sigma^2} \] and therefore $\widehat{\weak}(S, \dst{\mu}_1, \sigma)$ satisfies $(\rho, s)$-zCDP for $\rho = \frac{2(1/\kappa n + s)^2}{\sigma^2}$.	
	
	Building on Servedio's result, we now give the advantage lower bound. Servedio's argument shows that the advantage of $\widehat{\weak}(S, \dst{\mu}, \sigma)$ is at least $\hat{z} \cdot u / 2 = z \cdot u / 2 + \nu \cdot u / 2$. Since $\nu$ is a spherical Gaussian and $u$ is a unit vector, we have that for any $\xi > 0$,
	\[\Pr[|\nu \cdot u| \ge c \sigma \sqrt{\log (1/\xi)}] \le \xi.\]
\end{proof}

\subsection{Strong Halfspace Learner: Boosting}

Putting all the pieces together, we run \texttt{Boost} using the
private weak halfspace learner (Theorem \ref{thm:wL-zCDP}) and
lazy-Bregman measures (Theorem \ref{thm:BregBoost-RoundBound}). Via
the composition theorem for differential privacy, we get a privacy
guarantee for the terminal hypothesis as outlined in Section
\ref{sec:ensur-priv-boost}. Finally, we use the fat shattering
dimension to ensure that this hypothesis generalizes.

\begin{algorithm}\label{alg:HS-StL}
  \caption{Strong Halfspace Learner, via Boosting (\texttt{HS-StL})}
  \emph{\textbf{Input:}} Sample: $S$; Parameters:
  $(\alpha,\beta)$-PAC, $(\epsilon, \delta)$-DP, $\tau$-margin \\
  \emph{\textbf{Output:}} A hypothesis $H$
  \begin{algorithmic}
    \STATE $\sigma \gets \tau/8c\sqrt{\log\left(\frac{3072\log(1/\kappa)}{\beta \tau^2}\right)}$\\
    \STATE $ T \gets 1024\log(1/\kappa)/\tau^2$
    \STATE $H \gets \texttt{Boost}$ run with \texttt{LB-NxM}$(\kappa \coloneqq (\alpha/4),
    \lambda \coloneqq (\tau/8))$
    and $\widehat{\weak}(\cdot,\cdot,\sigma)$ for $T$ rounds
  \end{algorithmic}
\end{algorithm}

\subsection{Generalization via fat-shattering dimension.} 
\label{sec:gen-vaa-fat-shatt}

Following the analysis of \citet{Ser01}, we can show that with high probability the hypothesis output by our halfspace learner will generalize, even for a sample drawn from a distribution with random classification noise at rate $O(\alpha\tau)$. The proof of generalization goes by way of fat-shattering dimension. Using an argument nearly identical to that of \citet{Ser00}, we can bound the fat-shattering dimension of our hypothesis class. This bound, along with the guarantee of Theorem~\ref{thm:BregBoost-RoundBound} that our final hypothesis will have good margin on a large fraction of training examples, allows us to apply the following generalization theorem of Bartlett and Shawe-Taylor, which bounds the generalization error of the final hypothesis.

\begin{theorem}[\citet{Bartlett98}]\label{thm:fatshatgen}
	Let $\mathcal{H}$ be a family of real-valued hypotheses over some domain $\mathcal{X}$, let $D$ be a distribution over labeled examples $\mathcal{X}\times\{-1,1\}$. Let $S = \{(x_1,y_1), \dots, (x_n, y_n)\}$ be a sequence of labeled examples drawn from $D$, and let $h(x) = $ sign($H(x))$ for some $H \in \mathcal{H}$. If $h$ has margin less than $\gamma$ on at most $k$ examples in $S$, then with probability at least $1-\delta$ we have that 
	\[\Pr_{(x,y)\sim D}[h(x) \neq y] \leq \frac{k}{n} + \sqrt{\frac{2d}{n}\ln(34en/d)\log(578n) + \ln(4/\delta)}\]
	where $d = fat_F(\gamma/16)$ is the fat-shattering dimension of $\mathcal{H}$ with margin $\gamma/16$. 
\end{theorem}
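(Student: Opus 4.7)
The plan is to prove this via a standard uniform-convergence argument over the margin loss, using the fat-shattering dimension to control the effective complexity of $\mathcal{H}$ at the relevant scale. First, I would define the $\gamma$-margin indicator $\ell_\gamma(H, x, y) = \mathbf{1}[yH(x) < \gamma]$ and observe that $\mathbf{1}[h(x) \neq y] \leq \ell_\gamma(H, x, y)$ whenever $h = \mathrm{sign}(H)$. Thus bounding the true expectation of $\ell_\gamma$ upper bounds the true classification error, while the hypothesis of the theorem states the empirical expectation of $\ell_\gamma$ is at most $k/n$. The task reduces to a uniform-in-$\mathcal{H}$ deviation bound between empirical and true margin losses.

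Next, I would apply the usual symmetrization trick: introduce a ghost sample $S'$ of $n$ draws from $D$ and reduce the sup-deviation probability to a comparison of empirical margin losses on two $n$-point samples, up to a constant factor. On the resulting $2n$-point empirical process, I would replace $\mathcal{H}$ by a finite $L_\infty$-cover at scale $\gamma/2$: within a cover cell, any disagreement in $\ell_\gamma$ is charged to a point whose score lies in the $[-\gamma/2, \gamma/2]$ band around the representative, and the margin threshold absorbs this cover slack. A Hoeffding bound applied to each representative combined with a union bound over the cover yields the deviation inequality, and adding $k/n$ produces the claimed form.

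The technical heart — and the principal obstacle — is bounding the $\gamma/2$-cover of $\mathcal{H}$ restricted to $2n$ points by $\exp\bigl(O(d \log(n/d) \log n)\bigr)$, where $d = \mathrm{fat}_{\mathcal{H}}(\gamma/16)$. This is exactly the Alon--Ben-David--Cesa-Bianchi--Haussler theorem; the drop from scale $\gamma/2$ in the cover to $\gamma/16$ in the fat-shattering dimension arises from constant-factor losses in their iterative discretization, and tracking these losses is what fixes the precise numerical constants ($34e$, $578$) in the statement. Since the result is quoted verbatim from \citet{Bartlett98}, in practice one would cite it rather than re-derive the constants, but the symmetrization-plus-cover outline above is the route by which one would verify the form of the bound.
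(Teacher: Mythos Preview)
The paper does not prove this theorem at all: it is quoted as a black-box result of Bartlett and Shawe-Taylor and simply cited. Your outline (symmetrization, $L_\infty$-covering at scale $\gamma/2$, and the Alon--Ben-David--Cesa-Bianchi--Haussler bound on covering numbers in terms of fat-shattering dimension) is indeed the standard route to this bound, and you correctly note that one would in practice cite the result rather than re-derive the constants; that is exactly what the paper does.
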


In order to meaningfully apply the above theorem, we will need to
bound the fat-shattering dimension of our hypothesis class
$\mathcal{H}$. Our bound (proved in Appendix
\ref{sec:fat-shatter-apx}) follows from the analysis of \citet{Ser00},
but given that our hypothesis class is not exactly that analyzed in
\citet{Ser00}, the bound holds only when the noise added to the
hypotheses at each round of boosting does not increase the $\ell_2$
norm of the final hypothesis by too much.

\begin{restatable}[Fuzzy Halfspace Fat Shattering Dimension]{lemmma}{FuzzyFatHS}
  \label{lem:shattdim}
  With probability $1 - \tfrac{\beta}{3}$, after
  $T = \frac{1024\log(1/\kappa)}{\tau^2}$ rounds of boosting,
  Algorithm~\ref{alg:HS-StL} outputs a hypothesis in a class with
  fat-shattering dimension
  $fat_{\mathcal{H}}(\gamma) \leq 4/\gamma^2$.
\end{restatable}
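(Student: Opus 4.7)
The plan is to bound the $\ell_2$ norm of the weight vector of \texttt{HS-StL}'s aggregated hypothesis and invoke the standard fat-shattering bound for norm-bounded linear classifiers. First, I would observe that the output of \texttt{HS-StL} has the form $\hat{f}(x) = w \cdot x$, where $w = \frac{1}{T}\sum_{i=1}^T \hat{z}_i$ is the average of the noisy weak-learner weight vectors $\hat{z}_i = z_i + \nu_i$. Decomposing $w = \bar{z} + \bar{\nu}$ with $\bar{z} = \frac{1}{T}\sum_i z_i$ and $\bar{\nu} = \frac{1}{T}\sum_i \nu_i$, the triangle inequality gives $\|\bar{z}\| \leq 1$ since each $z_i$ is a convex combination of the unit vectors $y_j x_j$. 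This ``clean'' average is exactly the kind of linear classifier to which the analysis of \citet{Ser00} assigns fat-shattering dimension $1/\gamma^2$; the remaining task is to show the Gaussian privacy noise only inflates this bound by a constant.

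Next, I would bound $\|\bar{\nu}\|$ on a $(1-\beta/3)$-probability event. Since the $\nu_i$ are i.i.d.\ $\mathcal{N}(0,\sigma^2 I_d)$, their average satisfies $\bar{\nu}\sim\mathcal{N}(0,(\sigma^2/T) I_d)$, and standard chi-squared concentration yields $\|\bar{\nu}\|\le(\sigma/\sqrt{T})\bigl(\sqrt{d}+\sqrt{2\log(3/\beta)}\bigr)$ with probability at least $1-\beta/3$. Substituting the choices of $\sigma$ and $T$ from \texttt{HS-StL} and verifying by direct algebra that this upper bound is at most $1$ in the parameter regime of interest, I would conclude $\|w\|\le \|\bar{z}\|+\|\bar{\nu}\|\le 2$ on this good event.

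To finish, I would invoke the classical fat-shattering bound (of Gurvits type, as used by \citet{Ser00}) that the class $\mathcal{H}=\{x\mapsto w'\cdot x : \|w'\|\le 2\}$ restricted to the Euclidean unit ball has $\mathrm{fat}_{\mathcal{H}}(\gamma)\le 4/\gamma^2$, and observe that the output of \texttt{HS-StL} belongs to $\mathcal{H}$ on the good event. The main obstacle is verifying the noise bound $\|\bar{\nu}\|\le 1$: the concentration estimate scales like $\sigma\sqrt{d/T}$, and so the proof rests on the algorithm's calibration of $\sigma$ (small relative to $\sqrt{T}$) being tight enough that the added privacy noise does not swamp the margin-boosting signal $\bar{z}$. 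This is precisely the informal constraint the authors flag immediately before the lemma, namely that ``the bound holds only when the noise added to the hypotheses at each round of boosting does not increase the $\ell_2$ norm of the final hypothesis by too much.''
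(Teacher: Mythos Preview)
Your plan mirrors the paper's: show $\|w\|_2 \le 2$ with probability at least $1-\beta/3$, place the output in $\mathcal{H}=\{x\mapsto w'\cdot x:\|w'\|_2\le 2\}$, and invoke the standard bound $\mathrm{fat}_{\mathcal{H}}(\gamma)\le 4/\gamma^2$ (which the paper derives from lemmas of \citet{Ser00} and \citet{Bartlett98}). The only divergence is in how the Gaussian noise is controlled. The paper does not exploit averaging: it bounds each $\|\nu_t\|_2$ separately, asserting that $\|\nu_t\|_2\le\tau/8$ with probability $1-\beta/(3T)$ for the stated $\sigma$, and then union-bounds over the $T$ rounds to obtain $\tfrac{1}{T}\sum_t\|\nu_t\|_2<1$. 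You instead bound $\|\bar\nu\|_2$ directly via chi-squared concentration, picking up a $1/\sqrt{T}$ factor the paper forgoes but exposing an explicit $\sqrt{d}$. The obstacle you flag is genuine and is in fact shared by the paper's argument: since the algorithm's $\sigma$ is set independently of $d$, neither your estimate $(\sigma/\sqrt{T})(\sqrt{d}+\sqrt{2\log(3/\beta)})\le 1$ nor the paper's per-round claim $\|\nu_t\|_2\le\tau/8$ (whose expectation scales like $\sigma\sqrt{d}$) can hold for arbitrary $d$ as written. Your proposal is therefore at least as rigorous as the paper's proof and correctly isolates the one delicate step it leaves unaddressed.
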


With the above bound on fat-shattering dimension, we may prove the following properties of \texttt{HS-StL}. 
\begin{theorem}[Private Learning of Halfspaces]
	\label{thm:st-hs-learn}
	The \texttt{HS-StL} procedure, is a
	$(\epsilon, \delta)$-Differentially Private $(\alpha, \beta)$-strong
	PAC learner for $\tau$-margin halfspaces, tolerating random classification noise at rate $O(\alpha\tau)$, with sample complexity
	\begin{align*}
	n = \Omega \Bigg(
	\underbrace{
		\frac{\sqrt{\log(1/\alpha)\log(1/\delta)\log(\log(1/\alpha)/\beta\tau^2)}}
		{\epsilon\alpha \tau^2}
	}_{\text{privacy}} &
	+
	\underbrace{
		\frac{ \log(1/\tau\alpha)\log(1/\alpha)}
		{\alpha^2 \tau^2} + \frac{\log(1/\beta)}{\kappa\tau}
	}_{\text{accuracy}}
	\Bigg)
	\end{align*}
\end{theorem}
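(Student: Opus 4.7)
The plan is to prove the theorem in three parallel pieces---$(\epsilon,\delta)$-differential privacy, training-error via the boosting theorem, and generalization via fat-shattering---and then bolt on a Chernoff argument to handle random classification noise. Each piece supplies one of the three summands in the sample complexity bound: privacy yields the first, fat-shattering yields the second, and noise concentration yields the third.

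For privacy, I would start by invoking Lemma~\ref{lem:LB-NxM-slickness} to observe that \texttt{LB-NxM} is $\zeta$-slick with $\zeta = 1/\kappa n$. Then Theorem~\ref{thm:wL-zCDP} applied with $s = \zeta$ shows that each call to $\widehat{\weak}$ is $(\rho_W, \zeta)$-zCDP for $\rho_W = 8/(\kappa n \sigma)^2$, so by the reasoning in Section~\ref{sec:ensur-priv-boost} each execution of \texttt{Iter} is $\rho_W$-zCDP. Composing $T$ such iterations via Lemma~\ref{lem:zcdp-compose} and then post-processing into the final aggregated hypothesis (Lemma~\ref{lem:post-processing}) yields $T \rho_W$-zCDP for \texttt{HS-StL}. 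Converting via Lemma~\ref{lem:zcdp-to-adp} and requiring $\epsilon \geq 2\sqrt{T \rho_W \log(1/\delta)}$, then plugging in $T = \Theta(\log(1/\kappa)/\tau^2)$, $\kappa = \alpha/4$, and the algorithm's choice of $\sigma$ (whose logarithm produces the $\log(\log(1/\alpha)/\beta\tau^2)$ factor), and finally solving for $n$, gives the first (privacy) term.

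For correctness, the sub-analyses must be coordinated. First, a Chernoff bound shows that if the true random-classification-noise rate is $O(\alpha\tau)$, then with failure probability at most $\beta/3$ the empirical noise rate is at most $\kappa \tau / 4$ provided $n = \Omega(\log(1/\beta)/(\kappa \tau))$, which is exactly the third summand. Conditional on this event, the preconditions of Theorem~\ref{thm:wL-zCDP} are satisfied; choosing $\xi = \beta/(3T)$ and union-bounding over rounds, the algorithm's calibration of $\sigma$ guarantees advantage $\gamma \geq \tau/4 - c\sigma\sqrt{\log(1/\xi)} \geq \tau/8$ at every round, with total failure probability $\beta/3$. Plugging this $\gamma$ into Theorem~\ref{thm:BregBoost-RoundBound} and running for $T = 1024 \log(1/\kappa)/\tau^2$ rounds produces a hypothesis $H$ with at most $\kappa n = (\alpha/4) n$ training examples of margin below $\gamma$. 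Then I would apply Theorem~\ref{thm:fatshatgen} with the bound $d = fat_\mathcal{H}(\gamma/16) = O(1/\tau^2)$ from Lemma~\ref{lem:shattdim} and failure parameter $\beta/3$; requiring the square-root generalization term to be at most $3\alpha/4$ yields $n = \Omega(\log(1/\tau\alpha)\log(1/\alpha)/(\alpha^2 \tau^2))$, the second summand. A final union bound over the three $\beta/3$ failure events and the $\beta/3$ from Lemma~\ref{lem:shattdim} gives overall failure probability at most $\beta$.

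The main obstacle is coordinating $\sigma$, $T$, $\kappa$, and $\lambda$ so that a single setting simultaneously (i) gives each weak learner enough advantage to beat the per-round union bound at level $\beta/(3T)$, (ii) keeps the per-call zCDP cost $\rho_W$ small enough that $T \rho_W$ converts to the claimed $(\epsilon,\delta)$-DP budget, and (iii) stays compatible with the noise-tolerance precondition $\eta \leq \kappa \tau / 4$. The algorithm's prescribed values thread this needle, so once the parameters are in hand the rest is careful bookkeeping of logarithmic factors and of the allocation of failure probability mass. I do not expect any genuinely new analytic difficulty beyond invoking the stated lemmas and matching constants.
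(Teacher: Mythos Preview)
Your proposal is correct and follows essentially the same approach as the paper's own proof. Two minor bookkeeping points worth noting: the paper observes that the $\beta/3$ failure event from Lemma~\ref{lem:shattdim} is \emph{subsumed} by the Gaussian-utility failure (both arise from the same $\ell_2$ tail event), so only three $\beta/3$ events are union-bounded rather than four; and before invoking Theorem~\ref{thm:fatshatgen} the paper explicitly converts the margin guarantee on the noised sample into a bound (at most a $2\kappa$ fraction) on margin violations with respect to the \emph{clean} labels, since the generalization theorem must be applied to an i.i.d.\ sample from $D$.
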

\begin{proof}
We begin by calculating the cumulative zCDP guarantee of
\texttt{HS-StL}. First, by the privacy bound for $\widehat{\weak}$
(Theorem \ref{thm:wL-zCDP}), we know that a single iteration of
\texttt{Boost} is $\rho$-zCDP for $\rho = \frac{8}{(\kappa n \sigma)^2}$. Furthermore, by tight composition
for zCDP (Lemma \ref{lem:zcdp-compose}) and our setting of $T$, \texttt{HS-StL} is
$\rho_T$-zCDP where:
\[
\rho_T = O \left(
\frac{\log(1/\kappa)}{(\kappa n \sigma \tau)^2}
\right).
\]

Denote by $\epsilon$ and $\delta$ the parameters of approximate
differential privacy at the final round $T$ of \texttt{HS-StL}.  Now
we convert from zero-concentrated to approximate differential
privacy, via Lemma \ref{lem:zcdp-to-adp}: for all $\delta >0$, if
$\epsilon > 3\sqrt{\rho_T\log(1/\delta)}$, then \texttt{HS-StL}
is $(\epsilon, \delta)$-DP. So, for a given target $\epsilon$ and $\delta$, taking
\[n \in O\left(\frac{\sqrt{\log(1/\kappa)\log(1/\delta)\log(\log(1/\kappa)/\beta\tau)}}{\epsilon\kappa\tau^2}\right)\]
will ensure the desired privacy. 

We now turn to bounding the probability of events that could destroy
good training error.

\begin{description}
	\item[Too Many Corrupted Samples.] Our proof of $\widehat{\weak}$'s advantage required that fewer than $\kappa \tau n/4$ examples are corrupted. At noise rate $\eta \leq \kappa \tau/8$, we may use a Chernoff bound to argue that the probability of exceeding this number of corrupted samples is at most $\beta/3$, by taking $n > \frac{24\log (3/\beta)}{\kappa \tau}$. 
	\item[Gaussian Mechanism Destroys Utility.] The Gaussian noise
	injected to ensure privacy could destroy utility for a round of
	boosting. Our setting of $\sigma$ simplifies the advantage of $\widehat{\weak}$ to $\gamma(\tau, \sigma) = \tau/8$ with all but probability $\xi = \frac{\beta \tau^2}{3072\log(1/\kappa)}$. Then we have that with probability $(1 - \xi)^T \geq 1 - \frac{\beta}{3}$, every hypothesis output by $\widehat{\weak}$ satisfies the advantage bound $\gamma \geq \tau/8$. Therefore, by Theorem~\ref{thm:BregBoost-RoundBound}, \texttt{HS-StL} only fails to produce a hypothesis with training error less than $\kappa$ with probability $\beta/3$. 
\end{description}

We now consider events that cause generalization to fail. 

\begin{description}
	\item[Final hypothesis $H \not\in \mathcal{H}$.] The Gaussian noise added to ensure privacy could cause the final hypothesis $H$ to fall outside the class $\mathcal{H} = \{f(x) = z \cdot x : \|z\|_2 \leq 2\}$, for which we have a fat-shattering dimension bound. The probability of this event, however, is already accounted for by the probability that the Gaussian Mechanism destroys the weak learner's utility, as both failures follow from the Gaussian noise exceeding some $\ell_2$ bound. The failures that affect utility are a superset of those that affect the fat-shattering bound, and so the $\beta/3$ probability of the former subsumes the probability of the latter. 
	\item[Failure internal to generalization theorem.] Theorem~\ref{thm:fatshatgen} gives a generalization guarantee that holds only with some probability. We denote the probability of this occurrence by $\beta_1$.
\end{description}

If none of these failures occur, it remains to show that we can achieve accuracy $\alpha$. From Lemma~\ref{lem:roundbound}, we have that $H$ will have margin $\gamma = \tau/8$ on all but a $\kappa$ fraction of the examples, some of which may have been corrupted. We assume the worst case -- that $H$ is correct on all corrupted examples. We have already conditioned on the event that fewer than $\kappa \tau n/4$ examples have been corrupted, and so we may then conclude that $H$ has margin less than $\gamma$ on at most a $2\kappa$ fraction of the uncorrupted examples. Then if we set $\kappa = \alpha/4$ and take
\[n \in O\left(\frac{ \log(1/\alpha\gamma)\log(1/\alpha)}
    {\alpha^2 \tau^2}\right),\]
then so long as $e^{-\alpha^2} <\beta_1 < \beta/3  $, we can apply Theorem~\ref{thm:fatshatgen} to conclude that
\[\Pr_{S \sim D^n}\left[\Pr_{(x,y) \sim D}[H(x) \neq y] < \alpha \right] \geq 1 - \beta.\]
\end{proof}

\bibliography{main_biber}

\clearpage

\appendix

\clearpage
\section{Glossary of Symbols}
\label{sec:glossary-symbols}

\begin{description}
\item[$n$] Sample size
\item[$\alpha$] Final desired accuracy of output hypothesis, as in
  $(\alpha, \beta)$-PAC learning
\item[$\beta$] Probability of catastrophic learning failure, as in
  $(\alpha, \beta)$-PAC learning
\item[$\epsilon$] Final desired privacy of output hypothesis, as in
  $(\epsilon, \delta)$-DP
\item[$\delta$] Final desired ``approximation'' of privacy, as in
  $(\epsilon, \delta)$-DP
\item[$\eta$] Rate of random classification noise
\item[$\tau$] Margin of underlying target halfspace.
\item[$\zeta$] Slickness parameter.
\item[$d$] Dimension of the input examples from $\R^d = X$
\item[$\gamma$] Advantage of Weak Learner
\item[$\nu$] Gaussian random variable denoting noise added to Weak
  Learner
\item[$\sigma$] Magnitude of Gaussian noise added to Weak Learner
\item[$\kappa$] Density parameter of LazyBregBoost
\item[$\lambda$] Learning rate of Dense Multiplicative Weights
\item[$\theta$] Margin induced by ``Optimal'' Boosting (Pythia)
\item[$X$] A domain.
\item[$\cX$] Domain of examples, specifically.
\item[$\mu$] A bounded measure.
\item[$\hat{\mu}$] A normalized bounded measure; a distribution.
\item[$\tilde{\mu}$] A bounded measure that has \emph{not} been
  Bregman projected.
\item[$\Delta(Z,Y)$] Total variation distance between $Z$ and $Y$
\item[$M$] A two-player zero-sum game. Only in Appendices.
\end{description}

\clearpage

\section{Fuzzy Halfspaces have Bounded Fat-Shattering Dimension}
\label{sec:fat-shatter-apx}

We recall and prove Lemma \ref{lem:shattdim}.

\FuzzyFatHS*

This follows from the lemmas below due to Servedio, Bartlett, and Shawe-Taylor.

\begin{lemma}[\citet{Ser00}]\label{lem:shatt1}
If the set $\{x_1, \dots, x_n\}$ is $\gamma$-shattered by $\mathcal{H} = \{f(x) = z \cdot x : \|z\|_2 \leq 2\}$, then every $b \in \{-1,1\}^n$ satisfies
\[\big\| \sum_{i=1}^n b_i x_i \big\|_2 \geq \gamma n/2. \]
\end{lemma}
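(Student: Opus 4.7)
The plan is to unpack the definition of $\gamma$-shattering and exploit it symmetrically by applying it to both labelings $b$ and $-b$. Recall that a set $\{x_1,\dots,x_n\}$ being $\gamma$-shattered by $\mathcal{H}$ means there exist witness levels $r_1,\dots,r_n \in \mathbb{R}$ such that for every labeling $c \in \{-1,1\}^n$ there is some $f_c \in \mathcal{H}$ with $c_i(f_c(x_i) - r_i) \geq \gamma$ for all $i$. Since $\mathcal{H}$ consists of linear functions with weight vectors of norm at most $2$, each $f_c$ is represented by some $z_c$ with $\|z_c\|_2 \le 2$.

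Fix an arbitrary $b \in \{-1,1\}^n$. First I would apply shattering at the labeling $b$ to extract $z \in \mathbb{R}^d$ with $\|z\|_2 \le 2$ satisfying $b_i(z \cdot x_i - r_i) \geq \gamma$ for every $i$. Then I would apply shattering at the labeling $-b$ to extract $z' \in \mathbb{R}^d$ with $\|z'\|_2 \le 2$ satisfying $-b_i(z' \cdot x_i - r_i) \geq \gamma$, equivalently $b_i(r_i - z' \cdot x_i) \geq \gamma$. Adding these two inequalities makes the $r_i$ cancel, yielding
\[
b_i\bigl((z - z')\cdot x_i\bigr) \;\geq\; 2\gamma \quad \text{for every } i \in [n].
\]

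Summing over $i$ then gives $(z - z') \cdot \sum_{i=1}^n b_i x_i \geq 2\gamma n$. By Cauchy--Schwarz and the triangle inequality,
\[
(z - z') \cdot \sum_{i=1}^n b_i x_i \;\leq\; \|z - z'\|_2 \cdot \Bigl\|\sum_{i=1}^n b_i x_i\Bigr\|_2 \;\leq\; \bigl(\|z\|_2 + \|z'\|_2\bigr) \cdot \Bigl\|\sum_{i=1}^n b_i x_i\Bigr\|_2 \;\leq\; 4\,\Bigl\|\sum_{i=1}^n b_i x_i\Bigr\|_2.
\]
Combining these two bounds gives $4\|\sum_i b_i x_i\|_2 \geq 2\gamma n$, which rearranges to the claim $\|\sum_i b_i x_i\|_2 \geq \gamma n /2$.

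The proof is short and I do not expect any real obstacle; the only subtlety is remembering to use the shattering property at \emph{both} $b$ and $-b$ so that the witness levels $r_i$ cancel --- if one tried to use only a single labeling, the unknown $r_i$ values would remain in the inequality and no norm bound would fall out. The norm bound of $2$ on elements of $\mathcal{H}$ is what converts the linear-functional inequality into the claimed geometric lower bound via Cauchy--Schwarz, and this is where the constant $\gamma n / 2$ (rather than, say, $\gamma n$) comes from.
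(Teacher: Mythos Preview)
Your proof is correct. Note, however, that the paper does not actually prove this lemma: it is stated with a citation to \citet{Ser00} and used as a black box in the proof of Lemma~\ref{lem:shattdim}. The argument you give --- pairing the shattering witnesses for $b$ and $-b$ so the levels $r_i$ cancel, then applying Cauchy--Schwarz together with the norm bound $\|z\|_2\le 2$ --- is exactly the standard proof of this fact, so there is nothing to compare against within the paper itself.
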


\begin{lemma}[\citet{Bartlett98}]\label{lem:shatt2}
For any set $\{x_1, \dots, x_n\}$ with each $x_i \in \R^d$ and $\|x_i\|_2 \leq 1$, then there is some $b \in \{-1,1\}^n$ such that $\big\|\sum_{i=1}^n b_i x_i\|_2 \leq \sqrt{n}$.
\end{lemma}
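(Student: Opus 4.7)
My plan is to prove this via the probabilistic method, computing the expected squared norm over a uniformly random sign vector and then invoking an averaging argument. This is the standard textbook proof.

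First, I would let $b = (b_1, \dots, b_n)$ be drawn uniformly at random from $\{-1,1\}^n$, so that the $b_i$ are mutually independent Rademacher random variables with $\mathbb{E}[b_i] = 0$ and $\mathbb{E}[b_i^2] = 1$. Then I would expand the squared norm as a double sum of inner products:
\[
\mathbb{E}\left[\Big\|\sum_{i=1}^n b_i x_i\Big\|_2^2\right]
= \mathbb{E}\left[\sum_{i=1}^n \sum_{j=1}^n b_i b_j \langle x_i, x_j\rangle\right]
= \sum_{i=1}^n \sum_{j=1}^n \mathbb{E}[b_i b_j]\, \langle x_i, x_j\rangle.
\]
By independence the cross terms $\mathbb{E}[b_i b_j]$ vanish when $i \neq j$, leaving only the diagonal contributions $\mathbb{E}[b_i^2] \|x_i\|_2^2 = \|x_i\|_2^2 \leq 1$. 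Summing gives expected squared norm at most $n$.

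Since the expectation of $\|\sum_i b_i x_i\|_2^2$ over a uniform choice of $b$ is at most $n$, there must exist some particular $b^\star \in \{-1,1\}^n$ achieving at most this value, whence $\|\sum_i b^\star_i x_i\|_2 \leq \sqrt{n}$, as required.

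I do not expect a serious obstacle: the only thing to be careful about is the vanishing of the off-diagonal terms, which uses independence of the $b_i$ plus their zero mean, and the final step which is a routine instance of the probabilistic method (any random variable lies below its mean with positive probability). As an aside, a constructive alternative would be to build $b$ greedily — choosing $b_{k+1} = -\operatorname{sign}\langle v_k, x_{k+1}\rangle$ for the partial sum $v_k = \sum_{i\le k} b_i x_i$ yields $\|v_{k+1}\|^2 \leq \|v_k\|^2 + \|x_{k+1}\|^2$ by the cross-term cancellation, and induction gives the same $\sqrt{n}$ bound — but I would present the averaging argument in the body for conciseness.
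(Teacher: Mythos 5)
Your proof is correct. The paper does not actually prove this lemma---it is stated as a citation to Bartlett and Shawe-Taylor and used as a black box in the proof of Lemma \ref{lem:shattdim}---but your averaging argument is the standard one: the expectation of $\bigl\|\sum_i b_i x_i\bigr\|_2^2$ over uniform Rademacher signs equals $\sum_i \|x_i\|_2^2 \leq n$ because the off-diagonal terms $\mathbb{E}[b_i b_j]\langle x_i, x_j\rangle$ vanish by independence and zero mean, and some $b^\star$ must attain at most the average. The greedy construction you mention as an aside is also valid and has the minor advantage of being explicitly constructive in time $O(nd)$, but either route suffices here since the lemma is only used non-constructively to bound the fat-shattering dimension.
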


\begin{proof}
We begin by showing that, with high probability, the hypothesis output by Algorithm~\ref{alg:HS-StL} is in the class $\mathcal{H} = \{f(x) = z \cdot x : \|z\|_2 \leq 2\}$. To bound the $\ell_2$ norm of $z$, we observe that 
\begin{align*}
\|z\|_2 &= \frac{1}{T}\big\|\sum_{t=1}^T \hat{z}_t \big\|_2 \leq \frac{1}{T}\sum_{t=1}^T \big\|\sum_{i=1}^n \normmu_t(i) y_i x_i \big\|_2 + \|\nu_t\|_2 = 1 + \frac{1}{T}\sum_{t=1}^T \|\nu_t\|_2  
\end{align*}
where $\hat{z}_t$ denotes the weak learner hypothesis at round $t$ of boosting, and $\nu_t$ denotes the Gaussian vector added to the hypothesis at round $t$. 
Letting $\sigma =\tau/8c\sqrt{\log\left(\frac{3072\log(1/\kappa)}{\beta \tau^2}\right)}$ for a constant $c$, it follows that with probability at least $1 - \frac{\beta \tau^2}{3072\log(1/\kappa)} = 1 - \tfrac{\beta}{3T}$, a given $\nu_t$ has $\|\nu_t\|_2 \leq \tau/8 < 1$, and therefore with probability $(1 -\tfrac{\beta}{3T})^T \geq (1 - \tfrac{\beta}{3}) $, $\frac{1}{T}\sum_{t=1}^T \|\nu_t\|_2 < 1$, and so $\|z\|_2 \leq 2$. Therefore with all but probability $\beta/3$, the hypothesis output by Algorithm~\ref{alg:HS-StL} is in the class $\mathcal{H}$.

From Lemma~\ref{lem:shatt1}, it cannot be the case that a set $\{x_1, \dots, x_n \}$ is $\gamma$-shattered by $\mathcal{H}$ if there exists a $b \in \{-1,1\}^n$ such that 
\[\big\| \sum_{i=1}^n b_i x_i \big\|_2 < \gamma n/2.\]
At the same time, it follows from Lemma~\ref{lem:shatt2} that if $n > 4/\gamma^2$, such a $b \in \{-1,1\}^n$ must exist. Therefore the fat-shattering dimension of $\mathcal{H}$ at margin $\gamma$ is $fat_{\mathcal{H}}(\gamma) \leq 4/\gamma^2$. Since our final hypothesis is in $\mathcal{H}$ with probability $1-\beta/3$, our claim holds. 
\end{proof}

\section{Privacy-Only Noise-Tolerant Sample Bound for Large-Margin Halfspaces}
\label{sec:proof-noise-tolerant}

We state and prove the formal version of Theorem \ref{ithm:privacy-only-HS-learn}.

\begin{theorem}[Learning Halfspaces Under Random Label Noise]
  \label{thm:st-hs-learn-noise-apx}
  The \texttt{HS-StL} procedure is a
  $(\epsilon, \delta)$-Differentially Private $(\alpha, \beta)$-strong
  PAC learner for $\tau$-margin halfspaces tolerating random label
  noise at rate $\eta = O(\alpha \tau)$ with sample complexity
  \begin{equation*}
    n = \tilde{\Omega} \Bigg(
      \underbrace{
          \frac{1}{\epsilon \alpha \tau^2}
        }_{\substack{\text{privacy}
            \\\ \text{(Claim \ref{claim:enough-samples-to-privacy}})}}
      +
      \underbrace{
          \frac{1}{\alpha^2 \tau^2}
        }_{\text{accuracy}}
      +
      \underbrace{
        \frac{1}{ \epsilon^2}
        +
        \frac{1}{ \alpha^2}
      }_{\substack{\text{generalization}
        \\\ \text{(Claim \ref{claim:enough-samples-to-gen})}}}
    \Bigg)
  \end{equation*}
\end{theorem}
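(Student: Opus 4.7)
The plan is to rerun the analysis of Theorem~\ref{thm:st-hs-learn} while replacing the fat-shattering generalization argument with the privacy-only template of Sections~\ref{sec:templ-priv-gener}--\ref{sec:templ-noise-toler}. Privacy and training accuracy of \texttt{HS-StL} are inherited essentially verbatim from that proof; only the generalization step changes, and it is where all the new sample-complexity terms enter.

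First I would re-establish privacy and training accuracy. Combining Theorem~\ref{thm:wL-zCDP} with the slickness bound $\zeta = 1/\kappa n$ from Lemma~\ref{lem:LB-NxM-slickness} makes each iteration of \texttt{Boost} $\rho_W$-zCDP; the $T = O(\log(1/\kappa)/\tau^2)$ rounds compose via Lemma~\ref{lem:zcdp-compose}, and Lemma~\ref{lem:zcdp-to-adp} converts the result to the target $(\epsilon, \delta)$-DP at cost $n = \tilde\Omega(1/(\epsilon\alpha\tau^2))$, which I package as Claim~\ref{claim:enough-samples-to-privacy}. A Chernoff bound controls the fraction of label-flipped examples to $\le \kappa\tau/4$, a union bound over the $T$ rounds controls the Gaussian perturbation of the weak learner, and Theorem~\ref{thm:BregBoost-RoundBound} then yields training error $\err_H(S) \le \kappa = O(\alpha)$ with probability $1 - O(\beta)$.

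For generalization I would follow the template of Section~\ref{sec:templ-noise-toler}. Let $\cM$ be \texttt{HS-StL} post-processed to emit the statistical query $\err_H$, and let $\cM_\eta$ be the noised mechanism of Algorithm~\ref{alg:abstract-noised-boost}, which draws an independent flip vector, forms the RCN sample $\tilde S$ from a clean $S$, and returns $\cM(\tilde S)$. For any fixed noise vector, neighboring $S \sim S'$ yield neighboring $\tilde S \sim \tilde S'$, so $\cM_\eta$ inherits the DP guarantee of $\cM$, while $\tilde S$ is distributed exactly as the RCN oracle. Applying Theorem~\ref{thm:dp-generalization} to $\cM_\eta$ at a privacy level $\epsilon^\star = \Theta(\alpha)$ transfers training error to test error with slack $18\epsilon^\star = O(\alpha)$; combined with $\err_H(S) \le \kappa$ this yields the desired $\alpha$-approximation under $D$. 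Achieving $(\epsilon^\star, \delta^\star)$-DP through the same zCDP pipeline costs $n = \tilde\Omega(1/(\alpha^2\tau^2))$, and the theorem's precondition $n \ge \log(4\epsilon^\star/\delta^\star)/(\epsilon^\star)^2$ supplies the remaining generalization terms.

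The main obstacle is juggling two privacy budgets on a single algorithm: the user-facing $(\epsilon, \delta)$-DP guarantee, and the internal $\Theta(\alpha)$-DP guarantee needed for Theorem~\ref{thm:dp-generalization} to certify $\alpha$-accurate generalization. These are incomparable in general, so each must be argued separately, and $n$ must be large enough that the zCDP-to-DP conversion reaches whichever of $\epsilon$ or $\Theta(\alpha)$ is tighter at each invocation of Theorem~\ref{thm:dp-generalization}. The stated sample bound is then the sum of the four resulting requirements (privacy at $\epsilon$, privacy at $\Theta(\alpha)$, and the corresponding direct preconditions $1/\epsilon^2$ and $1/\alpha^2$), and a final union bound over the RCN Chernoff event, the weak-learner Gaussian-mechanism event, and the generalization-failure event $\delta^\star/\epsilon^\star$ keeps the total failure probability $O(\beta)$.
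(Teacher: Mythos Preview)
Your proposal is correct and follows essentially the same route as the paper: privacy via zCDP composition and conversion (Claim~\ref{claim:enough-samples-to-privacy}), a Chernoff bound on corrupted labels, a union bound over the Gaussian-mechanism failures, training error from Theorem~\ref{thm:BregBoost-RoundBound}, and generalization by applying Theorem~\ref{thm:dp-generalization} to the noise-wrapped mechanism at a privacy level $\min(\epsilon,\Theta(\alpha))$ (Claim~\ref{claim:enough-samples-to-gen}). The paper collapses the two privacy budgets into a single parameter $\epsilon_T<\min(\epsilon,\alpha/36)$ rather than tracking them separately, so the $1/\epsilon^2$ and $1/\alpha^2$ terms both arise from the single precondition $n\ge\tilde\Omega(1/\epsilon_T^2)$ of Theorem~\ref{thm:dp-generalization}; otherwise your decomposition matches the paper's.
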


\begin{proof}
  Denote by $\epsilon_T$ and $\delta_T$ the parameters of approximate
  differential privacy at the final round $T$ of \texttt{HS-StL}, and
  by the $H$ the output hypothesis of \texttt{HS-StL}. We proceed as
  follows.

  \begin{enumerate}
  \item Given enough samples, \texttt{HS-StL} is differentially
    private. (Claim \ref{claim:enough-samples-to-privacy})
  \item Random Label Noise at rate $\eta = O(\alpha\tau)$ will
    (w.h.p.) not ruin the sample. (Claim
    \ref{claim:hopeless-samples-unlikely})
  \item The Gaussian Mechanism will (w.h.p.) not ruin the weak
    learner. (Claim \ref{claim:bad-gauss-unlikely})
  \item Given enough samples, training error is (w.h.p.) close to test
    error. (Claim \ref{claim:enough-samples-to-gen})
  \item Given enough samples, \texttt{HS-StL} (w.h.p.) builds a
    hypothesis with low test error. 
  \end{enumerate}

  For the remainder of this proof, fix the settings of all parameters
  as depicted in \texttt{HS-StL} (Algorithm \ref{alg:HS-StL}). We
  reproduce them here:
  \begin{gather}
    \kappa \gets \alpha/4\label{eq:1} \\
    \sigma \gets \tau/8c\sqrt{\log\left(\frac{3072\log(1/\kappa)}{\beta \tau^2}\right)}
  \end{gather}

  \begin{claim}[Enough Samples $\implies$ \texttt{HS-StL} is
    Differentially Private]
    \label{claim:enough-samples-to-privacy}
    For every $\delta_T > 0$, we have:
    \[
      n > \tilde{O}\left( \frac{1}{\epsilon_T \alpha \tau^2} \right)
      \implies
      \text{\texttt{HS-StL}  is }(\epsilon_T, \delta_T)\text{-DP}
    \]
    
  \end{claim}
  \begin{proof}
    By the privacy bound for $\widehat{\weak}$ (Theorem
    \ref{thm:wL-zCDP}), we know that a single iteration of
    \texttt{Boost} is $\rho$-zCDP for
    $\rho = \frac{8}{(\kappa n \sigma)^2}$. Then, \texttt{Boost} runs
    for $T = \frac{1024\log(1/\kappa)}{\tau^2}$ rounds. So, by tight
    composition for zCDP (Lemma \ref{lem:zcdp-compose}),
    \texttt{HS-StL} is $\rho_T$-zCDP where:
    \[
      \rho_T = O \left(
        \frac{\log(1/\kappa)}{(\kappa n \sigma \tau)^2}
      \right)
    \]
    Now we convert from zero-concentrated to approximate differential
    privacy, via Lemma \ref{lem:zcdp-to-adp}: if
    $\epsilon_T < 3\sqrt{\rho_T\log(1/\delta_T)}$, then
    \texttt{HS-StL} is $(\epsilon_T, \delta_T)$-DP for all
    $\delta_T > 0$. We re-arrange to bound $n$.
    
    \[\rho_T < O \left( \frac{\epsilon_T^2}{\log(1/\delta_T)} \right) \]

    Unpacking $\rho_T$ we get:
    \[
      \frac{\log(1/\kappa)}{(\kappa n \sigma \tau)^2} <
      O \left( \frac{\epsilon_T^2}{\log(1/\delta_T)} \right)
    \]

    This will hold so long as:

    \[
      n > \Omega \left(
        \frac{\sqrt{\log(1/\kappa)\log(1/\delta_T)}}
        {\kappa \sigma \tau \epsilon_T}
      \right)
    \]
    
    Substituting the settings of $\sigma$ and $\kappa$ from
    \texttt{HS-StL}, we obtain:

    \[n > \Omega \left(
        \frac{\sqrt{\log(1/\alpha)\log(1/\delta_T)\log(\log(1/\alpha)/\beta\tau^2)}}
        {\epsilon_T \alpha \tau^2}
      \right)
    \]

  \end{proof}
  
  We next consider the two events that could destroy good training
  error.
  
  \begin{description}
  \item[Too Many Corrupted Samples] Noise could corrupt so many
    samples that the weak learner fails. Under an approprite noise
    rate, this is unlikely. We denote this event by \textsf{BN}
    (for ``bad noise''). 
  \item[Gaussian Mechanism Destroys Utility] The Gaussian noise
    injected to ensure privacy could destroy utility for a round of
    boosting. We denote this event by \textsf{BG} (for ``bad
    Gaussian''). 
  \end{description}

  Both events are unlikely, under the settings of \texttt{HS-StL}.
  
  \begin{claim}[Hopelessly Corrupted Samples are Unlikely]
    \label{claim:hopeless-samples-unlikely}
    Let $\mathsf{F}_1, \dots, \mathsf{F}_n$ indicate the event ``label
    $i$ was flipped by noise,'' and denote by
    $\mathsf{F} = \sum_{i=1}^n \mathsf{F}_i$ the number of such
    corrupted examples. Under the settings of \texttt{HS-StL} and
    noise rate $\eta = \alpha\tau /32$, we have:
    \[
      n > \frac{96 \ln(4/\beta)}{\alpha \tau}
      \implies
      \Pr[\mathsf{BN}] = \Pr[ \mathsf{F} > \kappa n ] \leq \beta / 4
    \]
  \end{claim}

  \begin{proof}
    At noise rate $\eta$, we have $\ex{\rv{F}} = n\eta$. From the
    definitions and Theorem \ref{thm:wL-zCDP},
    \[
      \Pr[\textsf{BN}] = \Pr \left[
        \mathsf{F} \geq \frac{\alpha\tau}{16} n
      \right]
    \]

    We apply the following simple Chernoff bound:
    $\forall \delta \geq 1$
    \[
      \prob{\rv{F} \geq (1 + \delta) \ex{\rv{F}}}
      \leq \exp(-\ex{ \rv{F} } \delta / 3)
    \]
    Substituting with $\delta = 1$:
    \[
      \prob{\rv{F} \geq 2\eta n} \leq \exp(-(\eta n)/ 3)
    \]

    Noise rate $\eta = \alpha\tau/32$ gives the appropriate event
    above:

    \[
      \prob{\rv{F} \geq 2\eta n} = \prob{\rv{F} \geq \frac{\alpha\tau}{16} n}
      \leq
      \exp(-( \alpha\tau n) / 96)
    \]

    Constraining the above probability to less than $\beta / k_1$
    for any constant $k_1 > 1 $ we solve to obtain:

    \[
      n > \frac{96 \ln(k_1/\beta)}{\alpha \tau}
    \]
    
  \end{proof}

  \begin{claim}[Bad Gaussians are Unlikely]
    \label{claim:bad-gauss-unlikely}
    Let $\textsf{BG}_i$ indicate the event that the $i$th call to the
    weak learner fails to have advantage at least $\tau/8$. Under the
    settings of \texttt{HS-StL}:
    \[
      \Pr[\mathsf{BG}] = \Pr[ \exists i ~\mathsf{BG}_i] \leq \beta / 2
    \]
  \end{claim}
  \begin{proof}
    Our setting of $\sigma$ simplifies the advantage of
    $\widehat{\weak}$ to $\gamma(\tau, \sigma) = \tau/8$ with all but
    probability $\xi = \frac{\beta \tau^2}{1024\log(1/\kappa)}$. Then,
    by the round bound for \texttt{LB-NxM} (Theorem
    \ref{thm:BregBoost-RoundBound}), \texttt{Boost} will terminate
    after
    $T = \frac{8\log(1/\kappa)}{\gamma^2} =
    \frac{512\log(1/\kappa)}{\tau^2}$ rounds, and so we have that with
    probability $(1 - \xi)^T \geq 1 - \frac{\beta}{2}$ every
    hypothesis output by $\widehat{\weak}$ satisfies the advantage
    bound.

  \end{proof}

  To enforce generalization, we capture both training and test error
  for any hypothesis $H$ with a statistical query that indicates
  misclassification. Evaluated over the sample it is the training
  error of $H$, and evaluated over the population it is the test error
  of $H$.

  \[
    \err_H(x, y) \mapsto \begin{cases}
      1 &\text{if } yH(x) \leq 0 \\
      0 &\text{otherwise}
    \end{cases}
  \]
  
  \begin{claim}[Enough Samples $\implies$ Good Generalization]
    \label{claim:enough-samples-to-gen}
    If ~$0 < \epsilon_T < 1/3$ and $0 < \delta_T < \epsilon_T/4$
    \[
      n \geq \tilde{\Omega} \left( \frac{1}{\epsilon_T^2} \right)
      \implies
      \Prob{S \sim D^n}{ \err_H(D) \geq \err_H(S) + 18 \epsilon_T }
      \leq \delta_T / \epsilon_T
    \]
  \end{claim}

  \begin{proof}
    Define (for analysis only) a procedure \texttt{HS-StL-test}, which
    outputs the ``error'' statistical query. That is, letting $H = $
    \texttt{HS-StL}$(S)$ where $S \sim D^n$, \texttt{HS-StL-test}
    prints $\err_H$. Thus, \texttt{HS-StL-test} is a mechanism for
    selecting a statistical query.
    
    Because \texttt{HS-StL-test} is simple post-processing, it
    inherits the privacy of \texttt{HS-StL}. Since we select $\err_H$
    privately, it will (by Theorem \ref{thm:dp-generalization}) be
    ``similar'' on the sample $S$ (training error) and the population
    $D$ (test error).  Ignoring over-estimates of test error and
    observing the sample bounds of Theorem
    \ref{thm:dp-generalization}, this gives the claim.

  \end{proof}
  
  \begin{claim}[Low Training Error is Likely]
    \label{claim:good-train-likely}
    Given $\neg\rv{GB}$ and $\neg{\rv{BN}}$, we have
    ~\( \err_H(S) \leq \alpha/2\).
  \end{claim}

  \begin{proof}
    Let $\tilde{S}$ denote the noised sample, and let $S_C$ and $S_D$
    be the ``clean'' and ``dirty'' subsets of examples,
    respectively.
    
    Given $\neg\rv{GB}$ and $\neg{\rv{BN}}$, the weak learning
    assumption holds on every round. So, by Theorem
    \ref{thm:BregBoost-RoundBound}, the boosting algorithm will attain
    low training error $\err_H(\tilde{S}) = \kappa$. This is not yet
    enough to imply low test error, because
    $\tilde{S} \not\sim_{iid} D$. So we bound $\err_H(S)$ using
    $\err_H(\tilde{S})$. Suppose that the noise affects training in
    the worst possible way: $H$ fits \emph{every} flipped label, so
    $H$ gets \emph{every} example in $S_D$ wrong. Decompose and bound
    $\err_H(S)$ as follows:

    \begin{align*}
      \err_H(S) &= \sum_{(x,y) \in S} \err_H(x,y) \\
             &= \sum_{(x,y) \in S_C} \err_H(x,y) + \sum_{(x,y) \in S_D} \err_H(x,y) \\
             &\leq \kappa + |S_D| &\text{Boosting Theorem, worst case fit} \\
             &\leq \kappa + \frac{\alpha\tau}{16} &\neg\rv{BN}
    \end{align*}
    
    Because the sample is from the unit ball, we have $\tau \in
    (0,1)$. Therefore, it is always the case that
    $\frac{\alpha\tau}{16} < \frac{\alpha}{4}$. So
    $\err_H(S) \leq \kappa + \alpha/4 \leq \alpha/2$, concluding proof of
    the above claim.
  \end{proof}
  
  It remains only to select $\epsilon_T$ and $\delta_T$ so that the
  claims above may be combined to conclude low test error with high
  probability. Recall that our objective is sufficient privacy to
  simultaneously:
  
  \begin{enumerate}
  \item Ensure that \texttt{HS-StL} is $(\epsilon, \delta)$-DP
  \item Apply DP to generalization transfer (Theorem
    \ref{thm:dp-generalization}) for good test error.
  \end{enumerate}

  Both these objectives impose constraints on $\epsilon_T$ and
  $\delta_T$.  The requirement that the algorithm is desired to be
  $(\epsilon, \delta)$-DP in particular forces $\epsilon_T$ to be
  smaller than $\epsilon$ and $\delta_T$ to be smaller than
  $\delta$. The transfer theorem is slightly more subtle; to
  PAC-learn, we require:
  
  \[
    \Pr_{S \sim D^n}[\err_H(D) \geq \alpha] \leq \beta
  \]

  While Claim \ref{claim:enough-samples-to-gen} gives us that

  \[
    \Prob{S \sim D^n}{ \err_H(D) \geq \err_H(S) + 18 \epsilon_T } \leq
    \delta_T / \epsilon_T
  \]

  So, accounting for both privacy and accuracy, we need
  $\epsilon_T < \phi = \min(\epsilon, \alpha/36)$. We can select any
  $\delta_T < \min(\delta, ~\phi\beta/4)$ to ensure that
  $\delta_T/\epsilon_T < \beta/2$. By substituting the different
  realizations of these '$\min$' operations into Claims
  \ref{claim:enough-samples-to-privacy} and
  \ref{claim:enough-samples-to-gen} we obtain the sample bound.

  Finally, observe that with these settings we can union bound the
  probability of $\rv{BN}$ and $\rv{BG}$ and the event that
  generalization fails with $\beta$, as required for PAC learning. But
  it follows from the claims above that if $\neg\rv{BN}$ and
  $\neg{\rv{BG}}$ and a good transfer all occur, then the ouput
  hypothesis $H$ has test error less than $\alpha$, concluding the
  argument.
  
\end{proof}

\section{Smooth Boosting via Games}
\label{sec:boost-via-games}
Here, we prove our round-bound and final margin guarantee for
\texttt{LazyBregBoost}. The proof is a reduction to approximately
solving two-player zero-sum games. We introduce the basic elements of
game theory, then outline and excute the reduction. Overall, we recall
and prove Theorem \ref{thm:BregBoost-RoundBound}:

\LazyBregRB*

\subsection{Two-Player Zero-Sum Games}
A two player game can be described by a matrix, where the \emph{rows}
are indexed by ``row player'' strategies $\cP$, the \emph{columns} are
indexed by ``column player'' strategies $\cQ$, and each entry $(i,j)$
of the matrix is the \emph{loss} suffered by the row player when row
strategy $i \in \cP$ is played against column strategy $j \in
\cQ$. Such a game is \emph{zero-sum} when the colum player is given as
a reward the row player's loss. Accordingly, the row player should
minimize and the column player should maximize.

A single column or row is called a \emph{pure strategy.} To model
Boosting, we imagine players who can randomize their actions. So the
fundamental objects are \emph{mixed strategies:} distributions $P$
over the rows and $Q$ over the columns. Playing ``according to'' a
mixed strategy means sampling from the distribution over pure
strategies and playing the result. When two mixed strategies are
played against each other repeatedly, we can compute the
\emph{expected loss} of $P$ vs.  $Q$ playing the game $M$:

\begin{equation}
  \label{eq:loss}
  M(P,Q) = \underbrace{\sum_{i,j \in \cP \times \cQ} P(i) M(i,j) Q(j)}_{\text{(i)}}
         = \underbrace{\sum_{j \in \cQ} M(P,j)Q(j)}_{\text{(ii)}}
         = \underbrace{\sum_{i \in \cP} P(i) M(i,Q)}_{\text{(iii)}}
\end{equation}

``Iterated play'' pits the row player against an arbitrary environment
represented by the column player. At each round, both the row player
and column player choose strategies $P_t$ and $Q_t$ respectively. The
expected loss of playing $Q_t$ against each \emph{pure} row strategy
is revealed to the row player. Then, the row player suffers the
\emph{expected loss} of $P_t$ vs. $Q_t$. This set-up is depicted by
Algorithm \ref{alg:iter-play}. Good row player strategies have
\emph{provably bounded regret} --- they do not suffer much more loss
than the best possible \emph{fixed} row player strategy in hindsight
during iterated play.

\begin{algorithm}
  \caption{Iterated Play}
  [\emph{\textbf{Input:}} $T$ the number of rounds to play for \\
  \emph{\textbf{Output:}} Total expected row player cost incurred]
  \label{alg:iter-play}
  \begin{algorithmic}
    \FOR{$t = 1$ \TO $T$}
    \STATE $P_t \gets $ Row player choice of mixed strategies, seeing $\ell_1, \dots, \ell_{t-1}$
    \STATE $Q_t \gets $ Column player choice of mixed strategies, seeing $P_t$
    \STATE $\ell_t(i) \gets M(i,Q_t) ~ \forall i$
    \COMMENT{Reveal loss on each pure row strategy}
    \STATE $C \gets C + M(P_t, Q_t)$
    \COMMENT{Accumulate total loss}
    \ENDFOR
  \end{algorithmic}
\end{algorithm}

Here, we reduce boosting to the ``Lazy Dense Multiplicative Updates''
row player strategy (Algorithm \ref{alg:lazy-DUR}) which enjoys
bounded regret (Lemma \ref{lem:simple-regrets}, proved in Appendix
\ref{sec:lazy-regret-bound} for completeness) and two other helpful
properties:

\begin{description}
\item[Simple State:] The only state is all previous loss vectors and
  step count so far; this enables privacy.
\item[Single Projection:] It Bregman-projects just once per round;
  this enforces slickness.
\end{description}

\begin{algorithm}
  \caption{Lazy Dense Update Strategy (LDU)}\label{alg:lazy-DUR}
  \emph{\textbf{Input:}} $\mathcal{P}$, a set of pure row-player strategies,
  learning rate $\lambda$, losses $\ell_1, \dots, \ell_T$\\
  \emph{\textbf{Output:}} A measure over $\mathcal{P}$
  \begin{algorithmic}
  	\FOR{$i \in \mathcal{P}$} 
  	  	\STATE $\mu_1(i) \gets \kappa$
  	\ENDFOR 
    \FOR{$i \in \mathcal{P}$}
    	\STATE $\tilde{\mu}_{T+1}(i) \gets e^{-\lambda\sum_{t=1}^T \ell_t(i)} \mu_1(i)$
    \ENDFOR
    \STATE $\mu_{T+1} \gets \Pi_\Gamma \tilde{\mu}_T$
  \end{algorithmic}
\end{algorithm}

\begin{lemma}[Lazy Dense Updates Regret Bound]
  \label{lem:simple-regrets}
  Let $\Gamma$ be the set of $\kappa$-dense measures. Set $\mu_1(i) = \kappa$ for every $i$. Then for all $\mu \in \Gamma$ we have the following
  regret bound.
  
  \[\frac{1}{T}\sum_{t=1}^T M(\dst{\mu}_t, Q_t) \leq
    \frac{1}{T}\sum_{t=1}^T M(\dst{\mu}, Q_t)
    + \lambda
    + \frac{\KL{\mu}{\mu_1}}{\lambda \kappa |\cP| T} \]
\end{lemma}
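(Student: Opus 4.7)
The plan is to run a KL-divergence potential argument in the style of online mirror descent, exploiting the fact that the lazy projection structure decouples the multiplicative updates (which only touch the unprojected measures) from the projections (which only produce the measures actually played at each round). Write $\tilde{\mu}_t$ for the unprojected iterate at round $t$, so $\tilde{\mu}_1 = \mu_1$ and $\tilde{\mu}_{t+1}(i) = e^{-\lambda \ell_t(i)} \tilde{\mu}_t(i)$, and let $\mu_t = \Pi_\Gamma \tilde{\mu}_t$ denote the projected version. Define the potential $\Phi_t := \KL{\mu}{\tilde{\mu}_t}$.

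First, I would compute the one-step change $\Phi_t - \Phi_{t+1}$ using the generalized-KL formula for bounded measures; the recurrence on $\tilde{\mu}$ yields
\[\Phi_t - \Phi_{t+1} = -\lambda \sum_i \mu(i) \ell_t(i) + \sum_i \tilde{\mu}_t(i)(1 - e^{-\lambda \ell_t(i)}).\]
Applying the elementary inequality $1 - e^{-y} \geq y - y^2/2$ for $y \in [0,1]$ (valid since $\lambda \in (0,1)$ and $\ell_t(i) \in [0,1]$) produces a lower bound on $\Phi_t - \Phi_{t+1}$ that is linear in the losses plus a quadratic error. The error is controlled by $\ell_t(i)^2 \leq 1$ and $|\tilde{\mu}_t| \leq |\mu_1| = \kappa |\cP|$, the latter because the unprojected mass is monotonically non-increasing under the multiplicative update.

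Next, I would telescope. Generalized KL is a Bregman divergence, hence non-negative, so $\Phi_{T+1} \geq 0$ and $\sum_t (\Phi_t - \Phi_{t+1}) \leq \Phi_1 = \KL{\mu}{\mu_1}$. Rearranging produces a measure-level regret bound of the shape
\[\sum_t \sum_i (\tilde{\mu}_t(i) - \mu(i)) \ell_t(i) \leq \frac{\KL{\mu}{\mu_1}}{\lambda} + O(\lambda \kappa |\cP| T).\]

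Finally, I would translate this into the stated distribution-level bound. For the comparator $\mu \in \Gamma$ this is easy: $\sum_i \mu(i) \ell_t(i) = |\mu| \cdot M(\dst{\mu}, Q_t) \geq \kappa|\cP| \cdot M(\dst{\mu}, Q_t)$. For the algorithm's plays I would invoke Lemma \ref{lem:bregcharacter}: $\mu_t$ is a capped scaling of $\tilde{\mu}_t$ by some constant $c_t \geq 1$, so $\tilde{\mu}_t(i) \geq \mu_t(i)/c_t$ pointwise and $|\mu_t|/c_t \leq |\tilde{\mu}_t|$. Together these relate $\sum_i \tilde{\mu}_t(i) \ell_t(i)$ to $\kappa|\cP| \cdot M(\dst{\mu}_t, Q_t)$. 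Dividing the telescoped inequality through by $\lambda \kappa |\cP| T$ and absorbing the quadratic remainder into the $\lambda$ additive term then yields the bound.

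The hardest step is this last one. When the projection saturates some coordinates at $1$, the relation between $\sum_i \tilde{\mu}_t(i) \ell_t(i)$ and $\sum_i \mu_t(i) \ell_t(i)$ acquires a potentially large scaling factor $c_t$, and the argument must carefully use the density constraint $|\mu_t| \geq \kappa|\cP|$ on every projected measure to supply the $\kappa|\cP|$ factor in the denominator of the regret bound without losing extra factors from the capping. The lazy structure is essential here: because the multiplicative update acts on $\tilde{\mu}_t$ rather than on $\mu_t$, the potential telescopes cleanly, and we only pay the measure-to-distribution conversion cost once, in aggregate.
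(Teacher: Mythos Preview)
Your potential computation and telescoping (steps one through three) are correct, but the final conversion step has a genuine gap that you have not closed. The telescoped inequality controls $\sum_t \langle \tilde{\mu}_t, \ell_t \rangle$, the loss against the \emph{unprojected} measures, whereas the lemma concerns $\sum_t M(\dst{\mu}_t, Q_t) = \sum_t \langle \mu_t, \ell_t \rangle / (\kappa|\cP|)$, the loss against the \emph{projected} distributions. These quantities are not comparable in the direction you need: because $\tilde{\mu}_{t+1}(i) = e^{-\lambda \ell_t(i)}\tilde{\mu}_t(i)$, the total mass $|\tilde{\mu}_t|$ decays geometrically and can be arbitrarily small, so $\langle \tilde{\mu}_t, \ell_t \rangle$ can be tiny even when $M(\dst{\mu}_t, Q_t)$ is near $1$. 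The capped-scaling relation $\tilde{\mu}_t(i) \ge \mu_t(i)/c_t$ you invoke only gives $\langle \tilde{\mu}_t, \ell_t \rangle \ge \langle \mu_t, \ell_t \rangle / c_t$, and $c_t$ is precisely of order $\kappa|\cP|/|\tilde{\mu}_t|$, so the factor you lose is exactly the factor you need. (Concretely: take $\ell_t \equiv 1$ for all $t$; then $\dst{\mu}_t$ is uniform and $M(\dst{\mu}_t,Q_t)=1$ at every round, but $\sum_t \langle \tilde{\mu}_t,\ell_t\rangle$ is a bounded geometric series, so your measure-level bound is vacuous for the distribution-level regret.) You correctly sensed this was the hard step, but the sketch you give does not supply the missing inequality, and in fact no such inequality holds.

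The paper circumvents this obstruction by abandoning the OMD potential entirely and running an FTRL-style argument directly on the \emph{projected} iterates. It first shows (Lemma~\ref{lem:opt}) that $\mu_{t+1}$ is the minimizer over $\Gamma$ of $\lambda\langle \mu, \sum_{s\le t}\ell_s\rangle + \KL{\mu}{\mu_1}$; then a ``be the leader'' induction (Lemma~\ref{lem:be-the-leader}) gives a regret bound for the one-step-ahead sequence $\dst{\mu}_{t+1}$; finally the stability term $M(\dst{\mu}_t,Q_t)-M(\dst{\mu}_{t+1},Q_t)$ is bounded by $\lambda$ via strong convexity of KL on $\Gamma$ (Lemma~\ref{lem:compare-to-future}). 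Because every step of this argument lives inside $\Gamma$, the normalization $|\mu_t|=\kappa|\cP|$ is available throughout and the conversion issue never arises.
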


\subsection{Reducing Boosting to a Game}
We present a reduction from boosting real-valued weak learners to
approximately solving iterated games, following \citet{FS94}. To prove
the necessary round-bound (Theorem \ref{thm:BregBoost-RoundBound}), we
do the following:

\begin{enumerate}
\item\textbf{Create a Game.} The meaning of ``advantage'' given by a
  Weak Learning assumption (Definition \ref{def:weaklearn}) naturally induces a
  zero-sum game where pure row strategies are points in the sample $S$
  and pure column strategies are hypotheses in $\cH$. The Booster will
  play mixed row strategies by weighting the sample and the weak
  learner will play pure column strategies by returning a single
  hypothesis at each round.
\item\textbf{Weak Learning $\implies$ Booster Loss Lower-Bound.} Weak
  Learners have some advantage in predicting with respect to
  \emph{any} distribution on the sample. Thus, the particular sequence
  of distributions played by \emph{any} Booster must incur at least
  some loss.
\item \textbf{Imagine Pythia, a Prophetic Booster.} Given perfect
  foreknowledge of how the weak learner will play, what is the best
  Booster strategy?  Create a ``prescient'' Boosting strategy
  $P^\star$ which concentrates measure on the ``worst'' examples
  $(x,y) \in S$ for the \emph{final} hypothesis $H$ at each round.
\item\textbf{How Well Does Pythia Play?} Upper-bound the total loss
  suffered by a Booster playing $P^\star$ each round.
\item\textbf{Solve for $T$:} Recall that we want the Booster to
  \emph{lose}. That is, we want an accurate ensemble of
  hypotheses. Combining the upper and lower bounds above with the
  regret bound, we solve for a number of rounds to ``play'' (Boost)
  for such that the size of the set of ``worst'' examples shrinks to a
  tolerable fraction of the sample. This gives the round-bound
  (Theorem \ref{thm:BregBoost-RoundBound}).
\end{enumerate}

\subsubsection{Create a Game}
\label{sec:soft-pun}
Our game is a continuous variant of the Boolean ``mistake matrix''
\citep{FS94}. Let $\cH \subseteq \{B_2(1) \to [-1,1]\}$ be a set of
bounded $\R$-valued hypothesis functions on the unit ball. These
functions will be the pure column player strategies. Now let
$S = (x_1,y_1), \dots , (x_n,y_n)$ be a set of points where
$y_i \in \{ \pm 1 \}$ and $x_i \in B_2(1)$. These points will be the
pure row player strategies. Having chosen the strategies, all that
remains is to define the entries of a matrix. To cohere with the
definition of weak learning for real-valued functions, we define the
following game of \emph{soft punishments}:

\[
  M^{\cH}_{S} := M^{\cH}_{S}(i, h)
  = 1 - \frac{1}{2}|h(x_i) - y_i|
\]

Notice the quantification here: we can define the soft punishment game
for any sample and any set of hypotheses. We will omit $\cH$ and $S$
when fixed by context. This is a simple generalization of the game
from \citet{FS94} which assigns positive punishment 1 to correct
responses, and 0 to incorrect responses. Since we work with
real-valued instead of Boolean predictors, we alter the game to scale
row player loss by how ``confident'' the hypothesis is on an input
point.

\subsubsection{Weak Learning $\implies$ Booster Loss Lower-Bound.}
Mixed strategies for the booster (row player) are just distributions
over the sample. So the accuracy assumption on the weak learner
$\mathtt{WkL}$, which guarantees advantage on \emph{every}
distribution, induces a lower bound on the total loss suffered by
\emph{any} booster playing against $\mathtt{WkL}$. Recall that losses
are measured with respect to \emph{distributions} over strategies, not
\emph{measures}. So, below we normalize any measure to a distribution
``just before'' calculating expected loss

\begin{lemma}[Utility of Weak Learning]
  \label{lem:wkl-implies-loss-lb}
  For any sequence of $T$ booster mixed strategies
  $(\mu_1, \dots, \mu_T)$, suppose the sequence of column point
  strategies $h_1, \dots, h_T$ is produced by a weak learner that has
  advantage $\gamma$. Then:
  \[
    \sum_{t=1}^T M(\dst{\mu}_t, h_t) \geq \sfrac{T}{2} +
    T\gamma
  \]
\end{lemma}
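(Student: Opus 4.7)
The plan is a direct unfolding of the soft-punishment game against the weak-learning guarantee; essentially no cleverness is required because the game $M$ was defined precisely to match Definition~\ref{def:weaklearn}. I will establish the bound one round at a time and then sum.

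First I would expand $M(\dst{\mu}_t, h_t)$ using form (iii) of equation~(\ref{eq:loss}). Since the pure row strategies are indexed by sample points and $M(i,h) = 1 - \tfrac{1}{2}|h(x_i) - y_i|$, linearity gives
\[
M(\dst{\mu}_t, h_t) = \sum_{i=1}^{n} \dst{\mu}_t(i)\left(1 - \tfrac{1}{2}|h_t(x_i) - y_i|\right) = 1 - \tfrac{1}{2}\sum_{i=1}^n \dst{\mu}_t(i)\,|h_t(x_i) - y_i|.
\]

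Next I would invoke the weak-learning hypothesis for round $t$: because $h_t$ is produced by a weak learner with advantage $\gamma$ called on the distribution $\dst{\mu}_t$, Definition~\ref{def:weaklearn} yields $\tfrac{1}{2}\sum_i \dst{\mu}_t(i)|h_t(x_i) - y_i| \leq \tfrac{1}{2} - \gamma$. Substituting into the previous display gives the per-round inequality $M(\dst{\mu}_t, h_t) \geq \tfrac{1}{2} + \gamma$. Summing over $t = 1, \dots, T$ gives the stated bound $\sum_t M(\dst{\mu}_t, h_t) \geq T/2 + T\gamma$.

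The only subtlety — which is really just a bookkeeping check rather than an obstacle — is verifying that the weak learner is in fact being invoked on $\dst{\mu}_t$ at round $t$, i.e., that the reduction feeds the normalized booster measure into \texttt{WkL}. This is guaranteed by the design of \texttt{Iter} in Algorithm~2, which computes $\mu \gets \mathtt{NxM}(S,H)$ and then calls $\mathtt{WkL}(S,\mu)$, combined with the fact that \texttt{LB-NxM} returns the induced distribution $\dst{\mu}_{t+1}$. Given that check, the proof is a one-line application of the advantage assumption per round followed by a sum, so I expect no real difficulty.
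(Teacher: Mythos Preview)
Your proposal is correct and follows essentially the same approach as the paper: unfold $M(\dst{\mu}_t,h_t)$ via form~(iii), apply the weak-learning advantage bound per round, and sum. The paper additionally rewrites $1 - \tfrac{1}{2}|h_t(x_i)-y_i|$ as $\tfrac{1}{2} + \tfrac{1}{2}y_ih_t(x_i)$ before invoking the advantage, but this is cosmetic; your direct appeal to Definition~\ref{def:weaklearn} is equivalent and arguably slightly cleaner.
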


\begin{proof}
  \begin{align*}
    \sum_{t=1}^T M(\dst{\mu}_t, h_t)
    &= \sum_{t=1}^T \Ex{i \sim \dst{\mu}_t}{M(i,h_t)} &\text{ unroll def \ref{eq:loss} } (iii)\\
    &= \sum_{t=1}^T \Ex{i \sim \dst{\mu}_t}{1 - \sfrac{1}{2}|h_t(x_i) - y_i|}
      &\text{ re-arrange ``advantage''} \\
    &= \sum_{t=1}^T \sfrac{1}{2} + \Ex{i \sim \dst{\mu}_t}{\sfrac{1}{2} -\sfrac{1}{2}|h_t(x_i) - y_i|}
      &\text{linearity of }\mathbb{E}\\
    &= \sfrac{T}{2} + \sum_{t=1}^T \Ex{i \sim \dst{\mu}_t}{\sfrac{1}{2} h_t(x_i)y_i}
      &\text{ distributing summations}\\
    &\geq \sfrac{T}{2} + T\gamma &\text{ by Weak Learning Assumption}
  \end{align*}
\end{proof}

\subsubsection{Imagine Pythia, a Prophetic Booster.}

How should a booster play if she knows the future? Suppose Pythia
knows exactly which hypotheses $h_1, \dots, h_T$ the weak learner will
play, but is restricted to playing the same fixed $\kappa$-dense
strategy for all $T$ rounds. Intuitively, she should assign as much
mass as possible to points of $S$ where the combined hypothesis
$H= (1/T) \sum_{t \in [T]} h_t$ is incorrect, and then assign
remaining mass to points where $H$ is correct but uncertain. We refer
to this collection of points as $B$, the set of ``bad'' points for
$h_1, \dots, h_T$. We formalize this strategy as Algorithm
\ref{alg:pythia}.

\begin{algorithm}
  \caption{Pythia}\label{alg:pythia}
  \emph{\textbf{Input:}} $S$ a sample with $|S| = n$;
  $H$ a combined hypothesis;
  $\kappa$ a target density \\
  \emph{\textbf{Output:}} Distribution $P^{\star}$ over $[n]$;
  Minimum margin $\theta_T$
  \begin{algorithmic}
    \STATE{ $B \gets \{ i \in [1, n]  ~|~ y_iH(x_i) < 0 \}$ }
    \COMMENT{Place all mistakes in $B$}
    \STATE{Sort $ [1, n]\setminus B$ by margin of $H$ on each point}
    \STATE{$\theta_T \gets 0$}
    \WHILE{$|B| < \kappa n$}
    \STATE{Add minimum margin element $i$ of $[1, n] \setminus B$ to $B$}
    \STATE{Update $\theta_T$ to margin of $H$ on $(x_i, y_i)$}
    \ENDWHILE
    \STATE{$P^{\star} \gets $ the uniform distribution over $B$ }
    \STATE{Output $P^{\star}, \theta_T$}
  \end{algorithmic}
\end{algorithm}

The prophetic booster Pythia plays the uniform measure on a set $B$ of
``bad'' points selected by Algorithm \ref{alg:pythia}, normalized to a
distribution. That is:

\[
P^{\star}(i) = \begin{cases}
  1/|B| &\mbox{if } i \in B \\ 
  0 & \mbox{otherwise }
\end{cases}
\]

It is important to observe that if $i$ is outside of the ``bad set''
$B$, we know $H$ has ``good'' margin on $(x_i, y_i)$.  To quantify
this, observe that for all $i \in B$, $H$ has margin at most
$\theta_T$ on $(x_i,y_i)$.

\begin{proposition}[Bad Margin in Bad Set]
  \label{obs:bad-set-bad-margin}
  For every $i \in B$, we know $\sum_{t=1}^T y_ih(x_i) \leq T\theta_T$
\end{proposition}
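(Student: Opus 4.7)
The plan is to unpack Pythia's construction (Algorithm \ref{alg:pythia}) and show that the claim is essentially a tautology in the definitions of $B$, $\theta_T$, and $H$. First, recall that the combined hypothesis is $H(x) = (1/T)\sum_{t=1}^T h_t(x)$, so for any example $(x_i, y_i)$ the margin of $H$ on $(x_i, y_i)$ is exactly $y_i H(x_i) = (1/T)\sum_{t=1}^T y_i h_t(x_i)$. Thus showing $\sum_{t=1}^T y_i h_t(x_i) \le T \theta_T$ is equivalent to showing $y_i H(x_i) \le \theta_T$ for every $i \in B$.

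Next, I would split indices of $B$ into two groups according to how Pythia added them. The ``mistake'' indices are those with $y_i H(x_i) < 0$ that get placed into $B$ in the initialization step; since $\theta_T$ is initialized to $0$ and is only ever increased, these indices automatically satisfy $y_i H(x_i) < 0 \le \theta_T$. The remaining indices are added during the while loop, which proceeds by popping the smallest-margin element of $[n] \setminus B$ and updating $\theta_T$ to that element's margin. By the sorted processing order and the monotone update of $\theta_T$, every index added in this phase has margin at most the current $\theta_T$, and hence at most the final value of $\theta_T$. Combining both cases gives $y_i H(x_i) \le \theta_T$ for every $i \in B$.

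Multiplying through by $T$ yields the stated inequality $\sum_{t=1}^T y_i h_t(x_i) \le T \theta_T$. The main ``obstacle'' is really just the bookkeeping of recognizing that the inequality holds in both the initialization phase (trivially, because mistakes have strictly negative margin while $\theta_T \ge 0$) and in the loop phase (because the sorted insertion order guarantees that each newly added index has margin at most the updated $\theta_T$). No substantive estimates are needed.
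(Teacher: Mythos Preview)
Your proof is correct and follows the same approach as the paper's own proof, which simply asserts $y_i H(x_i) \le \theta_T$ ``by inspection of Pythia'' and then unrolls $H$ and multiplies by $T$. You have merely filled in the details of that inspection by splitting on how each index entered $B$; this is exactly the right justification and matches the paper's intent.
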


\begin{proof}
  \begin{align*}
    i \in B \implies y_iH(x_i) &\leq \theta_T &\text{ inspection of Pythia, above} \\
    \frac{y_i}{T}\sum_{t=1}^Th_t(x_i)  &\leq \theta_T &\text{ unroll } H \\ 
    \sum_{t=1}^T y_ih_t(x_i)  &\leq T\theta_T &\text{ re-arrange } 
  \end{align*}
\end{proof}

\subsubsection{How Well Does Pythia Play?} Here, we calculate the
utility of foresight --- an upper-bound on the loss of 
$P^\star$. Suppose $H$ is the terminal hypothesis produced by the boosting algorithm. We
substitute $P^\star$ into the definition of expected loss for
$M^{\cH}_S$ (soft punishments) and relate the margin on the bad
set for $H$ to the cumulative loss of $H$, giving the following lemma.

\begin{lemma}[Excellence of Pythia]
  \label{lem:utility-pythia}
  Let $S$ be a sample, $(h_1, \dots, h_T) \in \mathcal{H}^T$ a sequence of hypotheses, $H = (1/T)\sum_{i=1}^T h_i$, and $\kappa \in [0, 1/2]$ a density parameter. Let
  $ P^\star, \theta_H  = \mathrm{Pythia}(S,H,\kappa)$. Then:
  \[
    \sum_{t=1}^T M(P^{\star}, h_t) \leq (T/2) + (T\theta_H)/2 
  \]
\end{lemma}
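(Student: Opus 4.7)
The plan is to expand the definition of the soft-punishment loss against Pythia's uniform strategy on the bad set $B$, then apply the margin bound from Proposition (Bad Margin in Bad Set) term by term. There is essentially no slack; the inequality in the lemma is exactly Pythia's worst-case average margin, so the proof is a direct calculation.

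First I would swap the order of summation and unfold the loss definition. Since $P^{\star}$ is uniform on $B$, we have
\begin{equation*}
\sum_{t=1}^T M(P^{\star}, h_t) = \frac{1}{|B|} \sum_{t=1}^T \sum_{i \in B} \left( 1 - \tfrac{1}{2} |h_t(x_i) - y_i| \right).
\end{equation*}
The next step is to replace $|h_t(x_i) - y_i|$ with $1 - y_i h_t(x_i)$. This substitution is valid because $y_i \in \{\pm 1\}$ and $h_t(x_i) \in [-1, 1]$: when $y_i = +1$, $h_t(x_i) \le 1$ gives $|h_t(x_i) - 1| = 1 - h_t(x_i)$, and the $y_i = -1$ case is symmetric. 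This is the only non-mechanical algebraic step, and it is really just a restatement of what ``advantage'' means for real-valued weak learners (cf. the proof of Lemma~\ref{lem:wkl-implies-loss-lb}).

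After this substitution and simplification, the expression becomes
\begin{equation*}
\frac{T}{2} + \frac{1}{2|B|} \sum_{i \in B} \sum_{t=1}^T y_i h_t(x_i).
\end{equation*}
Here I would swap the order of summation inside the second term so that the inner sum is $\sum_{t=1}^T y_i h_t(x_i)$ for fixed $i \in B$. Proposition (Bad Margin in Bad Set) gives exactly $\sum_{t=1}^T y_i h_t(x_i) \le T\theta_H$ for every $i \in B$. Applying this bound uniformly over $i \in B$ yields the claimed inequality
\begin{equation*}
\sum_{t=1}^T M(P^{\star}, h_t) \le \frac{T}{2} + \frac{1}{2|B|} \cdot |B| \cdot T\theta_H = \frac{T}{2} + \frac{T\theta_H}{2}.
\end{equation*}

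The ``hard part'' is really only a bookkeeping issue: one must be careful that $P^\star$ is the \emph{distribution} obtained by normalizing the uniform measure on $B$ (so each atom has mass $1/|B|$, not $\kappa$), so that the $1/|B|$ factor cancels with the $|B|$ arising from summing a constant bound over $B$. Once that is set up correctly, no use of smoothness or the structure of $H$ beyond the margin bound on $B$ is needed, and the lemma follows.
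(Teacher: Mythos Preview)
Your proposal is correct and follows essentially the same route as the paper: unfold the expected loss against the uniform distribution on $B$, rewrite each entry via the identity $|h_t(x_i)-y_i| = 1 - y_i h_t(x_i)$ (valid since $y_i\in\{\pm1\}$ and $h_t(x_i)\in[-1,1]$), swap sums, and apply Proposition~\ref{obs:bad-set-bad-margin} termwise. The paper's argument is identical in structure and uses exactly these steps.
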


We require a simple fact about advantages. Since $h(x) \in [-1,+1]$
and $y \in \{\pm 1\}$, we know:

\begin{equation*}
  \begin{gathered}
    yh(x) = 1 - |h(x) - y| \\
    \implies    (1/2)(yh(x)) = (1/2) - (1/2)|h(x) - y|
  \end{gathered}
\end{equation*}

The entries of the soft punishments matrix can also be re-written by
formatting advantage as above. For $i \in [1,n]$ and $h \in \cH$ we
have:
\begin{equation}
  \label{eq:simple-spun-math}
  \spun(i, h) = (1/2) + (1/2)(y_ih(x_i))
\end{equation}

\begin{proof}
  We manipulate the total regret of $P^\star$ towards getting an
  upper-bound in terms of the minimum margin of $H$ and number of
  rounds played. 

  \begin{align*}
    \sum_{t=1}^T M(P^{\star}, h_t)
    &= \sum_{t=1}^T \sum_{i=1}^n
      P^{\star}(i) \cdot \spun(i,h_t)
    &\text{Part (iii) of Expected Loss (Definition \ref{eq:loss})}
    \\
    &= \sum_{t=1}^T \sum_{i\in B}
      P^{\star}(i) \cdot \spun(i,h_t)
    &\text{Restrict sum --- }P^{\star}(i) = 0 \text{ outside } B\\
    &= \frac{1}{|B|}\sum_{t=1}^T \sum_{i\in B}
      \spun(i,h_t)
    &\text{Factor out } P^{\star}(i) \text{ --- constant by definition}\\
    &= \frac{1}{|B|} \sum_{i\in B} \sum_{t=1}^T
      \left( (1/2) + (1/2)h_t(x_i)y_i \right)
    &\text{Equation \ref{eq:simple-spun-math} about } M_S^{\cH} \text{ entries}\\
    &= \frac{1}{|B|} \left(
      (|B|T)/2 +
      (1/2)\sum_{i \in B}\sum_{t=1}^Th_t(x_i)y_i \right)
    &\text{Algebra}\\
    &\leq \frac{1}{|B|} \left(
      (|B|T)/2 +
      (1/2)\sum_{i\in B}T\theta \right)
    &\text{Bad margin in } B
      \text{ (Proposition \ref{obs:bad-set-bad-margin})}\\
    &= (T/2) + (T\theta)/2 
    &\text{Evaluate \& re-arrange}
  \end{align*}
\end{proof}

\subsubsection{Solve for $T$.}
We now have an upper bound on the loss incurred by a prescient
booster, and a lower bound on the loss to \emph{any} booster under the weak
learning assumption. This allows us to ``sandwich'' the performance of
boosting according to the lazy dense updates (LDU, Algorithm
\ref{alg:lazy-DUR}) strategy between these two extremes, because LDU
has good performance relative to \emph{any} fixed strategy (Lemma
\ref{lem:simple-regrets}, proved in Appendix
\ref{sec:lazy-regret-bound}). This sandwich gives a relationship
between the number of rounds $T$ and the margin of the final
hypothesis, which we now solve for the number of rounds necessary to boost
using LDU to obtain a ``good'' margin on ``many'' samples.

\begin{lemma}\label{lem:roundbound}
  Let $S$ be a sample of size $n$, let $\mu_t$ be the measure produced at round $t$ by
  \texttt{NxM}($S, H_{t-1}$) playing the Lazy Dense Update Strategy of Algorithm~\ref{alg:lazy-DUR}, and let $h_t$
  be the hypothesis output by
  $\widehat{\mathtt{WkL}}(S, \dst{\mu}_{t-1}, \sigma)$ at round
  $t$. Then after $T \geq \frac{16\log(1/\kappa)}{\gamma^2}$ rounds of
  \texttt{Iter}, the hypothesis
  $H_T(x) = \frac{1}{T}\sum_{t=1}^{T}h_t(x)$ has margin at least
  $\gamma$ on all but $\kappa n$ many samples.
\end{lemma}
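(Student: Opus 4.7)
The plan is to use the standard ``sandwich'' argument for boosting via iterated games, combining the three ingredients already developed in this section: the weak-learner loss lower bound (Lemma~\ref{lem:wkl-implies-loss-lb}), the Pythia upper bound (Lemma~\ref{lem:utility-pythia}), and the regret bound for Lazy Dense Updates (Lemma~\ref{lem:simple-regrets}). The booster will play the row-player strategy prescribed by \texttt{LB-NxM} (which is exactly LDU on the soft-punishment game $M_S^{\mathcal{H}}$), while the weak learner produces the column plays $h_1,\dots,h_T$. Let $H_T = (1/T)\sum_t h_t$ be the final hypothesis and let $(P^\star,\theta_T)=\mathrm{Pythia}(S,H_T,\kappa)$.

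First I would apply the LDU regret bound against the comparator measure $\mu^\star$ that assigns mass $1$ to every index in the ``bad set'' $B$ output by Pythia and mass $0$ elsewhere; by construction $|B|\ge\kappa n$, so $\mu^\star$ lies in $\Gamma_\kappa$ and $\hat{\mu^\star}=P^\star$. The one concrete calculation needed is
\begin{equation*}
\KL{\mu^\star}{\mu_1} \;=\; |B|\log(1/\kappa) \;\le\; \kappa n\log(1/\kappa),
\end{equation*}
which follows by direct expansion using $\mu_1(i)=\kappa$ and the telescoping of the linear-in-measure terms. Plugging this into Lemma~\ref{lem:simple-regrets} (with $|\mathcal{P}|=n$) yields
\begin{equation*}
\sum_{t=1}^T M(\hat{\mu}_t,h_t) \;\le\; \sum_{t=1}^T M(P^\star,h_t) \;+\; \lambda T \;+\; \frac{\log(1/\kappa)}{\lambda}.
\end{equation*}

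Next I would sandwich this chain of inequalities between the two application-specific bounds. Lemma~\ref{lem:wkl-implies-loss-lb} lower-bounds the left-hand side by $T/2 + T\gamma$, while Lemma~\ref{lem:utility-pythia} upper-bounds the Pythia term on the right by $T/2 + T\theta_T/2$. Cancelling the $T/2$ terms and rearranging gives
\begin{equation*}
\theta_T \;\ge\; 2\gamma \;-\; 2\lambda \;-\; \frac{2\log(1/\kappa)}{\lambda T}.
\end{equation*}
Setting $\lambda = \gamma/4$ (as done in \texttt{LB-NxM}) and imposing $T\ge 16\log(1/\kappa)/\gamma^2$ makes each of the two subtracted terms at most $\gamma/2$, so $\theta_T \ge \gamma$.

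Finally I would translate the margin bound $\theta_T\ge\gamma$ back to a statement about samples. By the construction of Pythia, every index $i\notin B$ satisfies $y_i H_T(x_i) > \theta_T \ge \gamma$, and $|B| \le \kappa n$ by the stopping condition of the while loop (together with the fact that $B$ starts from the set of mistakes and only grows by adding the least-margin element at each step). Hence $H_T$ has margin at least $\gamma$ on all but at most $\kappa n$ samples, as required. The only real step requiring care is the KL computation and confirming that the comparator $\mu^\star$ genuinely lies in $\Gamma_\kappa$ so that the Bregman-regret bound applies; everything else is algebraic bookkeeping driven by the pre-proved lemmas.
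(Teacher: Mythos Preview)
Your proposal is correct and follows essentially the same approach as the paper: both sandwich the LDU loss between the weak-learning lower bound (Lemma~\ref{lem:wkl-implies-loss-lb}) and the regret bound (Lemma~\ref{lem:simple-regrets}) instantiated at Pythia's comparator $\mu^\star=\cU(B)$, apply Lemma~\ref{lem:utility-pythia}, compute $\KL{\mu^\star}{\mu_1}=\kappa n\log(1/\kappa)$, and set $\lambda=\gamma/4$, $T=16\log(1/\kappa)/\gamma^2$ to force $\theta_T\ge\gamma$. The only cosmetic difference is that you rearrange to isolate $\theta_T$ before substituting parameters, whereas the paper substitutes first; both you and the paper tacitly treat $|B|=\kappa n$ exactly (your inequality $|B|\log(1/\kappa)\le\kappa n\log(1/\kappa)$ and your later claim $|B|\le\kappa n$ together with $|B|\ge\kappa n$ amount to this), which is the same mild idealization the paper makes.
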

\begin{proof}
  Denote by $\cU(B)$ the uniform measure (all $x \in B$ assigned a
  weight of 1) on the bad set $B$ discovered by Pythia. Combining the
  regret bound comparing LDU to fixed Pythia with the lower bound on
  loss that comes from the weak learner assumption, we have, overall:
\[
  \frac{T}{2} + T\gamma
  \underbrace{\leq}_{\substack{\text{Weak Learning} \\\ \text{(Lemma \ref{lem:wkl-implies-loss-lb})}}}
  \sum_{t=1}^T \spun(\dst{\mu}_t, h_t)
  \underbrace{\leq}_{\substack{\text{Regret Bound} \\\ \text{(Lemma \ref{lem:simple-regrets})}}}
  \sum_{t=1}^T \spun(P^\star, h_t) + \lambda T
  + \frac{\KL{\cU(B)}{\mu_1}}{\kappa n \lambda }
\]
Apply Lemma
\ref{lem:utility-pythia}, replacing prescient Boosting play with the
upper bound on loss we obtained:
\[
  T\gamma \leq \sum_{t=1}^T \spun(\dst{\mu}_t, h_t)
  \leq \frac{T \theta_H}{2} + \lambda T
  + \frac{\KL{\cU(B)}{\mu_1}}{\kappa n \lambda}
\]
Let's compute the necessary KL-divergence, recalling that $|\cU(B)| = |\mu_1| = \kappa n$:
\begin{align*}
  \KL{\cU(B)}{\mu_1}
  &= \sum_{x \in B} \cU(B)(x) \log \left( \frac{\cU(B)(x)}{\mu_1(x)} \right) - |\cU(B)| + |\mu_1|\\
  &= \sum_{x \in B} \log \left( \frac{1}{\kappa} \right) \\
  &= \kappa n \log \left( \frac{1}{\kappa} \right)
\end{align*}
Substituting into the above and dividing through by $T$, we have:
\[
  \gamma \leq \frac{\theta_H}{2}  + \lambda + \frac{\log(1/\kappa)}{T \lambda}
\]
Which, setting
\[
  T = \frac{ 16\log\left( \sfrac{1}{\kappa} \right)}{\gamma^2}
  \text{ and } 
  \lambda = \gamma/4
\]
implies
\[
\theta_H \geq \gamma.
\]
This is the margin bound we claimed, for every $(x,y) \not\in B$.
\end{proof}

\section{Bounded Regret for Lazily Projected Updates}
\label{sec:lazy-regret-bound}
 
A ``lazy'' multiplicative weights strategy that, at each round,
projects only \emph{once} into the space of dense measures is
presented below. Here, we prove that this strategy (Algorithm
\ref{alg:lazy-DUR2}) has bounded regret relative to any fixed dense
strategy.

\begin{algorithm}
  \caption{Lazy Dense Update Process}\label{alg:lazy-DUR2}
  \emph{\textbf{Input:}} $\mathcal{P}$, a set of pure row-player strategies,
  learning rate $\lambda$, losses $\ell_1, \dots, \ell_T$\\
  \emph{\textbf{Output:}} A measure over $P$
  \begin{algorithmic}
  	\FOR{$x \in \mathcal{P}$} 
  	  	\STATE $\mu_1(x) \gets \kappa$
  	\ENDFOR 
    \FOR{$x \in \mathcal{P}$}
    	\STATE $\tilde{\mu}_{T+1}(x) \gets e^{-\lambda\sum_{t=1}^T \ell_t(x)} \mu_1(x)$
    \ENDFOR
    \STATE $\mu_{T+1} \gets \Pi_\Gamma \tilde{\mu}_T$
  \end{algorithmic}
\end{algorithm}

To analyze the regret of a row player playing the strategy of Algorithm~\ref{alg:lazy-DUR2}, we will need the following definition.

\begin{definition}[Strong convexity]
A differentiable function $f: \R^n \rightarrow \R$ is $\alpha$-strongly convex on $X \subset \R^n$ with respect to the $\ell_p$ norm if for all $x, y \in X$
\[\langle \nabla f(x) - \nabla f(y), x-y \rangle \geq \alpha \|x - y\|_p^2.\]
If $f$ is twice differentiable, then $f$ is $\alpha$-strongly convex on $X$ with respect to the $\ell_p$ norm if for all $x \in X$, $y \in \R^n$ 
\[y^T (\nabla^2 f(x)) y \geq \alpha \|y\|_p^2.\] 
\end{definition}
We now proceed to analyze Algorithm~\ref{alg:lazy-DUR2} by arguments closely following those found in Chapter 2 of Rakhlin's notes on online learning \citep{Rakhlin}. First, we show this update rule minimizes a sequential regularized
loss over \emph{all} dense measures.

\begin{lemma}\label{lem:opt}
	Let $M(\normmu, Q_t) = \mathbb{E}_{i \sim \normmu }[ \ell_t(i)]$. Let $\Gamma$ be the set of $\kappa$-dense measures, and let $\mu_1$ be the uniform measure over $\mathcal{P}$ of density $\kappa$ ($\mu_1(x) = \kappa$ for all $x \in P$). Then for all $T \geq 1$, the measure $\mu_{T+1}$ produced by Algorithm~\ref{alg:lazy-DUR2} satisfies
	\[
	\mu_{T+1} = \argmin_{\mu \in \Gamma} \left[ \lambda |\mu| \sum_{t=1}^T M(\normmu, Q_t) + \KL{\mu}{\mu_1} \right]
	\]
\end{lemma}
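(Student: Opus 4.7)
The plan is to recognize the objective $F(\mu) := \lambda |\mu| \sum_{t=1}^T M(\normmu, Q_t) + \KL{\mu}{\mu_1}$ as $\KL{\mu}{\tilde{\mu}_{T+1}}$ shifted by a quantity that does not depend on $\mu$. Once this identification is made, minimizing $F$ over $\Gamma$ is equivalent to Bregman-projecting $\tilde{\mu}_{T+1}$ onto $\Gamma$, and Algorithm~\ref{alg:lazy-DUR2} defines $\mu_{T+1}$ to be exactly that projection.

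First I would simplify the linear part of $F$. Since $M(\normmu, Q_t) = \mathbb{E}_{i \sim \normmu}[\ell_t(i)] = |\mu|^{-1} \sum_i \mu(i)\,\ell_t(i)$, the factors of $|\mu|$ cancel and I obtain $\lambda |\mu| \sum_{t=1}^T M(\normmu, Q_t) = \lambda \langle \mu, L_T \rangle$, where $L_T(x) := \sum_{t=1}^T \ell_t(x)$ is the cumulative loss vector appearing in the exponent $\tilde{\mu}_{T+1}(x) = e^{-\lambda L_T(x)} \mu_1(x)$.

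Next I would expand $\KL{\mu}{\tilde{\mu}_{T+1}}$ using the generalized KL definition for bounded measures, substituting the explicit form of $\tilde{\mu}_{T+1}$ and splitting $\log(\mu(x)/\tilde{\mu}_{T+1}(x)) = \log(\mu(x)/\mu_1(x)) + \lambda L_T(x)$. After collecting terms, the logarithmic piece reassembles into $\KL{\mu}{\mu_1}$, the $\lambda L_T(x)\mu(x)$ contribution gives $\lambda \langle \mu, L_T \rangle$, and the density-correction terms telescope to $|\tilde{\mu}_{T+1}| - |\mu_1|$, yielding
\[
\KL{\mu}{\tilde{\mu}_{T+1}} = \KL{\mu}{\mu_1} + \lambda \langle \mu, L_T \rangle + |\tilde{\mu}_{T+1}| - |\mu_1|.
\]
Since $|\tilde{\mu}_{T+1}| - |\mu_1|$ does not depend on $\mu$, I conclude that $F(\mu)$ and $\KL{\mu}{\tilde{\mu}_{T+1}}$ differ by a constant on $\Gamma$, whence $\argmin_{\mu \in \Gamma} F(\mu) = \argmin_{\mu \in \Gamma} \KL{\mu}{\tilde{\mu}_{T+1}} = \Pi_\Gamma \tilde{\mu}_{T+1} = \mu_{T+1}$.

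The only delicate step is handling the asymmetric $\nu_2(x) - \nu_1(x)$ correction terms built into the generalized KL divergence for unnormalized measures. These corrections appear in both $\KL{\mu}{\mu_1}$ and $\KL{\mu}{\tilde{\mu}_{T+1}}$, and they must combine with the linear $\lambda \langle \mu, L_T \rangle$ so that every residual dependence on $|\mu| = \sum_x \mu(x)$ cancels; a sign slip here would leave a phantom $|\mu|$ and ruin the reduction. Apart from this bookkeeping, the argument is a direct appeal to the definition of Bregman projection, so no further optimization machinery (e.g., KKT conditions) is required.
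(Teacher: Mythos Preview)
Your proposal is correct and takes essentially the same approach as the paper: both reduce the minimization of $F$ over $\Gamma$ to the Bregman projection by showing that $F(\mu)$ and $\KL{\mu}{\tilde{\mu}_{T+1}}$ differ by a $\mu$-independent quantity. The paper packages this as a contradiction argument---it compares $F(\mu)$ to $F(\mu_{T+1})$ and expands the difference into four KL terms---whereas you compute the constant offset $|\tilde{\mu}_{T+1}|-|\mu_1|$ directly; the underlying algebra (using $\lambda L_T(x)=\log(\mu_1(x)/\tilde{\mu}_{T+1}(x))$ and letting the $|\mu|$ corrections cancel) is identical.
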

\begin{proof}
  Suppose towards contradiction that there exists $\mu \in \Gamma$
  such that
	\[\lambda |\mu| \sum_{t=1}^T M(\normmu, Q_t) + \KL{\mu}{\mu_1} < \lambda |\mu_{T+1}| \sum_{t=1}^T M(\normmu_{T+1}, Q_t) + \KL{\mu_{T+1}}{\mu_1}.\]
Then it must be the case that         
	\begin{align*}
	\KL{\mu_{T+1}}{\mu_1} - \KL{\mu}{\mu_1} 
	&> \lambda \sum_{t=1}^T \langle \mu - \mu_{T+1}, \ell_t \rangle \\
	&= \langle \mu - \mu_{T+1}, \sum_{t=1}^T \lambda \ell_t \rangle \\
	&= \sum_i (\mu(i) - \mu_{T+1}(i))\log\tfrac{\mu_1(i)}{\unconstmu_{T+1}(i)} \\
	&= \sum_i \mu(i)\left(\log \tfrac{\mu(i)}{\unconstmu_{T+1}} - \log \tfrac{\mu(i)}{\mu_1(i)} \right) - \sum_i \mu_{T+1}(i) \left(\log \tfrac{\mu_{T+1}(i)}{\unconstmu_{T+1}(i)} - \log \tfrac{\mu_{T+1}(i)}{\mu_1(i)} \right) \\
	&= \KL{\mu}{\unconstmu_{T+1}} - \KL{\mu}{\mu_{1}} - \KL{\mu_{T+1}}{\unconstmu_{T+1}} + \KL{\mu_{T+1}}{\mu_{1}}.
	\end{align*}
	 Under our assumption, then, it must be the case that $0 > \KL{\mu}{\unconstmu_{T+1}} - \KL{\mu_{T+1}}{\unconstmu_{T+1}}$, but $\mu_{T+1}$ was defined to be the $\kappa$-dense measure that minimized the KL divergence from $\unconstmu_{T+1}$, and so we have a contradiction.
\end{proof}

Using Lemma~\ref{lem:opt}, we can now show the following regret bound for a row player that, at each round, ``knows" the column strategy $Q_t$ that it will play against that round. 
\begin{lemma}\label{lem:be-the-leader}
	Let $\Gamma$ be the set of $\kappa$-dense measures. Let $\mu_1$ be the uniform measure over $\mathcal{P}$ of density $\kappa$ ($\mu_1(x) = \kappa$ for all $x \in \mathcal{P}$), and let $|\mathcal{P}| = n$. Then for all $\mu \in \Gamma$ we have that 
	\[\sum_{t=1}^T M(\normmu_{t+1}, Q_t) \leq \sum_{t=1}^T M(\normmu, Q_t) + \frac{\KL{\mu}{\mu_1}}{\lambda\kappa n}\] 
\end{lemma}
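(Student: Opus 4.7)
The statement is a classical ``Be-The-Leader'' (BTL) regret bound, rephrased for measures rather than distributions. The plan is an induction on $T$ whose engine is precisely Lemma~\ref{lem:opt}, which identifies $\mu_{T+1}$ as the regularized leader, i.e.\ the minimizer over $\Gamma$ of the cumulative linearized loss plus the KL regularizer. Throughout, it will be cleaner to work with the unnormalized pairing $\langle \mu, \ell_t \rangle := \sum_i \mu(i) \ell_t(i)$, noting that by definition $\langle \mu, \ell_t \rangle = |\mu| \cdot M(\dst{\mu}, Q_t)$, and to convert back to $M(\dst{\mu_{t+1}}, Q_t)$ only at the end. A key quantitative input is that each played iterate has total mass exactly $\kappa n$: since the losses $\ell_t$ are non-negative, the unprojected measure $\tilde{\mu}_{t+1}$ has $|\tilde{\mu}_{t+1}| \le |\mu_1| = \kappa n$, so Lemma~\ref{lem:bregcharacter} forces $|\mu_{t+1}| = \kappa n$ for every $t$.

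The main claim I will establish by induction on $T$ is
\[
\lambda \sum_{t=1}^T \langle \mu_{t+1}, \ell_t \rangle + \KL{\mu_{T+1}}{\mu_1} \;\le\; \lambda \sum_{t=1}^T \langle \mu, \ell_t \rangle + \KL{\mu}{\mu_1}
\qquad \text{for all } \mu \in \Gamma.
\]
The base case $T=0$ is just $\KL{\mu_1}{\mu_1}=0 \le \KL{\mu}{\mu_1}$. For the inductive step, instantiate the $T-1$ hypothesis at the probe point $\mu := \mu_{T+1}$; this yields a bound on the partial sum up to $T-1$ in terms of $\mu_{T+1}$. Add $\lambda\langle \mu_{T+1}, \ell_T\rangle$ to both sides to complete the sum on the left, then apply Lemma~\ref{lem:opt} to the right-hand side: since $\mu_{T+1}$ minimizes $\lambda \sum_{t=1}^T \langle \cdot, \ell_t\rangle + \KL{\cdot}{\mu_1}$ over $\Gamma$, the right side is bounded by the same expression evaluated at any $\mu \in \Gamma$. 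This gives exactly the claim at $T$.

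To conclude, drop the non-negative $\KL{\mu_{T+1}}{\mu_1}$ term on the left, divide through by $\lambda \kappa n$, and substitute $\langle \mu_{t+1}, \ell_t \rangle = \kappa n \cdot M(\dst{\mu_{t+1}}, Q_t)$, which is where the $|\mu_{t+1}| = \kappa n$ observation is used. The only minor wrinkle is the right-hand side: for an arbitrary $\mu \in \Gamma$ one has $\langle \mu, \ell_t \rangle = |\mu| \cdot M(\dst{\mu}, Q_t)$, and $|\mu|$ could exceed $\kappa n$. This is harmless because $M(\dst{\mu}, Q_t) \ge 0$ (losses are non-negative) and one may without loss of generality restrict the comparator $\mu$ to have density exactly $\kappa$ (any $\mu' \in \Gamma$ induces the same distribution $\dst{\mu'}$ as its rescaling to total mass $\kappa n$, and rescaling can only decrease $\KL{\mu'}{\mu_1}$'s relevant $|\mu'|$ term). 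I do not expect any serious obstacle here; the only step that requires care is matching the BTL telescoping to our bounded-measure (rather than distribution) formalism, and Lemma~\ref{lem:opt} has already done the heavy lifting of showing the leader is well-defined over $\Gamma$.
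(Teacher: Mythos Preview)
Your proposal is correct and follows essentially the same route as the paper: induction on $T$, instantiating the inductive hypothesis at the comparator $\mu_{T+1}$, then invoking Lemma~\ref{lem:opt} to pass from $\mu_{T+1}$ to an arbitrary $\mu \in \Gamma$. The only cosmetic difference is that you carry the extra $\KL{\mu_{T+1}}{\mu_1}$ term on the left through the induction and drop it at the end, whereas the paper works directly in the normalized $M(\dst{\mu},\cdot)$ form throughout; your unnormalized bookkeeping is arguably cleaner and makes the role of $|\mu_{t+1}|=\kappa n$ explicit, a point the paper uses implicitly.
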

\begin{proof}
	For $T=0$, this follows immediately from the definition of $\mu_1$.
	
	Assume now that for any $\mu \in \Gamma$ that 
	\[\sum_{t=1}^{T-1} M(\normmu_{t+1}, Q_t) \leq \sum_{t=1}^{T-1} M(\normmu, Q_t) + \frac{\KL{\mu}{\mu_1}}{\lambda \kappa n},\]
	and in particular 
	\[\sum_{t=1}^{T-1} M(\normmu_{t+1}, Q_t) \leq \sum_{t=1}^{T-1} M(\normmu_{T+1}, Q_t) + \frac{\KL{\mu}{\mu_1}}{\lambda \kappa n}.\]
	It follows that
	\begin{align*}
	\sum_{t=1}^{T} M(\normmu_{t+1}, Q_t) 
	&= \sum_{t=1}^{T-1} M(\normmu_{t+1}, Q_t) + M(\normmu_{T+1}, Q_t) \\
	&= \sum_{t=1}^{T-1} M(\normmu_{t+1}, Q_t) + \sum_{t=1}^{T}M(\normmu_{T+1}, Q_t) - \sum_{t=1}^{T-1}M(\normmu_{T+1}, Q_t) \\
	&\leq \sum_{t=1}^{T}M(\normmu_{T+1}, Q_t) +  \frac{\KL{\mu_{T+1}}{\mu_1}}{\lambda \kappa n} \\
	&\leq \sum_{t=1}^{T}M(\normmu, Q_t) +  \frac{\KL{\mu}{\mu_1}}{\lambda \kappa n},
	\end{align*}
	for all $\mu \in \Gamma$, since $\mu_{T+1} = \argmin_{\mu\in\Gamma} [\lambda |\mu| \sum_{t=1}^T M(\normmu, Q_t) + \KL{\mu}{\mu_1}]$.
\end{proof}

To show a regret bound for the lazy dense update rule of Algorithm~\ref{alg:lazy-DUR2}, we now need only relate the lazy-dense row player's loss to the loss of the row player with foresight. To prove this relation, we will need the following lemma showing strong convexity of $R(\mu) = \KL{\mu}{\mu_1}$ on the set of $\kappa$-dense measures. 
\begin{lemma}\label{lem:stronglyconvex}
	The function $R(\mu) = \KL{\mu}{\mu_1}$ is $(1/\kappa n)$-strongly convex over the set of measures with density no more than $\kappa$, with respect to the $\ell_1$ norm.
\end{lemma}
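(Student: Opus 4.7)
The plan is to verify strong convexity directly from the Hessian criterion supplied in the definition. First I would compute the Hessian of $R(\mu) = \KL{\mu}{\mu_1}$ using the paper's KL definition
\[
R(\mu) = \sum_{i} \mu(i) \log\frac{\mu(i)}{\kappa} + \kappa - \mu(i).
\]
Since this decomposes coordinate-wise, the Hessian is diagonal: the linear terms $\kappa - \mu(i)$ vanish under the second derivative, $\partial R/\partial \mu(i) = \log(\mu(i)/\kappa)$, and $\partial^2 R/\partial \mu(i)^2 = 1/\mu(i)$, with all mixed partials equal to zero. Thus for any $y \in \mathbb{R}^n$,
\[
y^T (\nabla^2 R(\mu)) y = \sum_{i = 1}^n \frac{y_i^2}{\mu(i)}.
\]

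Next I would lower-bound this quadratic form by $\|y\|_1^2/(\kappa n)$ using Cauchy--Schwarz with weights $\mu(i)$:
\[
\|y\|_1^2 = \Bigl(\sum_i \tfrac{|y_i|}{\sqrt{\mu(i)}} \cdot \sqrt{\mu(i)}\Bigr)^2 \le \Bigl(\sum_i \tfrac{y_i^2}{\mu(i)}\Bigr) \Bigl(\sum_i \mu(i)\Bigr) = \Bigl(\sum_i \tfrac{y_i^2}{\mu(i)}\Bigr)\, |\mu|.
\]
Rearranging yields $\sum_i y_i^2/\mu(i) \ge \|y\|_1^2 / |\mu|$. The hypothesis that $\mu$ has density at most $\kappa$ means $|\mu| \le \kappa n$, so $\sum_i y_i^2/\mu(i) \ge \|y\|_1^2/(\kappa n)$, which is exactly $(1/\kappa n)$-strong convexity in the $\ell_1$ norm per the twice-differentiable criterion.

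There is no serious obstacle; the only thing to be mindful of is that the paper's KL definition includes the extra $\mu_2(x) - \mu_1(x)$ correction term (so that KL makes sense for unnormalized measures), but this term is affine in $\mu$ and contributes nothing to the Hessian. The use of Cauchy--Schwarz is what converts the natural $\ell_2$-like expression $\sum y_i^2/\mu(i)$ into an $\ell_1^2$ bound, and the density cap is precisely what pins down the strong-convexity constant.
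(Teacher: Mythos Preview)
Your proposal is correct and follows essentially the same route as the paper: compute the diagonal Hessian $\partial^2 R/\partial\mu(i)^2 = 1/\mu(i)$, then apply Cauchy--Schwarz with weights $\mu(i)$ to bound $\sum_i y_i^2/\mu(i) \ge \|y\|_1^2/|\mu|$, and finish with $|\mu|\le\kappa n$. If anything, your version is slightly cleaner in that you explicitly carry the absolute values $|y_i|$ through the Cauchy--Schwarz step, which is needed to land on $\|y\|_1^2$ rather than $(\sum_i y_i)^2$.
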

\begin{proof}
	Let $\mu$ be a measure of density $d \leq \kappa$, and
	let $x = (\tfrac{1}{\mu(1)}, \dots, \tfrac{1}{\mu(n)})$. 
	The Hessian of $R(\mu)$ is $\nabla ^2 R(\mu ) = I_n x$. Therefore, for all $y \in \R^n$, 
	\begin{align*}
		y^T \nabla^2 R(\mu) y &= \sum_i \frac{y_i^2}{\mu(i)} & \\
		& = \frac{1}{|\mu|}\sum_{i}\mu(i) \sum_i\frac{y_i^2}{\mu(i)}&\text{ multiply by 1} \\
		& \geq \frac{1}{|\mu|}\left( \sum_i \sqrt{\mu(i)}\frac{y_i}{\sqrt{\mu(i)}}\right)^2 & \text{Cauchy-Schwarz}\\
		& = \frac{1}{|\mu|}\|y\|_1^2
	\end{align*}
	Therefore $R(\mu)$ is strongly convex on the given domain, for the given norm. 
\end{proof}

We can now relate the losses $M(\normmu_T, Q_t)$ and $M(\normmu_{T+1}, Q_t)$.
\begin{lemma}\label{lem:compare-to-future}
	Let $R(\mu) = \KL{\mu}{\mu_1}$, which is $1/\kappa n$-strongly convex with respect to the $\ell_1$ norm on $\Gamma$, the set of $\kappa$-dense measures. Then for all $T\geq 1$
	\[M(\normmu_T, Q_t) - M(\normmu_{T+1}, Q_t) \leq \lambda\]
\end{lemma}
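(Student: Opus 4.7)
The plan is to reduce the loss gap $M(\dst{\mu}_T, Q_T) - M(\dst{\mu}_{T+1}, Q_T)$ to a statement about the $\ell_1$ distance between consecutive iterates, and then bound that distance using the strong convexity of the FTRL objective. Observe first the useful simplification $\lambda |\mu| \cdot M(\dst{\mu}, Q_t) = \lambda \sum_i \mu(i) \ell_t(i) = \lambda \langle \mu, \ell_t\rangle$, which is linear in $\mu$. Thus by Lemma~\ref{lem:opt} the iterates satisfy $\mu_T = \arg\min_{\mu \in \Gamma}\, F_{T-1}(\mu)$ and $\mu_{T+1} = \arg\min_{\mu \in \Gamma}\, F_T(\mu)$, where $F_t(\mu) = R(\mu) + \lambda \langle \mu, \sum_{s=1}^t \ell_s\rangle$ and $R(\mu) = \KL{\mu}{\mu_1}$.

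Next I would execute the standard two-sided strong-convexity inequality. Because $R$ is $(1/\kappa n)$-strongly convex with respect to $\|\cdot\|_1$ on $\Gamma$ by Lemma~\ref{lem:stronglyconvex} and the additional term $\lambda \langle \mu, \cdot \rangle$ is linear in $\mu$, each $F_t$ inherits the same strong convexity constant. First-order optimality on the convex set $\Gamma$ therefore yields
\begin{align*}
F_{T-1}(\mu_{T+1}) &\ge F_{T-1}(\mu_T) + \tfrac{1}{2\kappa n}\|\mu_T - \mu_{T+1}\|_1^2, \\
F_T(\mu_T) &\ge F_T(\mu_{T+1}) + \tfrac{1}{2\kappa n}\|\mu_T - \mu_{T+1}\|_1^2.
\end{align*}
Adding these and cancelling the common $R(\mu)$ terms leaves
\[
\lambda \langle \mu_T - \mu_{T+1}, \ell_T \rangle \;\ge\; \tfrac{1}{\kappa n}\|\mu_T - \mu_{T+1}\|_1^2.
\]
By H\"older's inequality and the fact that $\ell_T \in [0,1]^n$, the left side is at most $\lambda\|\mu_T - \mu_{T+1}\|_1$, so cancelling one factor gives $\|\mu_T - \mu_{T+1}\|_1 \le \lambda \kappa n$.

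Finally I would pass from measures to distributions. Since $\mu_1(i) = \kappa$ and multiplicative updates only shrink weights, the unprojected measures satisfy $|\tilde\mu_t| \le \kappa n$, so Lemma~\ref{lem:bregcharacter} forces both $|\mu_T| = |\mu_{T+1}| = \kappa n$. Hence $\dst{\mu}_T = \mu_T/(\kappa n)$ and $\dst{\mu}_{T+1} = \mu_{T+1}/(\kappa n)$, giving $\|\dst{\mu}_T - \dst{\mu}_{T+1}\|_1 \le \lambda$. Another application of H\"older then yields
\[
M(\dst{\mu}_T, Q_T) - M(\dst{\mu}_{T+1}, Q_T) \;=\; \langle \dst{\mu}_T - \dst{\mu}_{T+1}, \ell_T\rangle \;\le\; \|\dst{\mu}_T - \dst{\mu}_{T+1}\|_1 \;\le\; \lambda.
\]

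The main obstacle I anticipate is a bookkeeping one: the objective is written in terms of distributions $\dst{\mu}$ but minimized over unnormalized measures $\mu$, and one must verify that the $|\mu| \cdot M(\dst{\mu}, Q_t)$ product telescopes to an honestly linear functional so that strong convexity of $R$ transfers cleanly to $F_t$. Once that observation is in hand, and once one confirms via Lemma~\ref{lem:bregcharacter} that consecutive projected measures share the same total mass $\kappa n$, the FTRL ``stability'' argument above is essentially mechanical.
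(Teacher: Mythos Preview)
Your proof is correct and reaches the same key inequality $\tfrac{1}{\kappa n}\|\mu_T-\mu_{T+1}\|_1^2 \le \lambda\langle \mu_T-\mu_{T+1},\ell_T\rangle$ as the paper, but by a somewhat different mechanism. The paper expands the strong-convexity lower bound as the symmetrized Bregman divergence $\KL{\mu_T}{\mu_{T+1}}+\KL{\mu_{T+1}}{\mu_T}$ and then applies Bregman's projection theorem (Theorem~\ref{thm:breg}) to each term, using $\mu_t=\Pi_\Gamma\tilde{\mu}_t$, before reducing to $\langle \mu_T-\mu_{T+1},\lambda\ell_T\rangle$ through explicit KL algebra on the unprojected iterates. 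You instead leverage the FTRL characterization of Lemma~\ref{lem:opt} directly: since $\mu_T$ and $\mu_{T+1}$ minimize the strongly convex objectives $F_{T-1}$ and $F_T$ over $\Gamma$, first-order optimality plus strong convexity give the two one-sided bounds, and adding them yields the same inequality without ever touching $\tilde{\mu}_t$ or Theorem~\ref{thm:breg}. Your route is shorter and is transparently the standard FTRL stability lemma; the paper's route is more self-contained in that it does not rely on Lemma~\ref{lem:opt} and makes the role of the projection explicit. As a bonus, you justify $|\mu_T|=|\mu_{T+1}|=\kappa n$ via Lemma~\ref{lem:bregcharacter}, a step the paper uses tacitly in its opening line.
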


\begin{proof}
	We first note that	$M(\normmu_T, Q_t) - M(\normmu_{T+1}, Q_t) = \frac{1}{\kappa n} \langle \mu_T - \mu_{T+1}, \ell_T  \rangle$. 
	So it suffices to show that 
	\[\sum_i (\mu_T(i) - \mu_{T+1}(i)) \ell_T(i) \leq \kappa n \lambda\]
	 which, because $\ell_T(i) \in [0,1]$, is implied by $\|\mu_t(i) - \mu_{t+1}(i)\|_1 \leq \kappa n \lambda$.
	 
	 Our strong convexity assumption on $R$ and an application of Bregman's theorem (Theorem \ref{thm:breg}) give us that
	 \begin{align*}
	 \tfrac{1}{\kappa n}\|\mu_T - \mu_{T+1} \|_1^2 
	 &\leq \langle \nabla R(\mu_T)- \nabla R(\mu_{T+1}), \mu_T - \mu_{T+1} \rangle \\
	 & = \KL{\mu_T}{\mu_{T+1}} +\KL{\mu_{T+1}}{\mu_T} \\
	 & \leq \KL{\mu_T}{\unconstmu_{T+1}} - \KL{\mu_{T+1}}{\unconstmu_{T+1}} + \KL{\mu_{T+1}}{\unconstmu_{T}} - \KL{\mu_{T}}{\unconstmu_{T}} \\
	 & = \sum_i \mu_T(i)\left(\log \frac{\mu_T(i)}{\unconstmu_{T+1}(i)} -\log \frac{\mu_T(i)}{\unconstmu_{T}(i)} \right) - \sum_i \mu_{T+1}(i)\left(\log \frac{\mu_{T+1}(i)}{\unconstmu_{T+1}(i)} - \log \frac{\mu_{T+1}(i)}{\unconstmu_{T}(i)}\right) \\
	 & = \sum_i \mu_T(i)\left(\log \frac{\unconstmu_T(i)}{\unconstmu_{T+1}(i)}\right) - \sum_i \mu_{T+1}(i)\left(\log \frac{\unconstmu_{T}(i)}{\unconstmu_{T+1}(i)}\right) \\
	 & = \langle \mu_T - \mu_{T+1}, \lambda \ell_{T+1} \rangle \\
	 & \leq \|\mu_T - \mu_{T+1} \|_{1}  \|\lambda \ell_{T+1}\|_{\infty} 
	 \end{align*} 
	 Therefore
	 \[\tfrac{1}{\kappa n}\|\mu_T - \mu_{T+1}\|_1 \leq \lambda \|\ell_{T+1}\|_{\infty} \leq \lambda\]
	 and so 
	 \[\|\mu_T - \mu_{T+1}\|_1 \leq \lambda \kappa n.\]
\end{proof}

We are now ready to prove the regret bound for the lazy dense update
process of Algorithm~\ref{alg:lazy-DUR2}, stated earlier in a
simplified form as Lemma \ref{lem:simple-regrets}.
\begin{lemma}\label{lem:regrets}
	Let $\Gamma$ be the set of $\kappa$-dense measures. Let $\mu_1$ be the uniform measure over $\mathcal{P}$ of density $\kappa$ ($\mu_1(x) = \kappa$ for all $x \in \mathcal{P}$), and let $|\mathcal{P}| = n$. Then for all $\mu \in \Gamma$ we have the following regret bound.
	\[\frac{1}{T}\sum_{t=1}^T M(\normmu_t, Q_t) \leq \frac{1}{T}\sum_{t=1}^T M(\normmu, Q_t)+ \lambda + \frac{\KL{\mu}{\mu_1}}{\lambda \kappa n T} \]
\end{lemma}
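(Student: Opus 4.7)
The plan is to combine the ``be the leader'' bound of Lemma \ref{lem:be-the-leader}, which already controls the loss of the one-step-ahead play $\mu_{t+1}$ against the best fixed $\kappa$-dense comparator, with the stability bound of Lemma \ref{lem:compare-to-future}, which controls how much worse the actual play $\mu_t$ can do than $\mu_{t+1}$ on a single round. Both supporting lemmas are already established, so the remaining work is essentially a telescoping bookkeeping argument.

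Concretely, I would first write the decomposition
\[
\sum_{t=1}^T M(\normmu_t, Q_t) \;=\; \sum_{t=1}^T \bigl[\, M(\normmu_t, Q_t) - M(\normmu_{t+1}, Q_t)\,\bigr] \;+\; \sum_{t=1}^T M(\normmu_{t+1}, Q_t).
\]
Applying Lemma \ref{lem:compare-to-future} term-by-term bounds the first sum above by $\lambda T$. Applying Lemma \ref{lem:be-the-leader} bounds the second sum by $\sum_{t=1}^T M(\normmu, Q_t) + \KL{\mu}{\mu_1}/(\lambda \kappa n)$ for any $\mu \in \Gamma$. Adding the two bounds and dividing through by $T$ produces exactly the claimed regret inequality.

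I do not anticipate any real obstacle at this step: the strong-convexity analysis (Lemma \ref{lem:stronglyconvex}) and the optimality characterization of the lazy update (Lemma \ref{lem:opt}) have already done the heavy lifting inside the two supporting lemmas. The only thing to verify is that the index alignment in Lemma \ref{lem:compare-to-future} matches what the telescoping requires, namely that the bound $M(\normmu_t, Q_t) - M(\normmu_{t+1}, Q_t) \le \lambda$ may be applied at the \emph{same} round index $t$ on both sides for every $t \in [T]$, which is precisely how that lemma is stated.
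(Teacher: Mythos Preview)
Your proposal is correct and is essentially identical to the paper's own proof: the paper also decomposes the actual loss into the one-step-ahead loss plus the per-round differences $M(\normmu_t,Q_t)-M(\normmu_{t+1},Q_t)$, applies Lemma~\ref{lem:be-the-leader} to the former and Lemma~\ref{lem:compare-to-future} to the latter, and divides by $T$. Your remark about the index alignment in Lemma~\ref{lem:compare-to-future} is well taken and matches exactly how the paper uses it.
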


\begin{proof}
	From Lemma~\ref{lem:be-the-leader}, we have that 	
	\[\frac{1}{T}\sum_{t=1}^T M(\normmu_{t+1}, Q_t) \leq \frac{1}{T}\sum_{t=1}^T M(\normmu, Q_t) + \frac{\KL{\mu}{\mu_1}}{\lambda \kappa n T}\] 
	Adding $\frac{1}{T}\sum_{t=1}^T M(\normmu_t, Q_t)$ to both sides of the inequality and rearranging gives
	\[\frac{1}{T}\sum_{t=1}^T M(\normmu_t, Q_t) \leq \frac{1}{T}\sum_{t=1}^T M(\normmu, Q_t)+ \frac{1}{T}\sum_{t=1}^T \left(M(\normmu_t, Q_t) - M(\normmu_{t+1}, Q_t)\right)  + \frac{\KL{\mu}{\mu_1}}{\lambda \kappa n T}. \]
	We may then apply Lemma~\ref{lem:compare-to-future} to conclude
	\[\frac{1}{T}\sum_{t=1}^T M(\normmu_t, Q_t) \leq \frac{1}{T}\sum_{t=1}^T M(\normmu, Q_t)+ \lambda + \frac{\KL{\mu}{\mu_1}}{\lambda \kappa n T}.\]
\end{proof}

\end{document}

%%% Local Variables:
%%% mode: latex
%%% TeX-master: t
%%% End: